\PassOptionsToPackage{numbers,sort&compress}{natbib}
\documentclass{article}

\usepackage[main, final]{neurips_2026}

\usepackage[utf8]{inputenc} 
\usepackage[T1]{fontenc}    
\usepackage{hyperref}       
\usepackage{url}            
\usepackage{booktabs}       
\usepackage{amsfonts}       
\usepackage{nicefrac}       
\usepackage{microtype}      
\usepackage{xcolor}         
\usepackage{amsmath}
\usepackage{amsthm}
\usepackage{graphicx}
\usepackage{graphicx}
\usepackage{subcaption}
\usepackage{amsthm}
\usepackage{thmtools}
\usepackage{thm-restate}
\usepackage{algorithm}
\usepackage{algpseudocode}
\usepackage{titletoc}
\declaretheorem[name=Proposition]{proposition}

\title{Specificity-Aware Diffusion Steering via Variance-Reduced Sequential Monte Carlo}

\author{%
  Luran Wang\thanks{Correspondence to \texttt{luran@mit.edu}}, Linrui Ma, Hannes St{\"a}rk, Regina Barzilay \\
  CSAIL, Massachusetts Institute of Technology\\
}

\begin{document}

\maketitle
\begin{abstract}
Inference-time steering enables pretrained diffusion models to satisfy new constraints without full retraining. However, specificity-aware generation is difficult: repelling samples from a negative reference distribution can also erode the positive distribution where the two overlap. The key challenge is to suppress negative mass while minimally distorting the positive distribution.
We address this problem by formulating specificity-aware steering as a target-design problem and deriving a target distribution from an overlap-based objective. The resulting target keeps the desired reference distribution only in regions where it is sufficiently preferred over the undesired reference distribution, giving a likelihood-ratio interpretation of specificity. To sample from the corresponding time-dependent target path, we develop a Sequential Monte Carlo sampler with a variance-minimized local proposal. We further introduce a practical fixed-noise optimization procedure with the Jacobian--vector products with the desired and undesired score fields. Experiments on synthetic task, class-contrastive generation, text-to-image tasks and peptide-MHC (p-MHC) binder show that the proposed method suppresses undesired regions more effectively, reduces mode shift, and improves sampling stability by decreasing the SMC weight collapse compared with negative-guidance baselines. Code is available at: \url{https://github.com/WangLuran/Specificity-Aware-Diffusion-Steering}
\end{abstract}
\section{Introduction}

Diffusion and score-based models have become standard tools for
high-dimensional generation
\citep{ho2020denoising,song2021score}. One important advantage of these
models is that their sampling process can be steered at inference time,
without retraining the underlying models \citep{uehara2025inference}\citep{liu2023more}. This makes them suitable
for settings where generation must satisfy both positive and negative
constraints: samples should remain close to a desired distribution
while avoiding regions associated with an undesired reference distribution.\citep{kim2025comprehensive}

Existing steering methods, including classifier guidance, classifier-free
guidance, negative prompting, and safe-generation guidance, modify the
denoising direction or score estimate to promote or suppress selected
attributes
\citep{dhariwal2021diffusion}\citep{ho2022classifierfree,schramowski2023safe}\citep{
armandpour2023perpneg}\citep{koulischer2024dynamic}. However, these methods usually
define a steering rule rather than an explicit target distribution. As a
result, when the desired and undesired densities overlap, repulsive guidance
can suppress valid modes and induce mode shift.\citep{armandpour2023reimagine}\citep{dinh2023rethinking}\citep{lee2024contrastivecfg}\citep{chung2025cfgpp}

To address this issue, we formulate specificity-aware diffusion steering. We
introduce a specificity-aware target distribution that preserves the desired
density only in regions where it is sufficiently preferred over the undesired
density.  To sample from the resulting time-dependent target path, we use Sequential
Monte Carlo (SMC). SMC provides a distributionally principled mechanism for
diffusion steering through propagation and reweighting
\citep{doucet2001sequential}\citep{delmoral2006sequential}\citep{skreta2025feynman}.
However, its stability depends on the variance of incremental weights\citep{naesseth2019elements}. We
therefore derive a local proposal rule using forward--backward Gaussian
kernel ratios, following the density-ratio view of guided diffusion
\citep{he2025rne}. The backward drift is chosen to cancel the leading fluctuation in the log-weight, and the remaining leading-order terms define
a tractable variance objective for adapting the proposal.

Our contributions are threefold. First, we formulate negative guidance as
sampling from a specificity-aware target distribution derived from an
overlap-based objective, yielding a principled likelihood-ratio gate that
preserves positive mass while suppressing negative overlap. Second, we develop
a proposal-aware SMC sampler with a gate-augmented forward kernel and locally
optimized backward kernel to reduce diffusion-path weight degeneracy. Third,
we introduce a fixed-noise optimization procedure using retained-order 
weight surrogates and Jacobian--vector products to optimize ESS
without retraining the base model. We validate the method on synthetic dataset, class-contrastive generation, and Stable Diffusion text-to-image
benchmarks with both related and unrelated negative prompts as well as the peptide-MHC (p-MHC) binder design. Across these
settings, our method reduces mode shift, improves sampling stability, and
better removes negative concepts than the baselines while preserving positive-prompt quality.

\section{Related Work}

\paragraph{Diffusion guidance and negative guidance.}
Diffusion and score-based generative models can be steered at inference time by
modifying their reverse-time score, denoising direction, or velocity field
without retraining the base model
\citep{sohl2015deep,ho2020denoising,song2021score}.
Such guidance can use classifiers, rewards, or compositions of learned
score and flow fields
\citep{dhariwal2021diffusion,ho2022classifierfree,liu2022compositional,
lipman2022flow,song2020denoising,karras2022elucidating}.
Negative guidance further suppresses undesired concepts during sampling
\citep{schramowski2023safe,armandpour2023perpneg,kim2025training,
koulischer2024dynamic}.
However, these methods typically define the resulting distribution only
implicitly, and repulsive updates can remove valid modes when desired and
undesired concepts overlap.
In contrast, we define an explicit specificity-aware target that preserves
probability mass favored by the desired reference while penalizing regions
more characteristic of the undesired reference, and derive a sampler for its
diffusion path.

\paragraph{Feynman--Kac correction and density-ratio steering.}
Feynman--Kac and Sequential Monte Carlo (SMC) methods approximate target
distributions through propagation, reweighting, and resampling
\citep{doucet2001sequential,delmoral2006sequential}.
Recent work applies these ideas to diffusion guidance and model composition
\citep{skreta2025feynman}, as well as test-time sampling from
reward-modified targets \citep{kim2025testtime}.
Density-ratio approaches instead characterize diffusion control through local
transition-kernel ratios, avoiding explicit estimation of higher-order PDE
quantities \citep{he2025rne}.
We adopt this forward--backward kernel-ratio view for proposal design and use
local kernel ratios to reduce the variance of incremental SMC weights.

\paragraph{SMC stability and proposal optimization.}
A central difficulty in SMC is weight degeneracy, which reduces effective
sample size and can destabilize guided sampling
\citep{doucet2001sequential,delmoral2006sequential}.
Prior work has improved SMC proposals through proposal tuning, importance
weight optimization, path-space objectives, optimal-control formulations,
annealing schedules, prior shifts, and timestep adaptation
\citep{kim2025tuning,ou2026inferencetime,baradat2020minimizing,
berner2024optimal,neal2001annealed,huang2026art,yoon2026psi,
ren2026driftlite}.
These methods often fix part of the underlying dynamics, optimize within a
restricted proposal family, or focus primarily on schedule adaptation.
Our approach instead jointly analyzes the local forward and backward Gaussian
kernels to construct low-variance proposals.

\paragraph{pMHC-specific binder design.}
Peptide--MHC (pMHC) targeting is a challenging specificity problem because
different targets largely share the same MHC scaffold, while specificity is
often determined by only a few exposed peptide residues.
Earlier TCR-mimic antibody design preserved MHC interactions while redesigning
peptide-contacting residues \citep{yang2023facile}.
More recent generative approaches have designed peptide-focused pMHC binders
using explicit off-target filtering, in silico cross-panning, and large-scale
off-target screening
\citep{liu2025design,johansen2025novo,householder2025novo}.
These approaches are closely related to classical negative design, which
favors the desired interaction while penalizing competing states
\citep{havranek2003automated}.
Our work instead studies pMHC specificity through inference-time guidance of a
pretrained binder generative model.

\begin{figure}[ht]
    \centering

    \begin{subfigure}{0.8\linewidth}
        \centering
        \includegraphics[width=\linewidth]{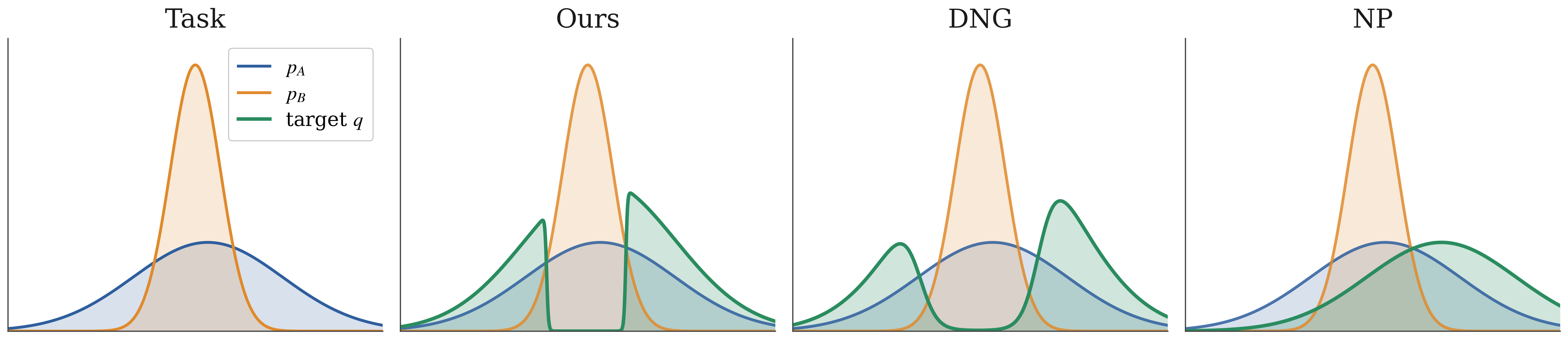}
        \label{fig:first}
    \end{subfigure}

    \vspace{-0.4cm}

    \begin{subfigure}{0.8\linewidth}
        \centering
        \includegraphics[width=\linewidth]{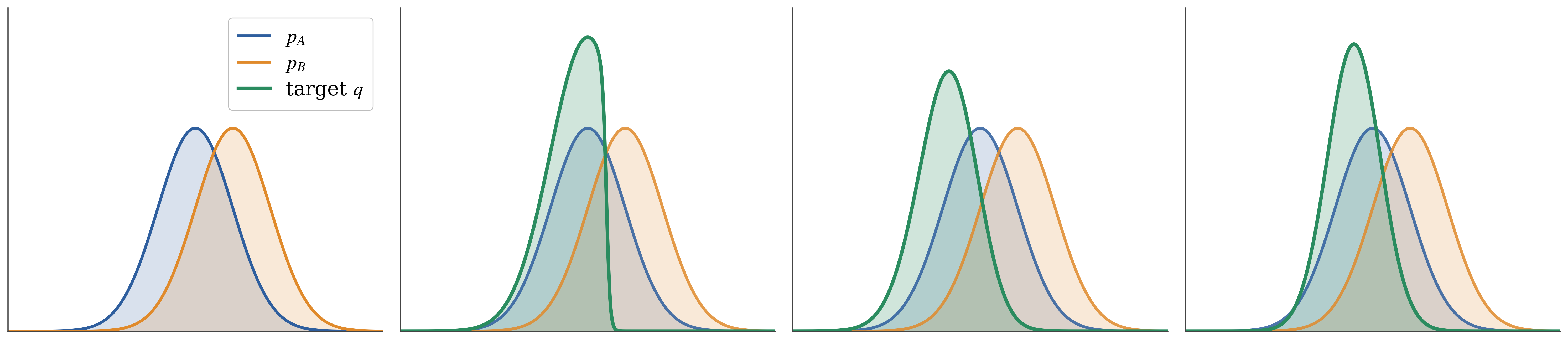}
        \label{fig:second}
    \end{subfigure}

    \caption{
Comparison of target distributions in two one-dimensional examples. 
Blue and orange denote the desired density \(p^A\) and undesired density \(p^B\), respectively; green denotes the induced target. 
The proposed target preserves regions where \(p^A\) dominates \(p^B\) and removes regions with high undesired density. 
By contrast, DNG can shift mass away from desired modes, while the negative-prompt (NP) construction does not consistently suppress the undesired region.
}
    \label{fig:two_vertical_subfigures}
\end{figure}

\section{Target Distribution Design}

Let \(p^A\) and \(p^B\) denote the desired and undesired reference
densities, respectively. We seek a target that retains regions likely under
\(p^A\) while suppressing regions supported by \(p^B\). For two densities
\(q\) and \(p\), define
\begin{equation}
    \mathrm{OVL}(q,p)
    :=
    \int \min\{q(x),p(x)\}\,dx .
\end{equation}
We consider the specificity objective
\begin{equation}
    J(q)
    :=
    \mathrm{OVL}(q,p^A)
    -
    \gamma\,\mathrm{OVL}(q,p^B),
    \qquad
    \gamma > 1.
\end{equation}
When \(J(q)\) has multiple maximizers, we select the one closest to \(p^A\)
in KL divergence.

\begin{restatable}[Optimal target and its soft relaxation]{proposition}{propone}
\label{prop:1}
Let
\[
    \mathcal A_\gamma
    :=
    \left\{
        x:p^A(x)>\gamma p^B(x)
    \right\}.
\]
Suppose that \(\gamma>1\), \(p^A(\mathcal A_\gamma)>0\), and the boundary
\(\{x:p^A(x)=\gamma p^B(x)\}\) has zero \(p^A\)-mass. Among all normalized
densities maximizing \(J(q)\), the unique minimizer of
\(D_{\mathrm{KL}}(q\|p^A)\) is
\begin{equation}
\label{target_1}
    q^*(x)
    =
    \frac{
        p^A(x)\mathbf 1\{x\in\mathcal A_\gamma\}
    }{
        \displaystyle
        \int_{\mathcal A_\gamma}p^A(u)\,du
    }.
\end{equation}

For the time-dependent marginals \(p_t^A\) and \(p_t^B\), define
\begin{equation}
    s_t(x)
    :=
    \log\frac{p_t^A(x)}{p_t^B(x)}
    -
    \log\gamma .
\end{equation}
Under pointwise Bernoulli-entropy regularization with inverse temperature
\(\eta_t>0\), the hard gate \(\mathbf 1\{s_t(x)>0\}\) is relaxed to
\begin{equation}
\label{eq:soft-gate}
\begin{aligned}
    1-\theta_t(x)
    &:=
    \sigma\!\left(\eta_t s_t(x)\right) \\
    &=
    \frac{
        \bigl(p_t^A(x)\bigr)^{\eta_t}
    }{
        \gamma^{\eta_t}\bigl(p_t^B(x)\bigr)^{\eta_t}
        +
        \bigl(p_t^A(x)\bigr)^{\eta_t}
    }.
\end{aligned}
\end{equation}
For any \(m_t>0\), the corresponding powered soft target is
\begin{equation}
\label{eq:powered-soft-target}
    q_t^{\eta,m}(x)
    =
    \frac{
        p_t^A(x)\bigl(1-\theta_t(x)\bigr)^{m_t}
    }{
        \displaystyle
        \int
        p_t^A(u)\bigl(1-\theta_t(u)\bigr)^{m_t}\,du
    }.
\end{equation}
For every fixed \(m>0\), \(q_t^{\eta,m}\) converges to the corresponding
hard target as \(\eta_t\to\infty\).
\end{restatable}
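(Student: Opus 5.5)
The plan is to reduce the maximization of \(J\) to a pointwise problem, identify the whole set of maximizers, and then minimize the KL divergence over that set. Write
\[
J(q)=\int f_x\bigl(q(x)\bigr)\,dx,\qquad f_x(v):=\min\{v,p^A(x)\}-\gamma\min\{v,p^B(x)\},\quad v\ge 0 .
\]
\(J\) is not concave, so I will not use first-order conditions. Instead I will compute \(\sup_{v\ge0}f_x(v)\) directly by a case analysis on the ordering of \(p^A(x)\), \(p^B(x)\) and \(\gamma p^B(x)\).

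\textbf{Case \(p^A\le p^B\).} The slope of \(f_x\) is \(1-\gamma<0\) on \([0,p^A]\), then \(-\gamma\) on \([p^A,p^B]\), then \(0\). Hence \(\sup f_x=0\), attained only at \(v=0\), unless \(p^A=p^B=0\), in which case \(f_x\equiv0\).

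\textbf{Case \(p^A>p^B\).} \(f_x\) decreases on \([0,p^B]\), increases with slope \(1\) on \([p^B,p^A]\), and is constant afterwards. Hence \(\sup f_x=\max\{0,\,p^A-\gamma p^B\}\). On \(\mathcal A_\gamma\) this supremum is attained exactly for \(v\ge p^A\). Off \(\overline{\mathcal A_\gamma}\) it is attained only at \(v=0\).

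Integrating the pointwise bound gives
\[
J(q)\le\int(p^A-\gamma p^B)_+\,dx = p^A(\mathcal A_\gamma)-\gamma p^B(\mathcal A_\gamma),
\]
with equality iff \(q\ge p^A\) a.e. on \(\mathcal A_\gamma\) and \(q=0\) a.e. off \(\mathcal A_\gamma\), up to two exceptional sets. The first is the boundary, which has zero \(p^A\)-mass by assumption. The second is the set where \(p^A=p^B=0\). This bound is achievable by a normalized density because \(p^A(\mathcal A_\gamma)\le 1\): for example \(q^*\) attains it.

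\textbf{Main obstacle: the exceptional sets.} On the boundary and on \(\{p^A=p^B=0\}\), a maximizer could place mass where \(p^A=0\) almost everywhere. Any such \(q\) has \(D_{\mathrm{KL}}(q\|p^A)=\infty\), so it is excluded from the KL minimization provided some maximizer has finite KL. The candidate \(q^*\) does: its KL equals \(-\log p^A(\mathcal A_\gamma)<\infty\). So I may restrict to maximizers \(q\) supported on \(\mathcal A_\gamma\).

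\textbf{KL minimization.} For such \(q\), the identity
\[
D_{\mathrm{KL}}(q\|p^A)=D_{\mathrm{KL}}(q\|q^*)-\log p^A(\mathcal A_\gamma)
\]
holds. Its right-hand side is uniquely minimized at \(q=q^*\), and \(q^*=p^A/p^A(\mathcal A_\gamma)\ge p^A\) on \(\mathcal A_\gamma\), so \(q^*\) is itself a maximizer. This proves uniqueness and the formula \eqref{target_1}.

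\textbf{Soft relaxation.} I will replace the hard gate \(g(x)=\mathbf 1\{s_t(x)>0\}\) by the maximizer over \(g\in[0,1]\) of
\[
g\,s_t(x)+\eta_t^{-1}\bigl(-g\log g-(1-g)\log(1-g)\bigr).
\]
Setting the derivative to zero gives \(\log\frac{1-g}{g}=-\eta_t s_t\), that is \(g=\sigma(\eta_t s_t)\). Expanding \(s_t\) gives the displayed ratio in \eqref{eq:soft-gate}.

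\textbf{Convergence.} Fix \(m>0\) and let \(\eta_t\to\infty\). Then \(\sigma(\eta_t s_t)^m\to\mathbf 1\{s_t>0\}\) pointwise off \(\{s_t=0\}\). I assume, as in the hard case, that this set is \(p_t^A\)-null. The integrand satisfies \(p_t^A\sigma^m\le p_t^A\in L^1\), so dominated convergence makes the numerator of \eqref{eq:powered-soft-target} converge pointwise a.e. and the normalizer converge to \(p_t^A(\mathcal A_\gamma^t)>0\). Hence \(q_t^{\eta,m}\) converges pointwise a.e. to the hard target. Scheff\'e's lemma then upgrades this to convergence in \(L^1\), i.e.\ in total variation.
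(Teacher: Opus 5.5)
Your proposal is correct and follows the paper's proof. It uses the same pointwise bound $\sup_{v\ge0}f_x(v)=(p^A-\gamma p^B)_+$, the same restriction to finite-KL maximizers, the same Bernoulli-entropy first-order condition, and the same dominated-convergence argument for $\eta_t\to\infty$. Your identity $D_{\mathrm{KL}}(q\|p^A)=D_{\mathrm{KL}}(q\|q^*)-\log p^A(\mathcal A_\gamma)$ is just the Gibbs-inequality form of the paper's Jensen step on $r\log r$. Your explicit appeal to Scheff\'e's lemma, under the stated time-$t$ null-boundary assumption, makes precise the total-variation convergence that the paper only asserts.
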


The parameter \(\eta_t\) controls the sharpness of the likelihood-ratio
boundary, while \(m_t\) controls the strength of the soft reweighting. Thus,
the powered target preserves regions specific to \(p_t^A\) and smoothly
downweights regions supported by \(p_t^B\), without changing the limiting
hard target. Proofs and further discussion are provided in
Appendices~\ref{sec:target} and~\ref{soften}.


\section{Local SMC Proposal Design}
\label{sec:sampler}

\subsection{Local Forward--Backward Proposal Design}
\label{sec:local-proposal}

Let $\tau=t+h$, with $h>0$. Suppose the current weighted particles
approximate $q_\tau$. To propagate them backward to $q_t$, we introduce
an auxiliary forward kernel $K_t(y\mid x)$ and sample from a backward
kernel $L_\tau(x\mid y)$. The ideal incremental Radon--Nikodym weight is
\begin{equation}
    \omega(x,y)
    =
    \frac{q_t(x)K_t(y\mid x)}
         {q_\tau(y)L_\tau(x\mid y)}
    \propto
    \frac{\gamma_t(x)K_t(y\mid x)}
         {\gamma_\tau(y)L_\tau(x\mid y)}.
    \label{eq:reweight}
\end{equation}
Thus, the proposal kernels can be adapted while the target remains fixed,
provided that the actual transition ratio is included in the weight
\citep{he2025rne,delmoral2004feynman}. We use this construction to motivate
an auxiliary forward proposal and a matched backward drift that removes
the leading stochastic variation of the Gaussian surrogate below.

\paragraph{Two-field forward proposal.}

Recall the powered soft target
\begin{equation}
\begin{aligned}
    q_t(z)\propto\gamma_t(z)
    &:=p_t^A(z)(1-\theta_t(z))^m
    =p_t^A(z)e^{\mathcal R_t(z)},\\
    \mathcal R_t(z)&:=m\log(1-\theta_t(z)),
\end{aligned}
\label{eq:sampler-powered-target}
\end{equation}
The gate schedules are independent of the spatial variable. Define
\begin{equation}
\begin{aligned}
    d_t(z)
    &:=\nabla_z\log p_t^A(z)-\nabla_z\log p_t^B(z),\\
    g_t(z)
    &:=\nabla_z\mathcal R_t(z)
    =\zeta_t(z)d_t(z),
    \qquad \zeta_t(z):=m\eta_t\theta_t(z).
\end{aligned}
\label{eq:two-forward-fields}
\end{equation}
Here $d_t$ is the desired--undesired score contrast, while $g_t$ is the
corresponding correction induced by the soft target. Although these fields
are pointwise collinear, their relative multiplier $\zeta_t(z)$ varies
across particles.

The two references share the forward drift
$\mu_r^A=\mu_r^B$ and the space-independent scalar diffusion
variance rate $s_r>0$. We adopt the decreasing-time reverse-drift
convention
\[
    \nu_r^j(z)
    :=
    \mu_r^j(z)-s_r\nabla_z\log p_r^j(z),
    \qquad j\in\{A,B\}.
\]
For each candidate $(\rho_1,\rho_2)$, the coefficients are held fixed
over the current interval $[t,\tau]$. We define
\[
    u_r^{(\rho_1,\rho_2)}(z)
    :=
    \rho_1 d_r(z)+\rho_2 g_r(z),
    \qquad r\in[t,\tau],
\]
where $\rho_1$ controls a direct correction along the
desired--undesired score contrast, while $\rho_2$ modulates the
reward-gradient component induced by the soft target. The resulting
auxiliary forward drift is
\begin{equation}
    b_t^{(\rho_1,\rho_2)}(z)
    =
    \mu_t^A(z)+s_tu_t^{(\rho_1,\rho_2)}(z).
    \label{eq:forward-proposal}
\end{equation}

For arbitrary forward and backward drifts $b$ and $a$, define
\begin{equation}
\begin{aligned}
    K_t^b(y\mid x)
    :=\mathcal N(y;x+hb_t(x),hs_tI),\qquad L_\tau^a(x\mid y)
    :=\mathcal N(x;y-ha_\tau(y),hs_\tau I).
\end{aligned}
\label{eq:local-gaussian-kernels}
\end{equation}
Conditioning on the current gate value, we define the operational Gaussian
log-weight surrogate
\begin{equation}
    \log\omega_{\rho_1,\rho_2}^{\mathrm G}(x,y)
    :=\Delta\mathcal R^{\mathrm G}(x,y)
    +\log
    \frac{L_\tau^{\nu^A}(x\mid y)}
         {K_t^{\mu^A}(y\mid x)}-\log
    \frac{L_\tau^{a_{\rho_1,\rho_2}}(x\mid y)}
         {K_t^{b^{(\rho_1,\rho_2)}}(y\mid x)}.
    \label{eq:two-field-local-weight}
\end{equation}
Here $\Delta\mathcal R^{\mathrm G}(x,y)$ is the one-step gate contribution
obtained from the same A/B reference Gaussian kernel-ratio approximation.
Its construction and score-based local spatial derivatives are specified
in Appendix~\ref{prop2proof}. In numerical evaluation, we use the same
notation for estimated gate values; their computation and any associated
estimator-state update follow the experimental protocol. This defines a
Gaussian weight surrogate, rather than an exact marginal RN weight.

\begin{restatable}[Two-Field Local Reweighting Expansion]
{proposition}{proptwo}
\label{prop2}
Assume the Gaussian kernels in \eqref{eq:local-gaussian-kernels}, the
Gaussian gate construction used in \eqref{eq:two-field-local-weight}, and
the regularity conditions in Appendix~\ref{prop2proof}. For the forward
proposal \eqref{eq:forward-proposal}, choose the matched backward drift
\begin{equation}
    a_{\rho_1,\rho_2}^\star(y)
    =\nu_\tau^A(y)
    +s_\tau\left(
        u_\tau^{(\rho_1,\rho_2)}(y)-g_\tau(y)
    \right).
    \label{eq:matched-backward}
\end{equation}
This cancels the $O_p(h^{1/2})$ term in the Gaussian log-weight surrogate.
Let
\[
    \Delta y:=x-y
    =-ha_{\rho_1,\rho_2}^\star(y)+\sqrt{hs_\tau}\,\xi,
    \qquad \xi\sim\mathcal N(0,I).
\]
Then
\begin{equation}
    \log\omega_{\rho_1,\rho_2}^{\mathrm G}
    =h\alpha_{\rho_1,\rho_2}(y)
    +\Delta y^\top\Gamma_{\rho_1,\rho_2}(y)\Delta y
    +O_p(h^{3/2}),
    \label{eq:local-weight-expansion}
\end{equation}
where
\begin{equation}
\begin{aligned}
    \Gamma_{\rho_1,\rho_2}(y)
    =-\left(\rho_1+\rho_2\zeta_\tau(y)\right)Jd_\tau(y)+\chi_\tau(y)\left(\rho_2-\frac12\right)
    d_\tau(y)d_\tau(y)^\top,
\end{aligned}
\label{eq:two-field-gamma}
\end{equation}
with
\[
    Jd_\tau(y):=\nabla_y d_\tau(y),
    \qquad
    \chi_\tau(y):=m\eta_\tau^2\theta_\tau(y)(1-\theta_\tau(y)).
\]
The scalar coefficient $\alpha_{\rho_1,\rho_2}(y)$ collects
the gate and kernel contributions. Its explicit expression
and the complete derivation are given in Appendix~\ref{prop2proof}.The remainder is uniform
over the fixed finite candidate grid used below.
\end{restatable}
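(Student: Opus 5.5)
The plan is to expand every term of the Gaussian log-weight surrogate \eqref{eq:two-field-local-weight} around the point $y$ in powers of $\Delta y=x-y$, using $\Delta y=O_p(h^{1/2})$, and then collect terms by order. Each log kernel ratio is a difference of Gaussian quadratic forms, so it can be written out explicitly. Since $K_t^{b}(y\mid x)$ is centred at $x+hb_t(x)$, it contains $\|y-x-hb_t(x)\|^2/(2hs_t)$. The backward kernel contains $\|x-y+ha_\tau(y)\|^2/(2hs_\tau)$. In the difference of the two ratios the normalising constants cancel. The $\|\Delta y\|^2/(2h s)$ pieces also cancel, up to the mismatch $s_t$ versus $s_\tau$. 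That mismatch is handled by smoothness of $r\mapsto s_r$: since $s_t=s_\tau+O(h)$, it contributes only $O(h)\cdot\|\Delta y\|^2/h=O_p(h)$, which goes into $\alpha$.

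\textbf{Step 1 (linear terms).} The surviving cross terms have the form $\Delta y^\top(\cdot)/s$. Specifically:
\begin{itemize}
\item the forward cross term gives $\Delta y^\top(b^{(\rho)}_t(x)-\mu_t^A(x))/s_t=\Delta y^\top u_t^{(\rho)}(x)$;
\item the backward cross term gives $-\Delta y^\top(a(y)-\nu_\tau^A(y))/s_\tau$.
\end{itemize}
The gate increment $\Delta\mathcal R^{\mathrm G}$ is built from the same A/B Gaussian kernel-ratio approximation. Its first-order part is $\Delta y^\top g_\tau(y)$, namely $\nabla\mathcal R_\tau$ contracted with the increment, and it inherits the backward-kernel sign convention. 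Summing these and evaluating at $y$, the $O_p(h^{1/2})$ coefficient is
\[
u_\tau^{(\rho)}(y)+g_\tau(y)-\bigl(a(y)-\nu_\tau^A(y)\bigr)/s_\tau ,
\]
up to sign bookkeeping that I will fix carefully. This coefficient vanishes exactly when $a=a^\star$ as in \eqref{eq:matched-backward}. This step is routine once the signs of the decreasing-time convention are pinned down.

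\textbf{Step 2 (quadratic terms).} The second-order contributions come from three sources:
\begin{itemize}
\item Re-centring $u_t^{(\rho)}(x)$ at $y$ gives $\Delta y^\top\nabla u_\tau^{(\rho)}(y)\Delta y$. Here $u=\rho_1 d+\rho_2\zeta d$, so $\nabla u=(\rho_1+\rho_2\zeta)Jd+\rho_2\, d\,\nabla\zeta^\top$.
\item For $\nabla\zeta$, use $\theta=1-\sigma(\eta s)$ and $\nabla s=d$, which gives $\nabla\theta=-\eta\theta(1-\theta)d$. Hence $\nabla\zeta=-m\eta^2\theta(1-\theta)d=-\chi d$.
\item The gate term contributes its Hessian piece $\tfrac12\Delta y^\top\nabla^2\mathcal R\,\Delta y$, where $\nabla^2\mathcal R=\zeta Jd-\chi dd^\top$.
\end{itemize}
Combining these with the signs fixed in Step 1, the $Jd$ terms should give $-(\rho_1+\rho_2\zeta)Jd$. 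The $\zeta Jd$ from the gate is absorbed against the $-g$ shift in $a^\star$, which is itself evaluated at $y$ rather than $x$. The rank-one terms should give $\chi(\rho_2-\tfrac12)dd^\top$, reproducing \eqref{eq:two-field-gamma}.

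\textbf{Step 3 (remaining terms).} All remaining terms are collected into $h\alpha(y)$. These include $h$ times the drift products $\|b\|^2$, $\|a\|^2$ and the time derivatives of the drifts, $s$ and $\mathcal R$ between $t$ and $\tau$. Everything else is shown to be $O_p(h^{3/2})$ using third-order Taylor remainders. These remainders are controlled by the assumed bounds on third derivatives of $\log p^A$ and $\log p^B$, by $\mathbb E\|\Delta y\|^3=O(h^{3/2})$, and by time-Lipschitz regularity. Uniformity over the finite $(\rho_1,\rho_2)$ grid is immediate by taking a maximum of finitely many constants.

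\textbf{Main obstacle.} The delicate part is Step 2's bookkeeping. There I must track which terms are evaluated at $x$ versus $y$ and at $t$ versus $\tau$. I must also check that the gate surrogate's second-order part carries exactly the coefficient $-\tfrac12\chi$ on $dd^\top$ and cancels the $\zeta Jd$ piece against the $-g$ shift in $a^\star$. I would first verify this in one dimension with constant $d'$ before writing the general version.
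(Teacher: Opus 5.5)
\textbf{Gap in Step 2: the quadratic term of the gate.} Your overall route is the paper's, and your $Ju=(\rho_1+\rho_2\zeta)Jd-\rho_2\chi dd^\top$ (via $\nabla\zeta=-\chi d$) is correct. That route is: exact Gaussian algebra for the kernel ratios, a Taylor expansion of $u_t^{(\rho_1,\rho_2)}(x)$ about $y$, an expansion of the gate increment, and collection of terms by order.

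The gap is how you treat the gate in Step 2. You give $\Delta\mathcal R^{\mathrm G}$ the second-order piece $\tfrac12\Delta y^\top(\zeta Jd-\chi dd^\top)\Delta y$. That means you are expanding it as if it were the exact reward increment $\mathcal R_t(x)-\mathcal R_\tau(y)$. But the Gaussian gate is driven by $D^{\mathrm G}=\log\bigl(L_\tau^{\nu^B}(x\mid y)/L_\tau^{\nu^A}(x\mid y)\bigr)$. These two kernels share the covariance $hs_\tau I$ and their drifts are evaluated at $y$, so exactly $D^{\mathrm G}=h\alpha_{AB}-d^\top\Delta y$. It is affine in $\Delta y$ and carries no curvature of $\log(p^B/p^A)$.

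The only quadratic gate term comes from the nonlinearity of $S\mapsto -m\log[1+\theta(e^S-1)]$. Use $S^{\mathrm G}=h\alpha_S-\eta_\tau d^\top\Delta y+O(h^{3/2})$ and $\log[1+\theta(e^S-1)]=\theta S+\tfrac12\theta(1-\theta)S^2+O(S^3)$. This gives $\Delta\mathcal R^{\mathrm G}=h\alpha_{\mathcal R}+g^\top\Delta y-\tfrac{\chi}{2}(d^\top\Delta y)^2+O(h^{3/2})$, with no $\zeta Jd$ term at all.

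\textbf{Why the proposed cancellation fails.} The cancellation of the spurious $\zeta Jd$ against the $-g$ shift in $a^\star$ cannot occur. The backward drift enters only through $\log L_\tau^a(x\mid y)=-\|\Delta y+ha(y)\|^2/(2hs_\tau)+\mathrm{const}$, evaluated at the conditioning point $y$. Its $\Delta y$-quadratic part is therefore independent of $a$. The $-g$ shift affects only the linear coefficient, where it cancels the gate's $g^\top\Delta y$, and the scalar $h\alpha$.

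Carried through as written, your Step 2 yields $\Gamma_{\rho_1,\rho_2}+\tfrac{\zeta}{2}Jd$. That extra term is exactly the oracle-versus-surrogate discrepancy recorded in \eqref{eq:app-oracle-gate-comparison}, not the stated matrix. The correct bookkeeping needs no cancellation. The only kernel curvature is $-\Delta y^\top Ju\,\Delta y$, from the forward term $-u_t(x)^\top\Delta y$. Hence $\Gamma=-Ju-\tfrac{\chi}{2}dd^\top$, which is \eqref{eq:two-field-gamma}.

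\textbf{Smaller points.} In Step 1 the contributions are $-u_t(x)^\top\Delta y$ from the forward kernels, $+(a-\nu_\tau^A)^\top\Delta y/s_\tau$ from the backward kernels, and $+g^\top\Delta y$ from the gate. With your signs, the coefficient would vanish at $\nu_\tau^A+s_\tau(u+g)$ rather than at $a^\star$.

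There is also no $s_t$-versus-$s_\tau$ mismatch. The two backward kernels share $hs_\tau$ and the two forward kernels share $hs_t$, so the $\|\Delta y\|^2/h$ terms cancel exactly within each ratio. A residual of that kind would in any case be a random multiple of $\|\xi\|^2$, not something absorbable into $h\alpha(y)$.
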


We present the fixed-$m$ case; the time-varying-$m_t$ extension is given
in Appendix~\ref{prop2proof}. Conditioned on the current state, the scalar
coefficient is independent of the mutation noise, while the quadratic term
contributes both a conditional mean and transition-level fluctuations.
The $Jd$ term captures curvature of the score contrast, and the
$dd^\top$ term arises from the nonlinear gate.

\subsubsection{Fixed-Displacement JVP Approximation}
\label{sec:fixed-displacement-jvp}

A naive candidate-wise evaluation of \eqref{eq:local-weight-expansion}
would require candidate-dependent JVPs. We instead fix $\rho_1=\rho_{1,t}$
and reuse the displacement corresponding to $\rho_2=0$. Draw one
$\xi_i\sim\mathcal N(0,I)$ per particle, independently of the current
particle system, and define
\begin{equation}
\begin{aligned}
    a_{0,i}^\star
    &:=\nu_\tau^A(y_i)
    +s_\tau\left(\rho_{1,t}d_\tau(y_i)-g_\tau(y_i)\right),\\
    \Delta y_{0,i}
    &:=-ha_{0,i}^\star+\sqrt{hs_\tau}\,\xi_i.
\end{aligned}
\label{eq:fixed-displacement}
\end{equation}
For any candidate $\rho_2$,
\[
    \Delta y_i(\rho_2)
    =\Delta y_{0,i}-hs_\tau\rho_2g_\tau(y_i).
\]
The difference is $O(h)$, while $\Delta y_{0,i}=O_p(h^{1/2})$. Replacing
the candidate-dependent displacement inside the quadratic term therefore
changes the expansion only by $O_p(h^{3/2})$.

Define the fixed-displacement log-weight approximation by
\begin{align}
    \log\widehat\omega_i(\rho_2)
    &:=h\alpha_{\rho_{1,t},\rho_2}(y_i)
    -\left(\rho_{1,t}+\rho_2\zeta_\tau(y_i)\right)
    \Delta y_{0,i}^\top Jd_\tau(y_i)\Delta y_{0,i}
    \nonumber\\
    &\quad+\chi_\tau(y_i)\left(\rho_2-\frac12\right)
    \left(d_\tau(y_i)^\top\Delta y_{0,i}\right)^2.
    \label{eq:fixed-seed-local-weight}
\end{align}
For $x_i(\rho_2)=y_i+\Delta y_i(\rho_2)$, this gives
\[
    \log\omega_{\rho_{1,t},\rho_2}^{\mathrm G}
    \bigl(x_i(\rho_2),y_i\bigr)
    =\log\widehat\omega_i(\rho_2)+O_p(h^{3/2}).
\]
All candidates reuse $Jd_\tau(y_i)[\Delta y_{0,i}]$, which requires one
score JVP per reference model. The remaining terms require only scalar
coefficients and inner products.

\paragraph{Relative weight-variance minimization.}
For incoming weights $W_i$, let
$V_i(\rho_2):=W_i\widehat\omega_i^{\mathrm{use}}(\rho_2)$ and
$\bar V(\rho_2):=N^{-1}\sum_iV_i(\rho_2)$.
Here $\widehat\omega_i^{\mathrm{use}}$ includes any prescribed
stabilization, applied consistently in selection and reweighting.
We minimize the empirical relative variance which is the same as normalized cumulative ESS maximization:
\begin{align}
    \widehat{\mathcal V}_t^{\mathrm{rel}}(\rho_2)
    &:=\frac1N\sum_i
    \left(\frac{V_i(\rho_2)}{\bar V(\rho_2)}-1\right)^2,
    \label{eq:relative-cumulative-weight-variance}\\
    \rho_{2,t}^\star
    &\in\operatorname*{arg\,min}_{\rho_2\in\mathcal R_2}
    \widehat{\mathcal V}_t^{\mathrm{rel}}(\rho_2)
    =\operatorname*{arg\,max}_{\rho_2\in\mathcal R_2}
    \widehat{\mathrm{nESS}}_t(\rho_2).
    \label{eq:rho2-local-objective}
\end{align}
This equivalence holds for arbitrary incoming weights. For equal weights and small increments, incremental log-weight variance is its leading-order approximation; see Appendix~\ref{app:forward-kernel-matching}.

\subsection{Continuous-Time Interpretation}
\label{sec:continuous_interpretation}

The local Gaussian surrogate in Proposition~\ref{prop2} also admits a
natural continuous-time interpretation. Consider the ideal reference
setting with exact scores and gate values, and fix the proposal
coefficients over the current step. As shown in
Appendix~\ref{sec:continuous_connection}, the finite-step surrogate
satisfies
\begin{equation}
\begin{aligned}
    \log\omega_{\rho_1,\rho_2}^{\mathrm G}
    &=
    \log\omega_{\rho_1,\rho_2}^{\mathrm{cont}}
    +
    hs_\tau
    \left[
        \xi^\top\Gamma_{\rho_1,\rho_2}(y)\xi
        -
        \operatorname{tr}\Gamma_{\rho_1,\rho_2}(y)
    \right]
    +
    O_p(h^{3/2}),
\end{aligned}
\label{eq:finite-continuous-link}
\end{equation}
where $\omega_{\rho_1,\rho_2}^{\mathrm{cont}}$ is the corresponding
continuous Feynman--Kac incremental weight along the matched backward
process.

For a fixed candidate and $\xi\sim\mathcal N(0,I)$,
\[
    \mathbb E_\xi
    \left[
        \xi^\top\Gamma_{\rho_1,\rho_2}(y)\xi
        \mid y
    \right]
    =
    \operatorname{tr}\Gamma_{\rho_1,\rho_2}(y).
\]
Thus, the finite Gaussian surrogate and the continuous formulation share
the same conditional $O(h)$ contribution. The difference is that the
continuous formulation contains the trace correction, whereas the
finite-step surrogate retains its realized Gaussian quadratic form
$\xi^\top\Gamma\xi$. In this sense, the same mutation noise provides a
one-sample estimate of the continuous trace term while preserving
transition-level information from the finite step. The full derivation
and regularity conditions are given in
Appendix~\ref{sec:continuous_connection}.

\subsection{Relation to Variance-Minimizing Proposals}
\label{sec:relation-variance-methods}

For a fixed candidate $\boldsymbol\rho=(\rho_1,\rho_2)$, the variance of
the finite Gaussian log-weight surrogate decomposes as
\begin{equation}
\begin{aligned}
    \operatorname{Var}_{Y,\xi}
    \bigl(\log\widehat\omega_{\boldsymbol\rho}\bigr)
    &=
    \mathbb E_Y\!\left[
        \operatorname{Var}_\xi\!\left(
            \log\widehat\omega_{\boldsymbol\rho}\mid Y
        \right)
    \right] +
    \operatorname{Var}_Y\!\left(
        \mathbb E_\xi[
            \log\widehat\omega_{\boldsymbol\rho}\mid Y
        ]
    \right).
\end{aligned}
\label{eq:main-logweight-total-variance}
\end{equation}
The first term captures transition-level fluctuations, while the second
captures variation of the conditional mean across particles.

At the continuous level, after matching the target path and control family,
our matched drift induces the same Feynman--Kac potential as the
corresponding DriftLite-VCG control
\citep{ren2026driftlite}. Its potential-variance objective is the
leading-order coefficient of the second term above. Our finite Gaussian
construction instead evaluates the realized one-step weight, retaining the
quadratic trace estimator and the additional finite-displacement
corrections. The benefit is therefore in the finite-step weight
construction and optimization objective, rather than in separate
continuous control principles.

This also connects to DAS~\citep{kim2025testtime}: for a fixed forward
kernel, its ideal backward proposal minimizes the conditional transition
variance. Our coefficients additionally modify the forward kernel together
with its matched backward proposal. In practice, we select $\rho_2$ by
minimizing the empirical relative variance of the updated cumulative
surrogate weights, equivalently maximizing cumulative ESS in
\eqref{eq:rho2-local-objective}. Detailed comparisons are given in
Appendices~\ref{app:comparison-das} and
\ref{app:comparison-driftlite}.

\begin{algorithm}[t]
\caption{Common-Noise Two-Field SMC: One Stochastic Step}
\label{alg:smc-two-field}
\small
\begin{algorithmic}[1]
\Require Weighted particles $\{(y_i,W_i)\}_{i=1}^N$ and gate state;
step $h=\tau-t$; fixed $\rho_{1,t}$; grid $\mathcal R_2$;
previous coefficient $\rho_2^{\mathrm{prev}}$

\State Evaluate the reference fields and local gate coefficients.
\State Draw $\xi_i\sim\mathcal N(0,I)$ independently and form
$\Delta y_{0,i}$ by \eqref{eq:fixed-displacement}.
\State Compute the shared JVPs $Jd_\tau(y_i)[\Delta y_{0,i}]$.

\State Set $\mathcal C_t=\mathcal R_2$ on optimization steps;
otherwise $\mathcal C_t=\{\rho_2^{\mathrm{prev}}\}$.
\State Form $\widehat\omega_i^{\mathrm{use}}(\rho_2)$ for
$\rho_2\in\mathcal C_t$ using \eqref{eq:fixed-seed-local-weight}
and the prescribed stabilization.
\State Select
$\rho_{2,t}^\star\in
\operatorname*{arg\,min}_{\rho_2\in\mathcal C_t}
\widehat{\mathcal V}_t^{\mathrm{rel}}(\rho_2)$
by \eqref{eq:rho2-local-objective}.

\State $x_i\leftarrow y_i+\Delta y_{0,i}
-hs_\tau\rho_{2,t}^\star g_\tau(y_i)$.
\Comment{Reuse the same noise}
\State Update the gate state using the selected transition.
\State $\log W_i\leftarrow\log W_i+
\log\widehat\omega_i^{\mathrm{use}}(\rho_{2,t}^\star)$;
normalize the weights.
\State Resample particles and gate state jointly if triggered,
retaining residual importance weights.

\State \Return $\{(x_i,W_i)\}_{i=1}^N$, gate state,
and $\rho_{2,t}^\star$.
\end{algorithmic}
\end{algorithm}

\section{Experiment}
\label{sec:experiments}

\subsection{Synthetic Specificity on a 40-Component Gaussian Mixture}
\label{sec:mog}
\begin{figure}[h]
    \centering
    \includegraphics[width=1.0\linewidth]{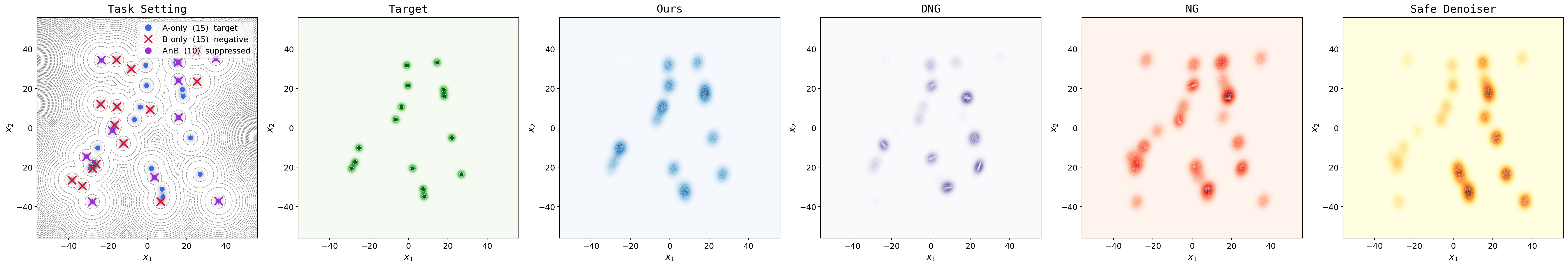}
    \caption{Visualization on the 40-MoG synthetic task. The task setting contains
\(A\)-only target modes, \(B\)-only negative modes, and \(A\cap B\)
suppressed modes. The target distribution retains only the \(A\)-only modes.}
    \label{fig:mog}
\end{figure}
In the first experiment, the 40-MoG synthetic dataset \citep{skreta2025feynman}\citep{he2025rne} is used to evaluate class-contrastive
sampling. As shown in Fig.\ref{fig:mog}, the 40 Gaussian modes are randomly divided into
three groups: \(A\)-only modes, \(B\)-only modes, and overlapping \(A\cap B\)
modes. The desired target distribution retains the \(A\)-only modes while
suppressing both the \(B\)-only and \(A\cap B\) modes.

The proposed sampler produces samples that are visually closest to the
target distribution. Dynamic Negative Guidance
(DNG) \citep{koulischer2024dynamic} suppresses several negative regions, but it also
induces mode shifts, especially for the modes in the lower-right region.
Negative Guidance fails to represent the target distribution accurately and
retains substantial mass near negative regions. SSafe Denoiser (SD)
\citep{kim2025training} removes the
negative modes more effectively than Negative Guidance, but its repulsion is
driven by the absolute \(B\)-density. Consequently, it can over-repel samples
from regions where \(B\) is dense, causing missing or weakened positive modes
that are surrounded by or close to negative modes, such as those in the
lower-left region. This supports the use of a ratio-based gate, which
activates repulsion according to the relative evidence \(p_B/p_A\) rather
than the absolute negative density alone.
\subsection{CIFAR-10 Specificity}
\label{sec:cifar10}

\begin{figure}[ht]
    \centering

    \begin{subfigure}{0.24\linewidth}
        \centering
        \includegraphics[width=\linewidth]{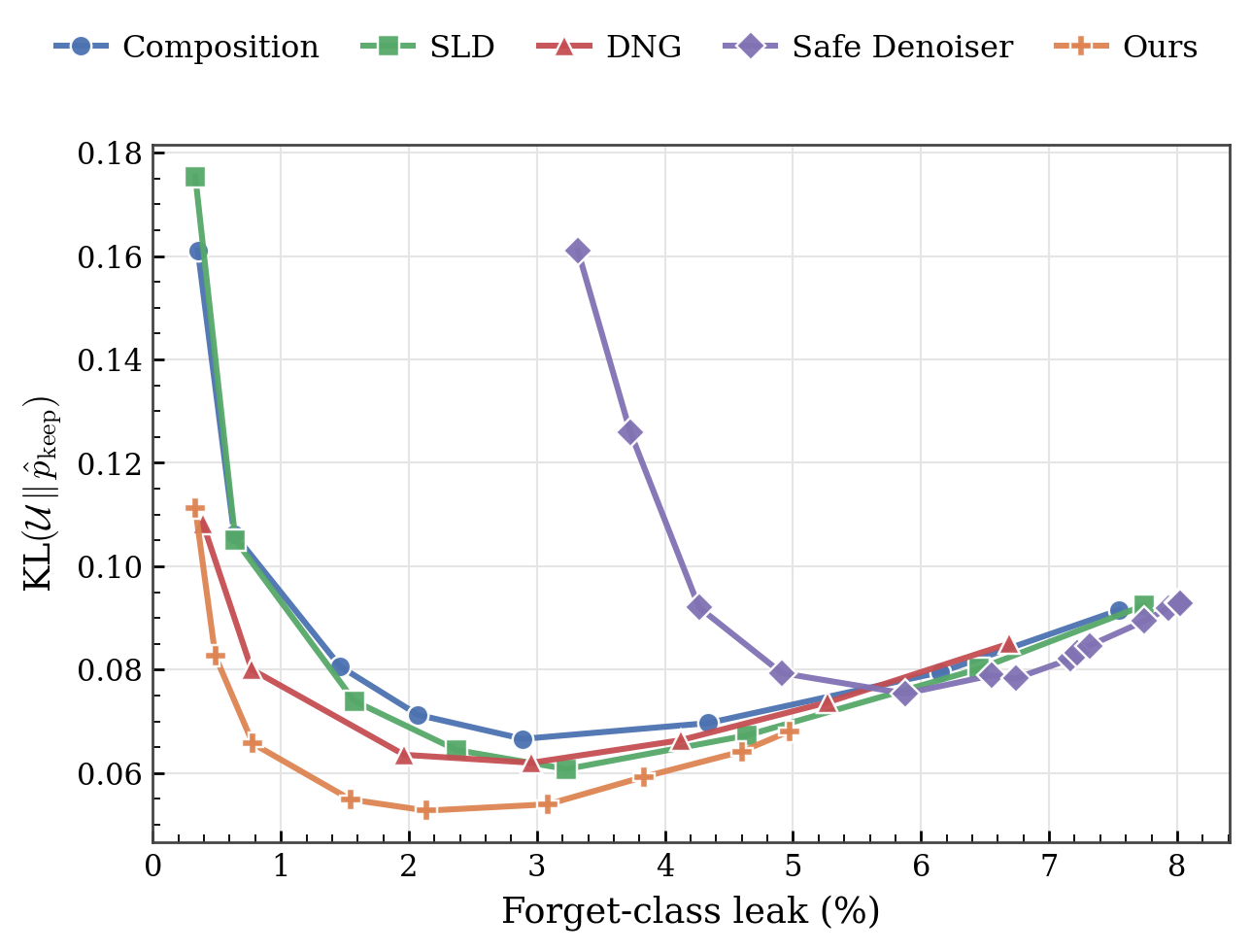}
        \label{fig:sub1}
    \end{subfigure}
    \hfill
    \begin{subfigure}{0.24\linewidth}
        \centering
        \includegraphics[width=\linewidth]{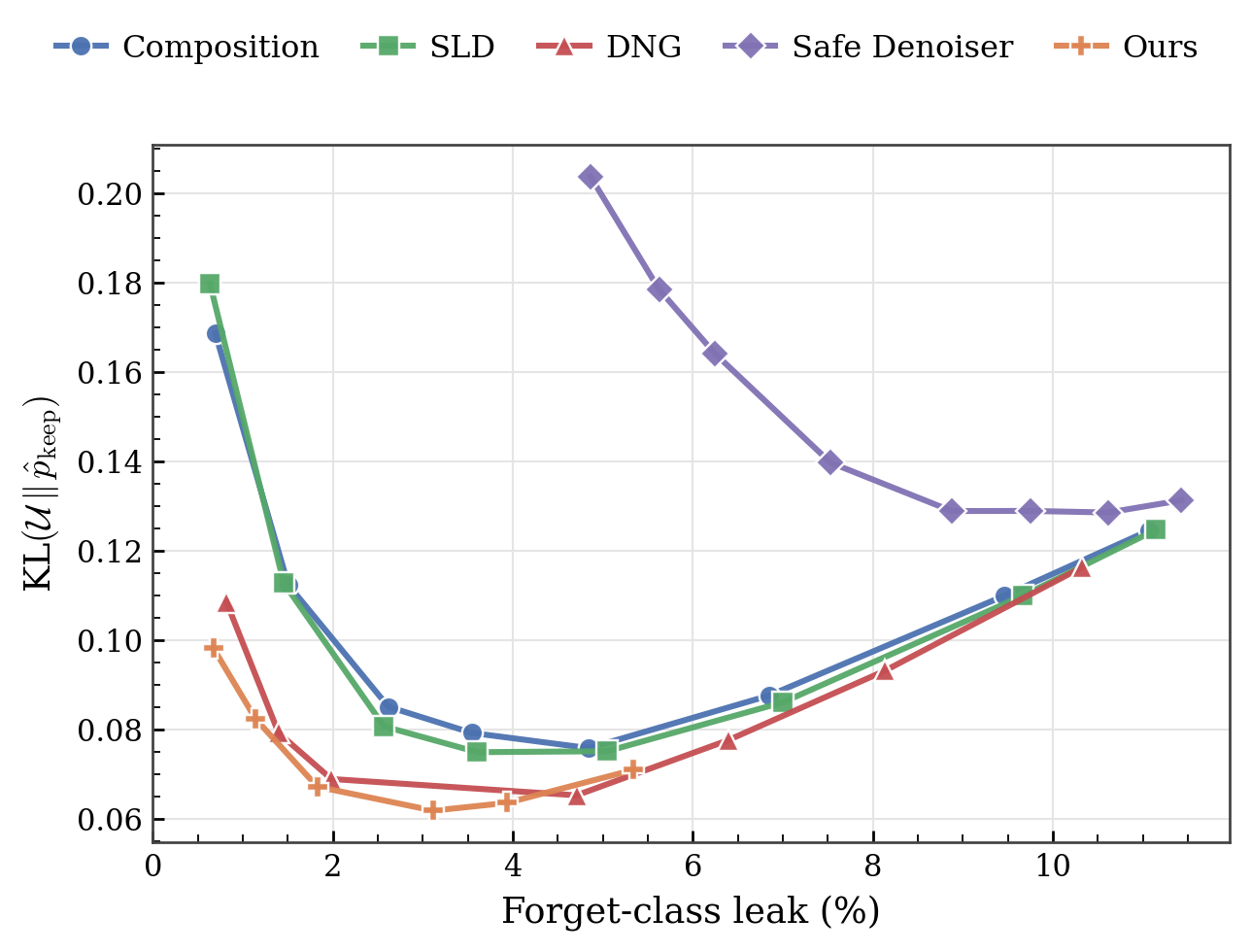}
        \label{fig:sub2}
    \end{subfigure}
    \hfill
    \begin{subfigure}{0.24\linewidth}
        \centering
        \includegraphics[width=\linewidth]{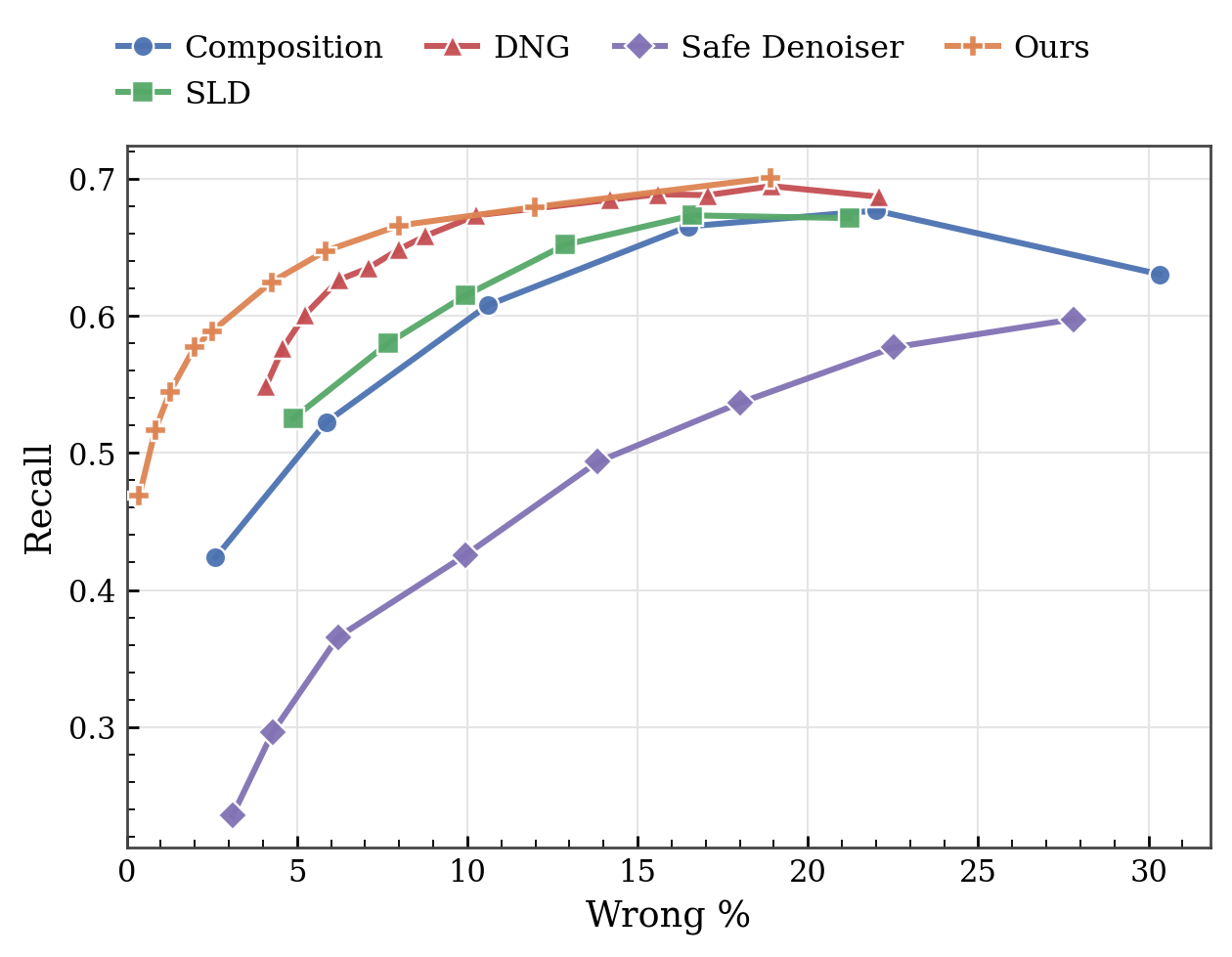}
        \label{fig:sub3}
    \end{subfigure}
    \hfill
    \begin{subfigure}{0.24\linewidth}
        \centering
        \includegraphics[width=\linewidth]{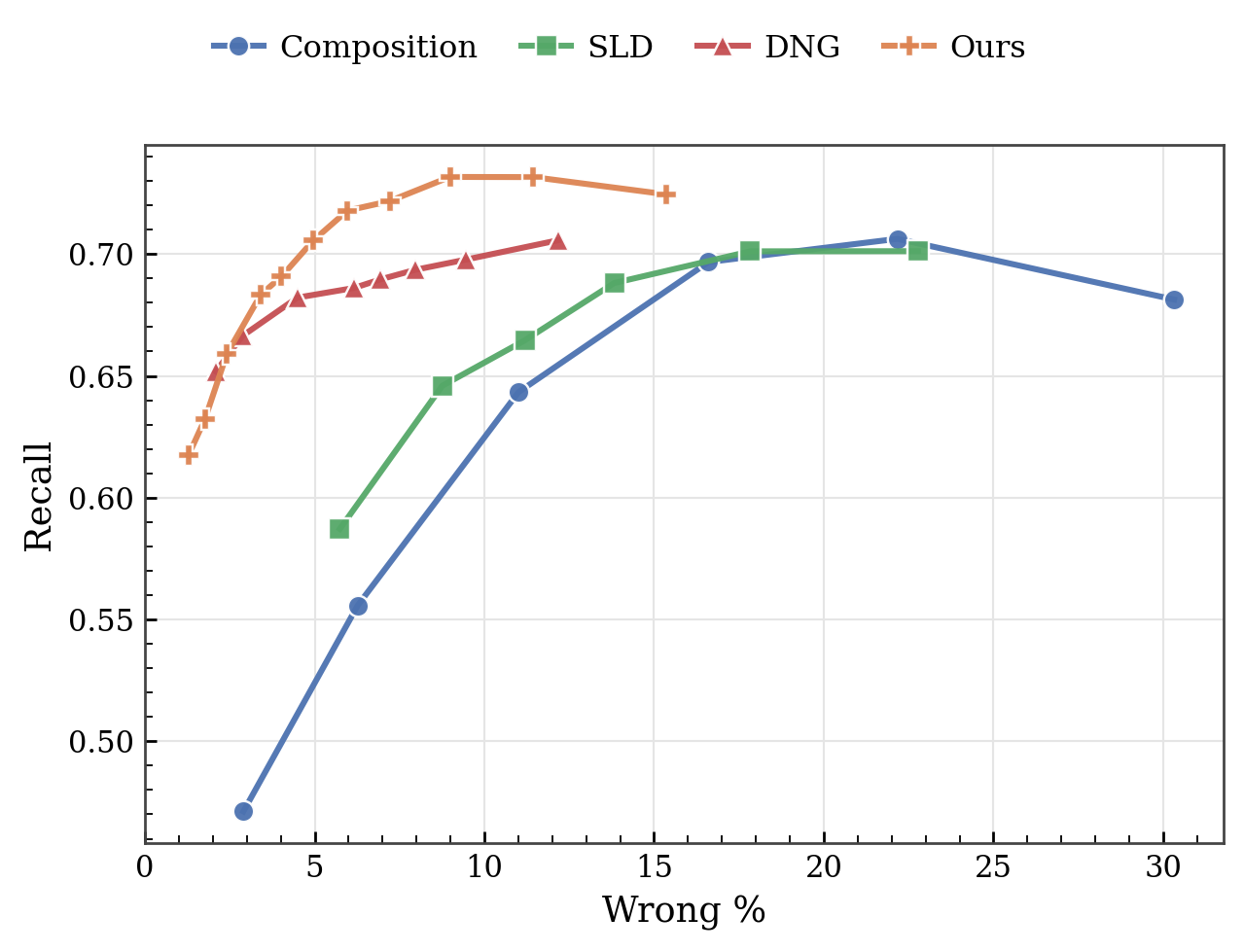}
        \label{fig:sub4}
    \end{subfigure}

    \vspace{0.5em}

    \begin{subfigure}{0.24\linewidth}
        \centering
        \includegraphics[width=\linewidth]{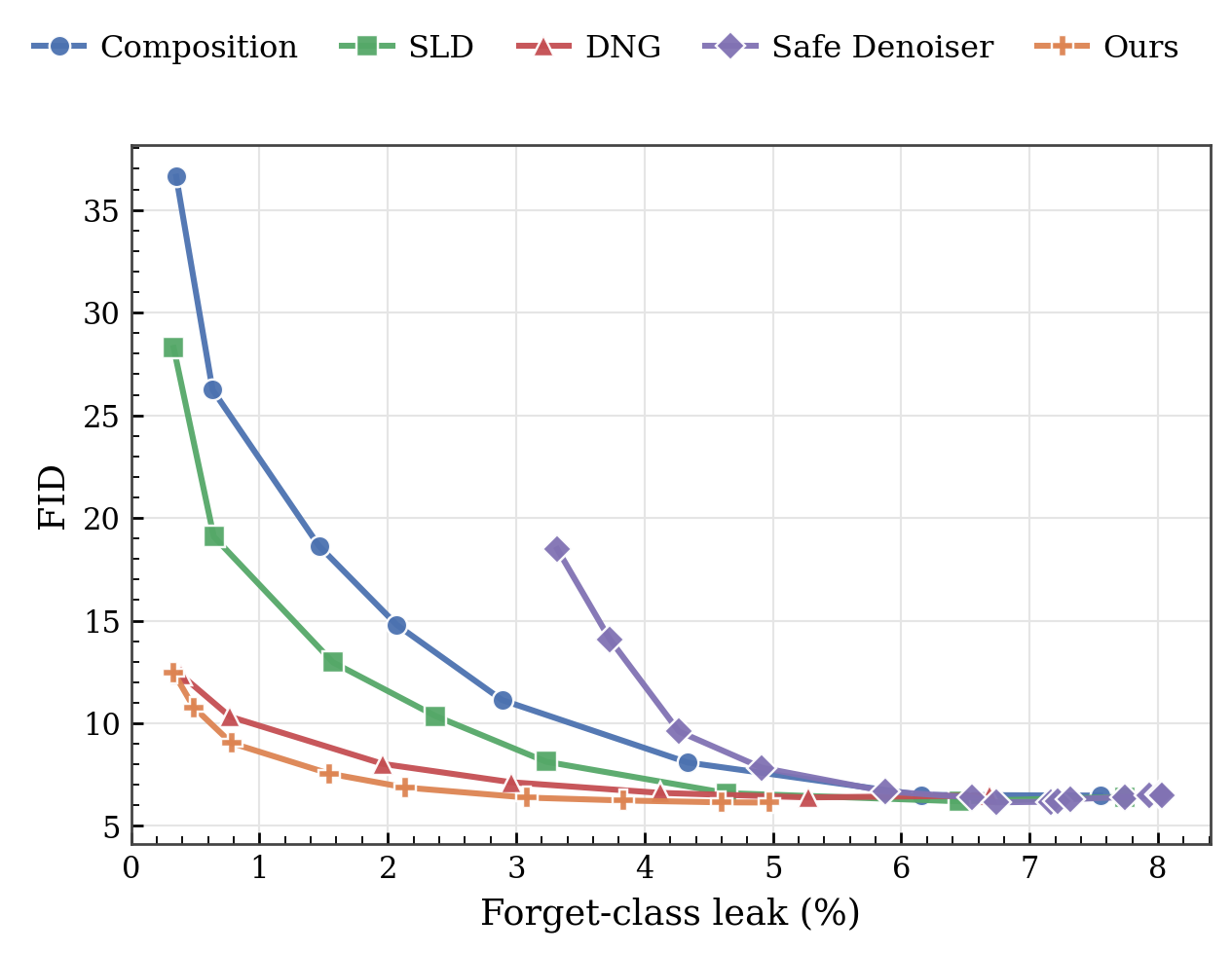}
        \caption{Removing Class 0}
        \label{fig:sub5}
    \end{subfigure}
    \hfill
    \begin{subfigure}{0.24\linewidth}
        \centering
        \includegraphics[width=\linewidth]{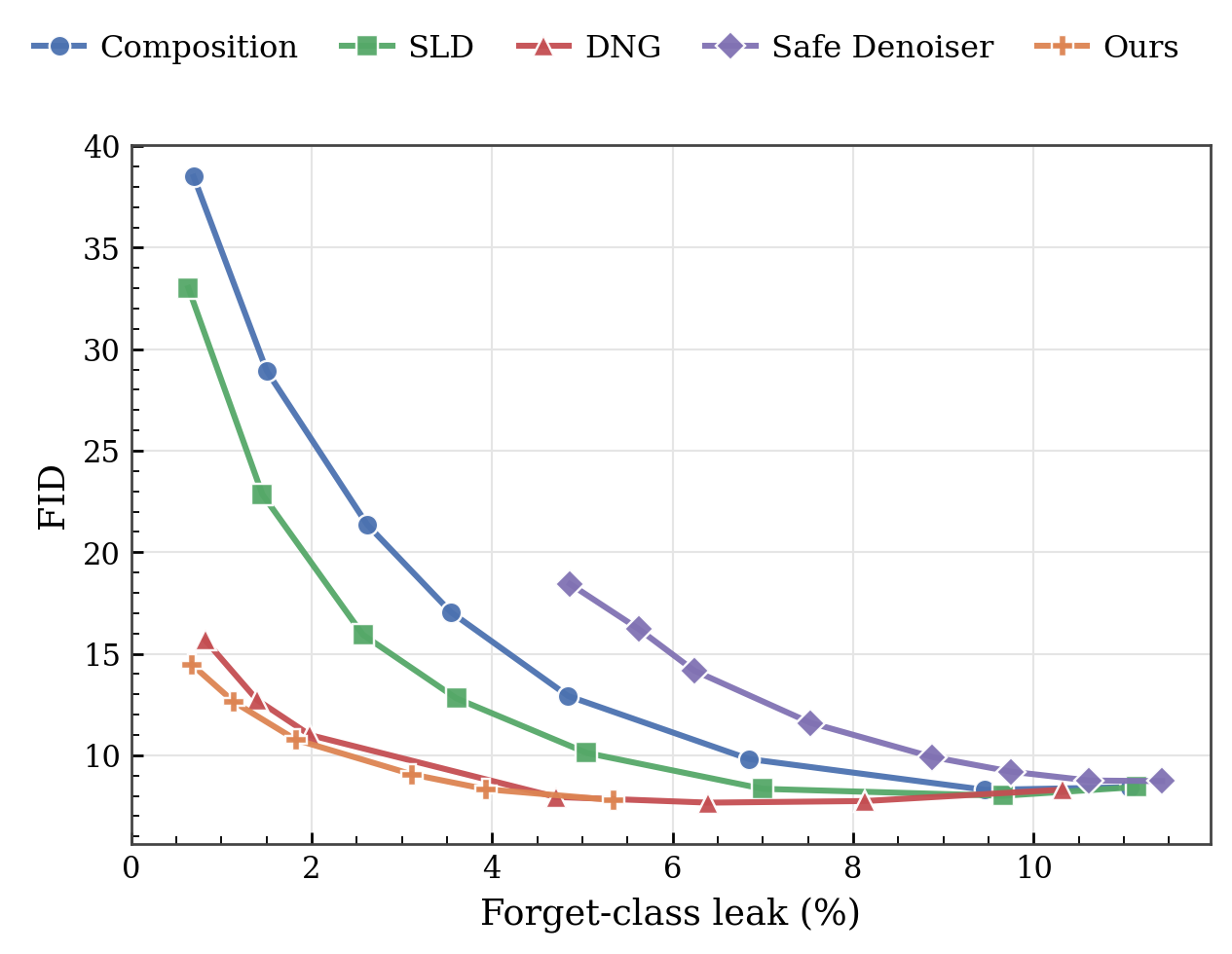}
        \caption{Removing Class 1}
        \label{fig:sub6}
    \end{subfigure}
    \hfill
    \begin{subfigure}{0.24\linewidth}
        \centering
        \includegraphics[width=\linewidth]{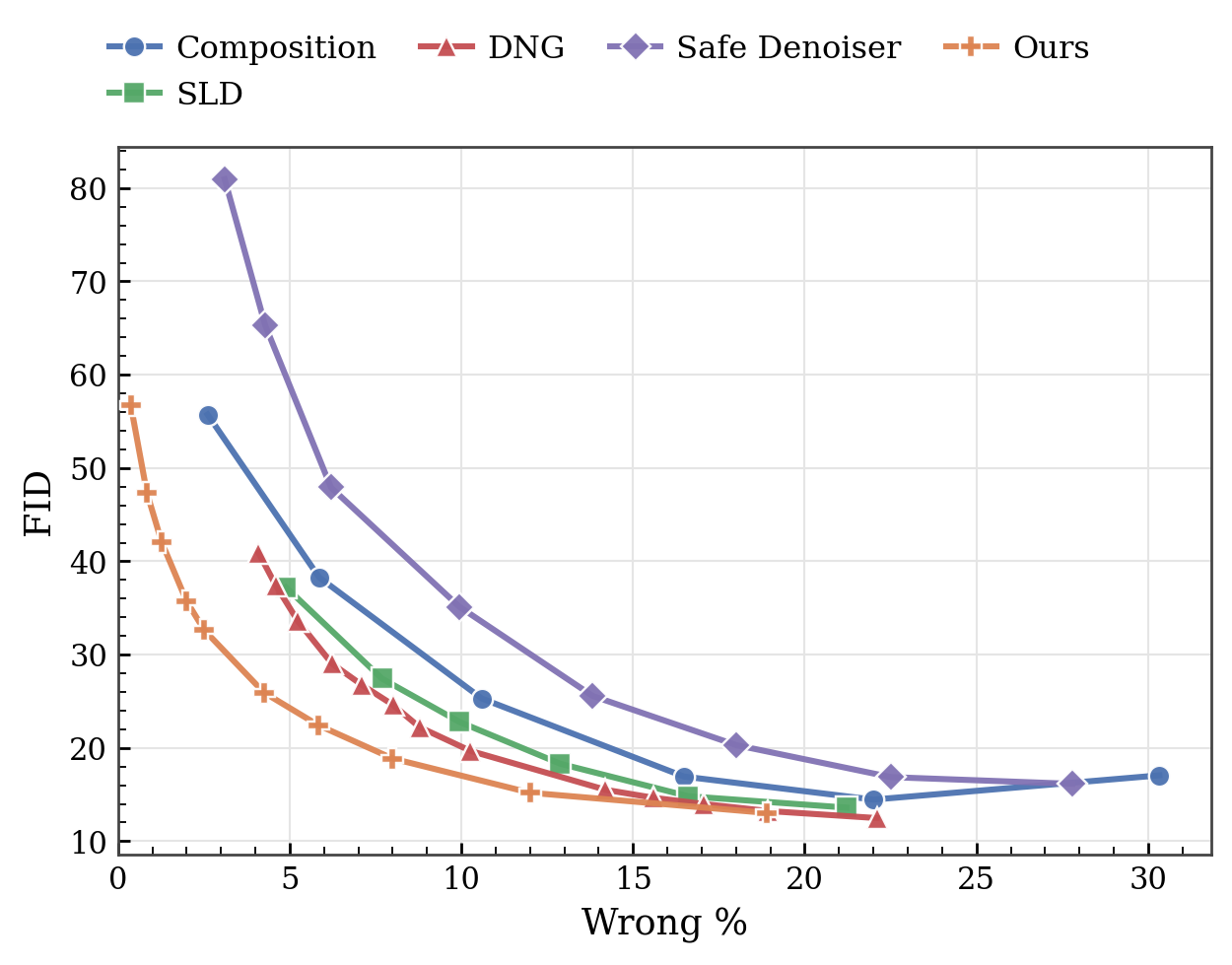}
        \caption{Keeping Classes 0,1,2}
        \label{fig:sub7}
    \end{subfigure}
    \hfill
    \begin{subfigure}{0.24\linewidth}
        \centering
        \includegraphics[width=\linewidth]{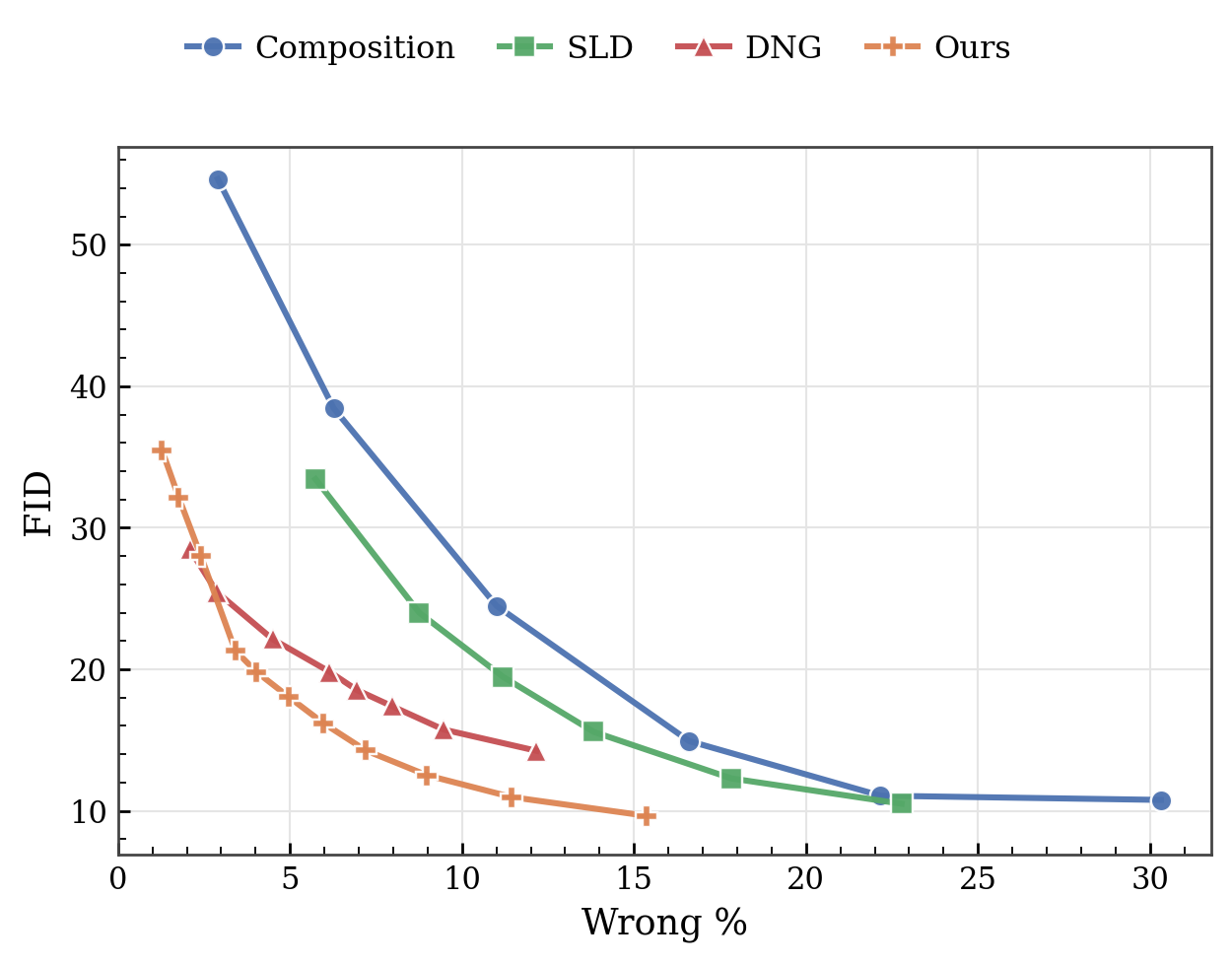}
        \caption{Keeping Classes 5,6,7\(^\dagger\)}
        \label{fig:sub8}
    \end{subfigure}

    \caption{CIFAR-10 specificity Pareto frontiers with \(N=10{,}240\)
    samples. Cols.~1--2 show class removal
    (\S\ref{sec:cifar10:cr}); Cols.~3--4 show multiclass selection
    (\S\ref{sec:cifar10:mc}). Bottom row: FID versus Wrong\%. Top row:
    KL versus Wrong\% for class removal and Recall versus Wrong\% for
    multiclass selection. \(^{\dagger}\)Safe Denoiser is omitted because
    its Wrong\% does not reach the plotted region across multiple kernel
    bandwidths; see App.~\ref{app:cifar-mc-details}.}
    \label{fig:two_by_four}
\end{figure}

We evaluate two CIFAR-10 specificity tasks. In class removal
(\S\ref{sec:cifar10:cr}), \(p^A\) is an unconditional generator and \(p^B\)
is a class-only generator trained on one forget class. In multiclass selection
(\S\ref{sec:cifar10:mc}), \(p^A\) and \(p^B\) are overlapping multiclass
generators, and the target is the \(p^A\)-only modes.

\subsubsection{Class Removal}
\label{sec:cifar10:cr}

We remove planes (c0) and automobiles (c1). For each method, we sweep the
guidance scale using \(N=10{,}240\) samples and report forget-class leak
(Wrong\%, ViT classifier), keep-class histogram KL to uniform, and FID to
the keep-class real subset. Baselines include Composition, Safe Latent
Diffusion (SLD) \citep{schramowski2023safe,koulischer2024dynamic}, DNG,
and SD. Full protocols and per-setting results are provided in
App.~\ref{app:cifar-cr-details}.

Our sampler gives the best Pareto trade-off for both removed classes
(Fig.~\ref{fig:two_by_four}). The gain is consistent with the target
design in \S\ref{sec:target}: DNG penalizes directly through
\(r(x)=p^B/p^A\), while our target uses \(r(x)^\lambda\), reducing
over-suppression near the positive--negative boundary. In contrast, SD
uses an effectively always-active repulsive correction, which leads to a
Wrong\% floor and larger keep-class distortion under stronger guidance.

\subsubsection{Multiclass Selection}
\label{sec:cifar10:mc}

We next consider overlapping five-class generators. Model \(p^A\) covers
\(\{0,1,2,3,4\}\), model \(p^B\) covers \(\{3,4,5,6,7\}\), and the goal is
to sample from \(A\setminus B\). The forward task keeps \(\{0,1,2\}\),
while the reverse task keeps \(\{5,6,7\}\). Because only three classes are
retained, we report Recall \citep{kynkaanniemi2019improved} together with
FID to measure both sample quality and mode coverage. Further details are
given in App.~\ref{app:cifar-mc-details}.

Our sampler achieves the strongest FID--Recall Pareto frontier in both
directions, with the largest gain on the harder forward task
(Fig.~\ref{fig:two_by_four}). The overlap between the two generators
widens the transition region where the class densities are comparable,
making excessive negative suppression more damaging. The powered
likelihood-ratio gate better preserves valid modes near this boundary.

\subsection{Text-to-Image}
\label{sec:t2i}

We evaluate text-to-image specificity with Stable Diffusion v1.4
\citep{rombach2022high} on the five-prompt DNG suite
\citep{koulischer2024dynamic}. Each positive prompt is paired with a
\emph{related} negative, testing suppression of semantically overlapping
undesired content, and an \emph{unrelated} negative, testing unnecessary
interference with valid generation. For each prompt and method, the same
hyperparameter setting is used for both negative types. Full prompts and
settings are given in Appendix~\ref{app:t2i-details}.

All methods use $100$ reverse steps. We compare against CFG with negative
prompting, DNG, Feynman--Kac Corrector (FKC), Feynman--Kac Steering (FKS),
DriftLite, and DAS. FKC, FKS, DriftLite, and DAS use our target distribution
to isolate the effect of proposal design. Each method is tuned for the
overall margin.

\begin{table}[ht]
    \centering
    \caption{Text-to-image specificity with SD~v1.4 and $100$ reverse
    steps, averaged over the 10 DNG prompt pairs. For each prompt and
    method, a single hyperparameter setting is selected and used for
    both related and unrelated negatives. CLIP$_+$ is similarity to the
    positive prompt ($\uparrow$), CLIP$_-$ is similarity to the active
    negative prompt ($\downarrow$), and margin is
    CLIP$_+ -$ CLIP$_-$ ($\uparrow$).}
    \label{tab:t2i-main}

    \resizebox{0.8\linewidth}{!}{
    \begin{tabular}{lccccccc}
        \toprule
        & \multicolumn{3}{c}{Related Negative}
        & \multicolumn{3}{c}{Unrelated Negative} & Overall\\
        \cmidrule(lr){2-4}
        \cmidrule(lr){5-7}
        \cmidrule(lr){8-8}
        Method
        & CLIP$_+$ & CLIP$_-$ & Margin
        & CLIP$_+$ & CLIP$_-$ & Margin
        & Margin\\
        \midrule
        Ours (Proposition~2)
        & 0.2722 & \textbf{0.1413} & \textbf{0.1309}
        & 0.2679 & 0.0655 & 0.2024
        & \textbf{0.1666} \\
        CFG
        & 0.2475 & 0.1473 & 0.1002
        & 0.2561 & \textbf{0.0632} & 0.1928 & 0.1465\\
        DNG
        & 0.2758 & 0.1766 & 0.0992
        & 0.2748 & 0.0812 & 0.1937 & 0.1465\\
        FKC
        & \textbf{0.2788} & 0.1652 & 0.1136
        & 0.2796 & 0.0822 & 0.1974 & 0.1555\\
        FK steering
        & 0.2695 & 0.1758 & 0.0937
        & 0.2727 & 0.0843 & 0.1884 & 0.1411\\
        DriftLite
        & 0.2700 & 0.1472 & 0.1228
        & 0.2652 & 0.0737 & 0.1915 & 0.1572\\
        DAS
        & 0.2721 & 0.1607 & 0.1114
        & \textbf{0.2800} & 0.0694 & \textbf{0.2107} & 0.1611\\
        \bottomrule
    \end{tabular}}
\end{table}

Table~\ref{tab:t2i-main} shows that our method achieves the largest
overall margin while maintaining competitive positive-prompt alignment.
For related negatives, it obtains both the lowest CLIP$_-$ and the largest
margin, improving over DriftLite, the second-best method, by $6.6\%$.
For unrelated negatives, it again gives the lowest CLIP$_-$ and the
second-highest margin, close to DAS. These results indicate stronger
selective suppression without uniformly weakening the positive prompt.

Figure~\ref{fig:t2i-collapse-compare} further shows that proposal
optimization prevents the severe visual collapse observed with direct
SMC while preserving image quality. Additional results are provided in
Appendix~\ref{app:t2i-details}.

\begin{figure}[H]
    \centering
    \includegraphics[width=0.5\linewidth]
    {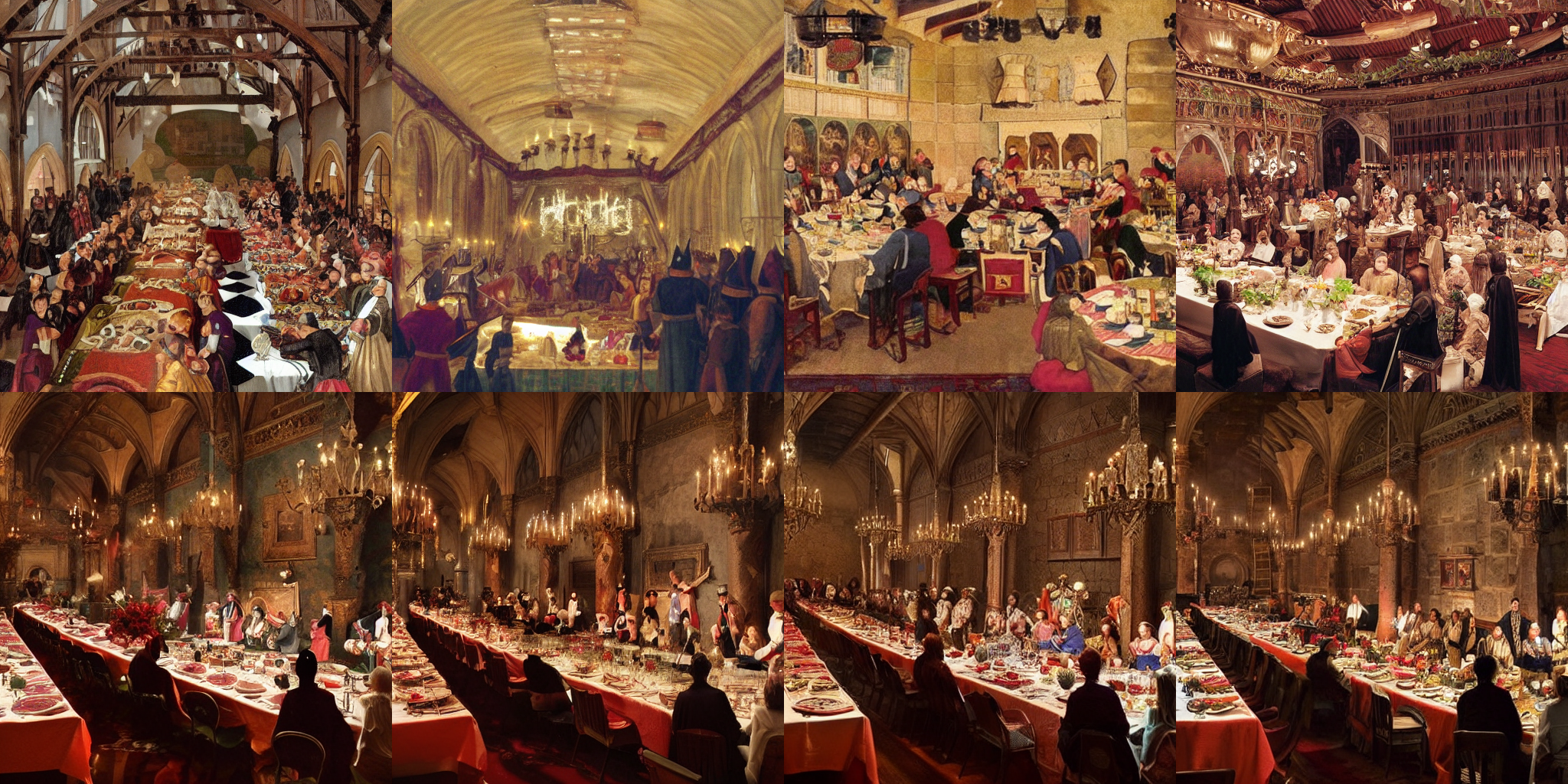}
    \caption{Samples for ``medieval feast.'' The first row uses proposal
    optimization and the second uses direct SMC. Proposal optimization
    substantially reduces visual collapse.}
    \label{fig:t2i-collapse-compare}
\end{figure}

\subsection{Specificity-Aware pMHC Binder Design}
\label{sec:binder-experiment}

We evaluate our method on a protein-design task that requires specificity
between two closely related targets. The desired target is an
HLA-A*02:01--$\beta_2$m complex presenting \texttt{NLVPMVATV}. The undesired
target has the same HLA-A*02:01--$\beta_2$m scaffold but presents
\texttt{NLVFMVATV}. The peptides differ only at position~4, where Pro in the
desired target is replaced by Phe in the undesired target. Thus, successful
discrimination must depend on the presented peptide rather than on a change
in the MHC scaffold.

We design an $80$-residue binder using the desired pMHC as the positive
condition and the undesired pMHC as the negative condition. The candidate and
propagated weights use the same retained Proposition~2 expansion. The
complete target construction, weight terms, clipping rules, and
hyperparameters are given in Appendix~\ref{app:binder-details}.

We use eight matched to generate 64 candidate binders. We then retain one inverse-folded sequence per backbone. Generation uses $200$
reverse-diffusion steps. Each sequence is modeled with both the desired and
undesired pMHC using Boltz-2. 

\paragraph{Evaluation metric.}
For each Boltz-2 prediction, we symmetrize the two entries of the
\texttt{pair\_chains\_iptm} matrix that correspond to the binder and peptide.
Let $a_{i,c,k}$ be this binder--peptide iPTM for binder $i$, condition $c$,
and prediction $k$. We average over the $K=3$ predictions for each condition and define the specificity margin
\begin{equation}
    A_{i,c}
    =
    \frac{1}{K}\sum_{k=1}^{K}a_{i,c,k}, \qquad \Delta_i
    =
    A_{i,\mathrm{desired}}
    -
    A_{i,\mathrm{undesired}}.
\end{equation}
It is a structural confidence measure and should not be
interpreted as an experimental binding-affinity measurement.

\paragraph{Specificity results.}
We compare our method with target-A-only BoltzGen, fixed contrastive guidance,
and Dynamic Negative Guidance (DNG). Every method is evaluated on $N=64$
binders generated from the same seed blocks and with the same
desired-minus-undesired specificity metric. All methods use three Boltz-2
predictions per binder and condition.

\begin{table}[h]
    \centering
    \caption{
    Specificity on the pMHC binder-design task. Here, $N$ is the number of
    evaluated binders, Q75 is the $75$th percentile, and $\Delta>0$ is the
    fraction of binders with a positive specificity margin. Higher values
    are better.
    }
    \label{tab:binder-specificity}
    \small
    \resizebox{0.7\linewidth}{!}{
    \begin{tabular}{lccccc}
        \toprule
        \textbf{Method}
        & $\boldsymbol{N}$
        & \textbf{Mean} $\uparrow$
        & \textbf{Median} $\uparrow$
        & \textbf{Q75} $\uparrow$
        & $\boldsymbol{\Delta>0}$ $\uparrow$ \\
        \midrule
        BoltzGen (target A only)
                         & 64 & 0.1786 & 0.1587 & 0.2862 & 87.5\% \\
        Fixed guidance   & 64 & 0.1654 & 0.1453 & 0.2833
                         & \textbf{93.8\%} \\
        DNG              & 64 & 0.1483 & 0.1581 & 0.2623 & 89.1\% \\
        \textbf{Ours (Proposition~2)}
                         & 64
                         & \textbf{0.1856}
                         & \textbf{0.1822}
                         & \textbf{0.3337}
                         & 84.4\% \\
        \bottomrule
    \end{tabular}
    }
\end{table}

Our method achieves the largest mean, median, and Q75 specificity margins.
Relative to the strongest baseline on these statistics, target-A-only
BoltzGen, it improves the mean by $3.9\%$, the median by $14.8\%$, and Q75 by
$16.6\%$. Fixed guidance has the largest fraction of positive margins. Thus,
our method improves the central tendency and upper quartile, although it
does not maximize the fraction of positive designs.

\paragraph{Structural example.}
Figure~\ref{fig:pmhc-design} shows predicted structures for a representative
binder with peptide-dependent recognition. Its mean binder--peptide iPTM is
\begin{equation}
    A_{\mathrm{desired}}=0.8673,
    \qquad
    A_{\mathrm{undesired}}=0.5174,
    \qquad
    \Delta=0.3499.
\end{equation}
The displayed desired-target prediction has placed the binder directly against the peptide, including the mutated
position. The matched undesired-target prediction places the binder away from the peptide. Across the three
predictions, the binder contacts the desired peptide in two cases and the
undesired peptide in none. The binder sequence and additional
details are provided in Appendix~\ref{app:binder-details}.

\begin{figure*}[t]
    \centering

    \begin{minipage}{0.62\textwidth}
        \centering

        \begin{subfigure}[c]{0.55\linewidth}
            \centering
            \includegraphics[width=\linewidth]{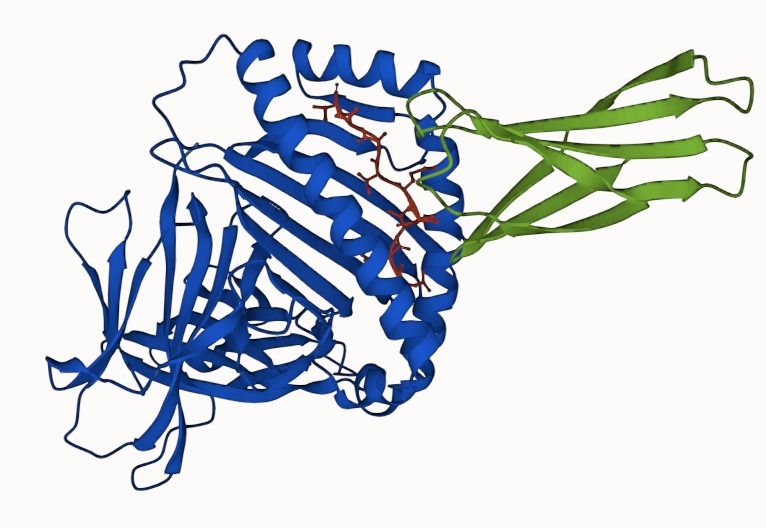}
            \caption{Desired pMHC presenting \texttt{NLVPMVATV}.}
            \label{fig:wanted}
        \end{subfigure}
        \hfill
        \begin{subfigure}[c]{0.41\linewidth}
            \centering
            \includegraphics[width=0.90\linewidth]{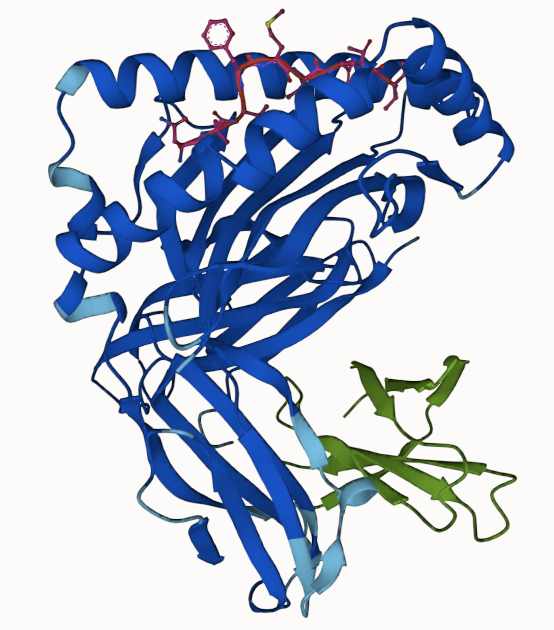}
            \caption{Undesired pMHC presenting \texttt{NLVFMVATV}.}
            \label{fig:unwanted}
        \end{subfigure}

    \end{minipage}
    \caption{
    Representative predicted complexes for a specificity-selective binder.
    The binder is shown in green, and the peptide is shown in red.
    The remaining chains form the shared HLA-A*02:01--$\beta_2$m scaffold.
    The desired target supports a peptide-adjacent binder pose, whereas the
    displayed prediction for the undesired P4F target places the binder away
    from the peptide.
    }
    \label{fig:pmhc-design}
\end{figure*}

\section{Conclusion}

We presented a specificity-aware diffusion steering framework that preserves a
desired distribution while suppressing an undesired one. We first derive an
explicit target from an overlap-based objective. Its softened likelihood-ratio
gate retains regions specific to the desired distribution without inducing the
global mode shift of standard negative guidance. We then construct an SMC sampler that locally minimizes the empirical relative variance of cumulative surrogate weights using the full displacement and one score Jacobian--vector product
per reference model.

Experiments on synthetic 40-MoG, CIFAR-10, text-to-image generation, and
pMHC binder design show improved target fidelity, specificity, and particle
diversity. The framework is
distributionally grounded, training-free, and applicable when desired and
undesired reference models are available.

\section{Limitations}

Our method assumes stochastic SDE inference and a local one-step
approximation; extending it to deterministic solvers, EDM
parameterizations, and finite-time flow maps is left for future work.
It also introduces application-specific hyperparameters and additional
compute from particle sampling with two reference models. Evaluation is
limited by the lack of a large-scale text-to-image specificity benchmark.
For binder design, we study only one peptide pair with a modest number of
candidates, and assess specificity using predicted iPTM rather than
experimental affinity. Larger-scale studies with experimental on- and
off-target validation are needed.

\section*{Acknowledgments}

We sincerely thank Prof. Tommi Jaakkola for valuable discussions and feedback on the formulation of the target distribution, the choice and interpretation of baselines, and the design about the sampling stability. His insights on variance reduction and target distribution design were particularly helpful in shaping the theoretical motivation and positioning of this work.

\newpage
\bibliographystyle{unsrtnat}
\bibliography{reference}

@book{delmoral2004feynman,
  title={Feynman-Kac Formulae: Genealogical and Interacting Particle Systems with Applications},
  author={Del Moral, Pierre},
  year={2004},
  publisher={Springer}
}

@article{liu2025design,
  title={Design of high-specificity binders for peptide--MHC-I complexes},
  author={Liu, Bingxu and Greenwood, Nathan F and Bonzanini, Julia E and Motmaen, Amir and Meyerberg, Jeremy and Dao, Tao and Xiang, Xinyu and Ault, Russell and Sharp, Jazmin and Wang, Chunyu and others},
  journal={Science},
  volume={389},
  number={6758},
  pages={386--391},
  year={2025},
  publisher={American Association for the Advancement of Science}
}

@article{householder2025novo,
  title={De novo design and structure of a peptide--centric TCR mimic binding module},
  author={Householder, Karsten D and Xiang, Xinyu and Jude, Kevin M and Deng, Arthur and Obenaus, Matthias and Zhao, Yang and Wilson, Steven C and Chen, Xiaojing and Wang, Nan and Garcia, K Christopher},
  journal={Science},
  volume={389},
  number={6758},
  pages={375--379},
  year={2025},
  publisher={American Association for the Advancement of Science}
}

@article{blondel2020learning,
  title={Learning with fenchel-young losses},
  author={Blondel, Mathieu and Martins, Andr{\'e} FT and Niculae, Vlad},
  journal={Journal of Machine Learning Research},
  volume={21},
  number={35},
  pages={1--69},
  year={2020}
}

@article{havranek2003automated,
  title={Automated design of specificity in molecular recognition},
  author={Havranek, James J and Harbury, Pehr B},
  journal={nature structural biology},
  volume={10},
  number={1},
  pages={45--52},
  year={2003},
  publisher={Nature Publishing Group US New York}
}

@article{yang2023facile,
  title={Facile repurposing of peptide--MHC-restricted antibodies for cancer immunotherapy},
  author={Yang, Xinbo and Nishimiya, Daisuke and L{\"o}chte, Sara and Jude, Kevin M and Borowska, Marta and Savvides, Christina S and Dougan, Michael and Su, Leon and Zhao, Xiang and Piehler, Jacob and others},
  journal={Nature biotechnology},
  volume={41},
  number={7},
  pages={932--943},
  year={2023},
  publisher={Nature Publishing Group US New York}
}

@article{johansen2025novo,
  title={De novo-designed pMHC binders facilitate T cell--mediated cytotoxicity toward cancer cells},
  author={Johansen, Kristoffer Haurum and Wolff, Darian Stephan and Scapolo, Beatrice and Fern{\'a}ndez-Quintero, Monica L and Risager Christensen, Charlotte and Loeffler, Johannes R and Rivera-de-Torre, Esperanza and Overath, Max D and Kj{\ae}rgaard Munk, Kamilla and Morell, Oliver and others},
  journal={Science},
  volume={389},
  number={6758},
  pages={380--385},
  year={2025},
  publisher={American Association for the Advancement of Science}
}

@article{yoon2026psi,
  title={{$\Psi$-Sampler}: Initial Particle Sampling for {SMC}-Based Inference-Time Reward Alignment in Score Models},
  author={Yoon, Taehoon and Min, Yunhong and Yeo, Kyeongmin and Sung, Minhyuk},
  journal={Advances in Neural Information Processing Systems},
  volume={38},
  pages={104745--104781},
  year={2026}
}

@inproceedings{ren2026driftlite,
  title={Driftlite: Lightweight drift control for inference-time scaling of diffusion models},
  author={Ren, Yinuo and Gao, Wenhao and Ying, Lexing and Rotskoff, Grant and Han, Jiequn},
  booktitle={International Conference on Learning Representations},
  volume={2026},
  pages={76664--76702},
  year={2026}
}

@article{baradat2020minimizing,
  title={Minimizing relative entropy of path measures under marginal constraints},
  author={Baradat, Aymeric and L{\'e}onard, Christian},
  journal={arXiv preprint arXiv:2001.10920},
  year={2020}
}

@article{huang2026art,
  title={ART for Diffusion Sampling: A Reinforcement Learning Approach to Timestep Schedule},
  author={Huang, Yilie and Tang, Wenpin and Zhou, Xunyu},
  journal={arXiv preprint arXiv:2601.18681},
  year={2026}
}

@article{he2025rne,
  title={RNE: plug-and-play diffusion inference-time control and energy-based training},
  author={He, Jiajun and Hern{\'a}ndez-Lobato, Jos{\'e} Miguel and Du, Yuanqi and Vargas, Francisco},
  journal={arXiv preprint arXiv:2506.05668},
  year={2025}
}

@article{koulischer2024dynamic,
  title={Dynamic negative guidance of diffusion models},
  author={Koulischer, Felix and Deleu, Johannes and Raya, Gabriel and Demeester, Thomas and Ambrogioni, Luca},
  journal={arXiv preprint arXiv:2410.14398},
  year={2024}
}

@article{kim2025training,
  title={Training-free safe denoisers for safe use of diffusion models},
  author={Kim, Mingyu and Kim, Dongjun and Yusuf, Amman and Ermon, Stefano and Park, Mijung},
  journal={arXiv preprint arXiv:2502.08011},
  year={2025}
}

@inproceedings{schramowski2023safe,
  title={Safe latent diffusion: Mitigating inappropriate degeneration in diffusion models},
  author={Schramowski, Patrick and Brack, Manuel and Deiseroth, Bj{\"o}rn and Kersting, Kristian},
  booktitle={Proceedings of the IEEE/CVF conference on computer vision and pattern recognition},
  pages={22522--22531},
  year={2023}
}

@article{delmoral2006sequential,
  title   = {Sequential Monte Carlo Samplers},
  author  = {Del Moral, Pierre and Doucet, Arnaud and Jasra, Ajay},
  journal = {Journal of the Royal Statistical Society: Series B (Statistical Methodology)},
  volume  = {68},
  number  = {3},
  pages   = {411--436},
  year    = {2006},
  doi     = {10.1111/j.1467-9868.2006.00553.x}
}

@inproceedings{skreta2025feynman,
  title     = {Feynman-Kac Correctors in Diffusion: Annealing, Guidance, and Product of Experts},
  author    = {Skreta, Marta and Akhound-Sadegh, Tara and Ohanesian, Viktor and Bondesan, Roberto and Aspuru-Guzik, Alan and Doucet, Arnaud and Brekelmans, Rob and Tong, Alexander and Neklyudov, Kirill},
  booktitle = {Proceedings of the 42nd International Conference on Machine Learning},
  series    = {Proceedings of Machine Learning Research},
  volume    = {267},
  pages     = {55906--55949},
  publisher = {PMLR},
  year      = {2025},
  url       = {https://proceedings.mlr.press/v267/skreta25a.html}
}

@inproceedings{sohl2015deep,
  title     = {Deep Unsupervised Learning using Nonequilibrium Thermodynamics},
  author    = {Sohl-Dickstein, Jascha and Weiss, Eric A. and Maheswaranathan, Niru and Ganguli, Surya},
  booktitle = {Proceedings of the 32nd International Conference on Machine Learning},
  pages     = {2256--2265},
  year      = {2015},
  series    = {Proceedings of Machine Learning Research},
  volume    = {37},
  publisher = {PMLR},
  url       = {https://proceedings.mlr.press/v37/sohl-dickstein15.html}
}

@inproceedings{ho2020denoising,
  title     = {Denoising Diffusion Probabilistic Models},
  author    = {Ho, Jonathan and Jain, Ajay and Abbeel, Pieter},
  booktitle = {Advances in Neural Information Processing Systems},
  volume    = {33},
  pages     = {6840--6851},
  year      = {2020}
}

@inproceedings{song2021score,
  title     = {Score-Based Generative Modeling through Stochastic Differential Equations},
  author    = {Song, Yang and Sohl-Dickstein, Jascha and Kingma, Diederik P. and Kumar, Abhishek and Ermon, Stefano and Poole, Ben},
  booktitle = {International Conference on Learning Representations},
  year      = {2021},
  url       = {https://openreview.net/forum?id=PxTIG12RRHS}
}

@inproceedings{dhariwal2021diffusion,
  title     = {Diffusion Models Beat {GANs} on Image Synthesis},
  author    = {Dhariwal, Prafulla and Nichol, Alexander Quinn},
  booktitle = {Advances in Neural Information Processing Systems},
  volume    = {34},
  pages     = {8780--8794},
  year      = {2021}
}

@inproceedings{ho2022classifierfree,
  title     = {Classifier-Free Diffusion Guidance},
  author    = {Ho, Jonathan and Salimans, Tim},
  booktitle = {NeurIPS Workshop on Deep Generative Models and Downstream Applications},
  year      = {2021},
  url       = {https://openreview.net/forum?id=qw8AKxfYbI}
}

@inproceedings{song2020denoising,
  title     = {Denoising Diffusion Implicit Models},
  author    = {Song, Jiaming and Meng, Chenlin and Ermon, Stefano},
  booktitle = {International Conference on Learning Representations},
  year      = {2021},
  url       = {https://openreview.net/forum?id=St1giarCHLP}
}

@inproceedings{karras2022elucidating,
  title     = {Elucidating the Design Space of Diffusion-Based Generative Models},
  author    = {Karras, Tero and Aittala, Miika and Aila, Timo and Laine, Samuli},
  booktitle = {Advances in Neural Information Processing Systems},
  volume    = {35},
  year      = {2022}
}

@article{armandpour2023perpneg,
  title   = {Re-Imagine the Negative Prompt Algorithm: Transform 2D Diffusion into 3D, Alleviate Janus Problem and Beyond},
  author  = {Armandpour, Mohammadreza and Sadeghian, Ali and Zheng, Huangjie and Sadeghian, Amir and Zhou, Mingyuan},
  journal = {arXiv preprint arXiv:2304.04968},
  year    = {2023}
}

@book{doucet2001sequential,
  title     = {Sequential Monte Carlo Methods in Practice},
  editor    = {Doucet, Arnaud and de Freitas, Nando and Gordon, Neil},
  publisher = {Springer},
  year      = {2001},
  doi       = {10.1007/978-1-4757-3437-9}
}

@inproceedings{kim2025testtime,
  title     = {Test-Time Alignment of Diffusion Models without Reward Over-Optimization},
  author    = {Kim, Sunwoo and Kim, Minkyu and Park, Dongmin},
  booktitle = {International Conference on Learning Representations},
  year      = {2025},
  url       = {https://openreview.net/forum?id=vi3DjUhFVm}
}

@inproceedings{kim2025tuning,
  title     = {Tuning Sequential {M}onte {C}arlo Samplers via Greedy Incremental Divergence Minimization},
  author    = {Kim, Kyurae and Xu, Zuheng and Gardner, Jacob R. and Campbell, Trevor},
  booktitle = {Proceedings of the 42nd International Conference on Machine Learning},
  pages     = {30826--30878},
  year      = {2025},
  editor    = {Singh, Aarti and Fazel, Maryam and Hsu, Daniel and Lacoste-Julien, Simon and Berkenkamp, Felix and Maharaj, Tegan and Wagstaff, Kiri and Zhu, Jerry},
  volume    = {267},
  series    = {Proceedings of Machine Learning Research},
  publisher = {PMLR},
  url       = {https://proceedings.mlr.press/v267/kim25ak.html}
}

@article{ou2026inferencetime,
  title   = {Inference-Time Scaling of Discrete Diffusion Models via Importance Weighting and Optimal Proposal Design},
  author  = {Ou, Zijing and Pani, Chinmay and Li, Yingzhen},
  journal = {arXiv preprint arXiv:2505.22524},
  year    = {2025}
}

@article{berner2024optimal,
  title   = {An Optimal Control Perspective on Diffusion-Based Generative Modeling},
  author  = {Berner, Julius and Richter, Lorenz and Ullrich, Karen},
  journal = {Transactions on Machine Learning Research},
  year    = {2024},
  url     = {https://openreview.net/forum?id=oYIjw37pTP}
}

@article{neal2001annealed,
  title   = {Annealed Importance Sampling},
  author  = {Neal, Radford M.},
  journal = {Statistics and Computing},
  volume  = {11},
  pages   = {125--139},
  year    = {2001}
}

@misc{kynkaanniemi2019improved,
      title={Improved Precision and Recall Metric for Assessing Generative Models}, 
      author={Tuomas Kynkäänniemi and Tero Karras and Samuli Laine and Jaakko Lehtinen and Timo Aila},
      year={2019},
      eprint={1904.06991},
      archivePrefix={arXiv},
      primaryClass={stat.ML},
      url={https://arxiv.org/abs/1904.06991}, 
}

@misc{radford2021learning,
      title={Learning Transferable Visual Models From Natural Language Supervision}, 
      author={Alec Radford and Jong Wook Kim and Chris Hallacy and Aditya Ramesh and Gabriel Goh and Sandhini Agarwal and Girish Sastry and Amanda Askell and Pamela Mishkin and Jack Clark and Gretchen Krueger and Ilya Sutskever},
      year={2021},
      eprint={2103.00020},
      archivePrefix={arXiv},
      primaryClass={cs.CV},
      url={https://arxiv.org/abs/2103.00020}, 
}

@inproceedings{rombach2022high,
  title={High-resolution image synthesis with latent diffusion models},
  author={Rombach, Robin and Blattmann, Andreas and Lorenz, Dominik and Esser, Patrick and Ommer, Bj{\"o}rn},
  booktitle={Proceedings of the IEEE/CVF conference on computer vision and pattern recognition},
  pages={10684--10695},
  year={2022}
}

@article{naesseth2019elements,
  title={Elements of sequential monte carlo},
  author={Naesseth, Christian A and Lindsten, Fredrik and Sch{\"o}n, Thomas B},
  journal={Foundations and Trends{\textregistered} in Machine Learning},
  volume={12},
  number={3},
  pages={187--306},
  year={2019},
  publisher={Emerald Publishing Limited}
}

@article{uehara2025inference,
  title={Inference-time alignment in diffusion models with reward-guided generation: Tutorial and review},
  author={Uehara, Masatoshi and Zhao, Yulai and Wang, Chenyu and Li, Xiner and Regev, Aviv and Levine, Sergey and Biancalani, Tommaso},
  journal={arXiv preprint arXiv:2501.09685},
  year={2025}
}

@inproceedings{liu2023more,
  title={More control for free! image synthesis with semantic diffusion guidance},
  author={Liu, Xihui and Park, Dong Huk and Azadi, Samaneh and Zhang, Gong and Chopikyan, Arman and Hu, Yuxiao and Shi, Humphrey and Rohrbach, Anna and Darrell, Trevor},
  booktitle={Proceedings of the IEEE/CVF winter conference on applications of computer vision},
  pages={289--299},
  year={2023}
}

@article{kim2025comprehensive,
  title={A comprehensive survey on concept erasure in text-to-image diffusion models},
  author={Kim, Changhoon and Qi, Yanjun},
  journal={arXiv preprint arXiv:2502.14896},
  year={2025}
}

@article{armandpour2023reimagine,
  title={Re-imagine the Negative Prompt Algorithm: Transform 2D Diffusion into 3D, Alleviate Janus Problem and Beyond},
  author={Armandpour, Mohammadreza and Sadeghian, Ali and Zheng, Huangjie and Sadeghian, Amir and Zhou, Mingyuan},
  journal={arXiv preprint arXiv:2304.04968},
  year={2023}
}

@article{lee2024contrastivecfg,
  title={Contrastive CFG: Improving CFG in Diffusion Models by Contrasting Positive and Negative Concepts},
  author={Lee, Minwoo and others},
  journal={arXiv preprint arXiv:2411.17077},
  year={2024}
}

@inproceedings{dinh2023rethinking,
  title={Rethinking Conditional Diffusion Sampling with Progressive Guidance},
  author={Dinh, Anh-Dung and Liu, Daochang and Xu, Chang},
  booktitle={Advances in Neural Information Processing Systems},
  year={2023}
}

@inproceedings{chung2025cfgpp,
  title={{CFG++}: Manifold-constrained Classifier Free Guidance for Diffusion Models},
  author={Chung, Hyungjin and Kim, Jeongsol and Park, Geon Yeong and Nam, Hyelin and Ye, Jong Chul},
  booktitle={International Conference on Learning Representations},
  year={2025}
}

@inproceedings{liu2022compositional,
  title={Compositional visual generation with composable diffusion models},
  author={Liu, Nan and Li, Shuang and Du, Yilun and Torralba, Antonio and Tenenbaum, Joshua B},
  booktitle={European conference on computer vision},
  pages={423--439},
  year={2022},
  organization={Springer}
}

@article{lipman2022flow,
  title={Flow matching for generative modeling},
  author={Lipman, Yaron and Chen, Ricky TQ and Ben-Hamu, Heli and Nickel, Maximilian and Le, Matt},
  journal={arXiv preprint arXiv:2210.02747},
  year={2022}
}

\newpage
\clearpage
\appendix

\startcontents[appendices]
\printcontents[appendices]{}{1}{
    \section*{Appendix Contents}
    \setcounter{tocdepth}{2}
}

\clearpage

\section{Target Distribution Design}
\label{sec:target}

\subsection{Proof and Interpretation of the Target Distribution}
\label{soften}

\propone*

\begin{proof}
We first prove the hard-target result. For fixed $x$, write
\[
    a=p^A(x),\qquad
    b=p^B(x),\qquad
    t=q(x),
\]
and define
\[
    F_x(t)
    =
    \min\{t,a\}
    -
    \gamma\min\{t,b\}.
\]
Then
\[
    J(q)=\int F_x(q(x))\,dx.
\]

For $\gamma>1$, direct inspection gives
\[
    \sup_{t\geq0}F_x(t)
    =
    (a-\gamma b)_+.
\]
When $a>\gamma b$, the supremum is attained for $t\geq a$; when
$a<\gamma b$, it is attained only at $t=0$. Hence
\[
    J(q)
    \leq
    \int_{\mathcal A_\gamma}
    \bigl(p^A(x)-\gamma p^B(x)\bigr)\,dx,
\]
where
\[
    \mathcal A_\gamma
    =
    \{x:p^A(x)>\gamma p^B(x)\}.
\]

Let
\[
    Z_A
    =
    \int_{\mathcal A_\gamma}p^A(x)\,dx.
\]
Since $Z_A>0$,
\[
    q^*(x)
    =
    \frac{
        p^A(x)\mathbf 1_{\mathcal A_\gamma}(x)
    }{
        Z_A
    }
\]
satisfies $q^*(x)\geq p^A(x)$ on $\mathcal A_\gamma$ and vanishes
elsewhere, so it attains the upper bound and maximizes $J$.

Now consider any maximizer $q$ with
$D_{\mathrm{KL}}(q\|p^A)<\infty$. Pointwise optimality and the
zero-$p^A$-mass boundary assumption imply that
\[
    q(x)
    =
    p^A(x)r(x)\mathbf 1_{\mathcal A_\gamma}(x),
\]
where
\[
    r(x)\geq1,
    \qquad
    \int_{\mathcal A_\gamma}p^A(x)r(x)\,dx=1.
\]
Therefore,
\[
    D_{\mathrm{KL}}(q\|p^A)
    =
    \int_{\mathcal A_\gamma}
    p^A(x)r(x)\log r(x)\,dx.
\]

Define
\[
    d\nu(x)
    =
    \frac{
        p^A(x)\mathbf 1_{\mathcal A_\gamma}(x)
    }{
        Z_A
    }\,dx.
\]
Then
\[
    \mathbb E_\nu[r]=\frac{1}{Z_A}.
\]
Since $u\log u$ is strictly convex, Jensen's inequality yields
\[
    D_{\mathrm{KL}}(q\|p^A)
    =
    Z_A\mathbb E_\nu[r\log r]
    \geq
    \log\frac{1}{Z_A},
\]
with equality if and only if
\[
    r(x)=\frac{1}{Z_A}
\]
almost everywhere. Thus $q^*$ is the unique KL-minimizing maximizer.

We next derive the soft relaxation. Define
\[
    s(x)
    =
    \log\frac{p^A(x)}{p^B(x)}
    -
    \log\gamma.
\]
The hard retention decision is
\[
    \mathbf 1\{s(x)>0\}.
\]
Let $\pi(x)\in[0,1]$ denote a soft retention probability and consider
\begin{equation}
    \max_{\pi\in[0,1]}
    \left\{
        \pi s(x)
        +
        \frac{1}{\eta}H_{\mathrm B}(\pi)
    \right\},
    \qquad
    \eta>0,
    \label{eq:pointwise_entropy_gate}
\end{equation}
where
\[
    H_{\mathrm B}(\pi)
    =
    -\pi\log\pi-(1-\pi)\log(1-\pi).
\]
The objective is strictly concave, and its first-order condition gives
\[
    \log\frac{\pi}{1-\pi}
    =
    \eta s(x).
\]
Hence the unique optimizer is
\begin{equation}
    r_\eta(x)
    =
    \sigma(\eta s(x))
    =
    \frac{
        p^A(x)^\eta
    }{
        p^A(x)^\eta+\gamma^\eta p^B(x)^\eta
    }.
    \label{eq:entropy_optimal_gate}
\end{equation}
Thus $\eta$ controls the sharpness of the likelihood-ratio boundary
\citep{blondel2020learning}.

Our powered target is
\begin{equation}
    q_{\eta,m}(x)
    =
    \frac{
        p^A(x)r_\eta(x)^m
    }{
        Z_{\eta,m}
    },
    \qquad
    Z_{\eta,m}
    =
    \int p^A(u)r_\eta(u)^m\,du.
    \label{eq:powered_soft_target}
\end{equation}
The exponent $m$ controls the strength of the reweighting without
changing the likelihood-ratio boundary.

To characterize this target variationally, observe that
\[
    \log\frac{q_{\eta,m}(x)}{p^A(x)}
    =
    m\log r_\eta(x)
    -
    \log Z_{\eta,m}.
\]
Therefore, for any $q\ll p^A$,
\[
\begin{aligned}
    D_{\mathrm{KL}}(q\|q_{\eta,m})
    &=
    D_{\mathrm{KL}}(q\|p^A)
    -
    m\mathbb E_q[\log r_\eta(X)]
    +
    \log Z_{\eta,m}.
\end{aligned}
\]
Equivalently,
\[
    \mathbb E_q[\log r_\eta(X)]
    -
    \frac{1}{m}D_{\mathrm{KL}}(q\|p^A)
    =
    \frac{1}{m}\log Z_{\eta,m}
    -
    \frac{1}{m}D_{\mathrm{KL}}(q\|q_{\eta,m}).
\]
Since KL divergence is nonnegative, $q_{\eta,m}$ is the unique solution
of
\begin{equation}
    \max_{q\ll p^A}
    \left\{
        \mathbb E_q[\log r_\eta(X)]
        -
        \frac{1}{m}D_{\mathrm{KL}}(q\|p^A)
    \right\}.
    \label{eq:powered_gate_variational}
\end{equation}
Thus $m$ controls the trade-off between retention confidence and
proximity to $p^A$.

Finally, for every fixed $m>0$ and $s(x)\neq0$,
\[
    r_\eta(x)^m
    =
    \sigma(\eta s(x))^m
    \longrightarrow
    \mathbf 1\{s(x)>0\}
\]
as $\eta\to\infty$. Since $0\leq r_\eta(x)^m\leq1$, dominated
convergence gives
\[
    Z_{\eta,m}
    \longrightarrow
    \int p^A(x)\mathbf 1\{s(x)>0\}\,dx.
\]
Under the assumptions of Proposition~\ref{prop:1},
\[
    q_{\eta,m}
    \longrightarrow
    q^*
\]
in total variation. Hence every fixed $m>0$ preserves the same hard
target.
\end{proof}

\subsection{Relation to Existing Negative Guidance}
\label{app:related-targets}

\paragraph{Dynamic Negative Guidance.}

Dynamic Negative Guidance (DNG) \citep{koulischer2024dynamic} uses the
negative-class posterior. With
\[
    R(x)
    =
    \frac{p^B(x)}{p^A(x)},
    \qquad
    c=\frac{p(B)}{p(A)},
\]
and DNG exponent $\rho>0$, its reweighting is
\[
    g_{\mathrm{DNG}}(x)
    =
    \bigl(1+cR(x)\bigr)^{-\rho}.
\]
Our powered gate is
\begin{equation}
    g_{\mathrm{ours}}(x)
    =
    r_\eta(x)^m
    =
    \left[
        1+\bigl(\gamma R(x)\bigr)^\eta
    \right]^{-m}.
    \label{eq:ours_ratio_weight}
\end{equation}

After matching the threshold and power,
\[
    c=\gamma,
    \qquad
    \rho=m,
\]
DNG is exactly the special case $\eta=1$:
\[
    g_{\mathrm{DNG}}(x)
    =
    g_{\mathrm{ours}}(x)
    \qquad
    \text{when }\eta=1.
\]
Our formulation therefore separates two effects that are coupled in
DNG: $\eta$ controls the sharpness of the likelihood-ratio transition,
whereas $m$ controls the strength of the reweighting.

For $\eta>1$, writing $u=\gamma R(x)$ gives
\[
    u<1
    \;\Longrightarrow\;
    g_{\mathrm{ours}}>g_{\mathrm{DNG}},
    \qquad
    u>1
    \;\Longrightarrow\;
    g_{\mathrm{ours}}<g_{\mathrm{DNG}}.
\]
Thus the proposed gate preserves the same boundary $u=1$ while producing
a sharper distinction between positive- and negative-dominant regions.

\paragraph{Safe Denoiser.}

Safe Denoiser \citep{kim2025training} also modifies the base score along
the contrastive direction
\[
    s_t^A(x)-s_t^B(x),
\]
but uses an unsafe-density-driven coefficient.

For our target,
\[
    q_t^{\eta,m}(x)
    \propto
    p_t^A(x)
    \bigl(1-\theta_t(x)\bigr)^m,
\]
where
\[
    \theta_t(x)
    =
    \frac{
        \gamma^{\eta_t}p_t^B(x)^{\eta_t}
    }{
        p_t^A(x)^{\eta_t}
        +
        \gamma^{\eta_t}p_t^B(x)^{\eta_t}
    }.
\]
Differentiating gives
\begin{equation}
    \nabla_x\log q_t^{\eta,m}(x)
    =
    s_t^A(x)
    +
    m\eta_t\theta_t(x)
    \bigl(
        s_t^A(x)-s_t^B(x)
    \bigr).
    \label{eq:powered_target_score}
\end{equation}

Hence both methods use the same contrastive score direction, but weight
it differently. Safe Denoiser uses a weight based primarily on the
absolute unsafe density, whereas our coefficient
\[
    m\eta_t\theta_t(x)
\]
is determined by the relative likelihood of the positive and negative
distributions. The parameter $\eta_t$ controls where the correction
becomes active, while $m$ controls its overall strength.

\section{Main Algorithm Derivation}

We use the notation of Section~\ref{sec:sampler} and write
$\boldsymbol\rho=(\rho_1,\rho_2)$. The operational Gaussian weight is
$\omega_{\boldsymbol\rho}^{\mathrm G}$, its fixed-displacement approximation
is $\widehat\omega_{\boldsymbol\rho}$, and the ideal normalized RN weight is
denoted by $\omega_{\boldsymbol\rho}^{\mathrm{RN}}$ when needed. These
objects are kept separate throughout.

\subsection{Local Gaussian Weight and Proposal Expansion}
\label{prop2proof}

\paragraph{Setup and regularity.}
Let $\tau=t+h$, $\Delta y=x-y$, and $\boldsymbol\rho$ range over a fixed
finite candidate set. Each candidate is held fixed on $[t,\tau]$.
The reference and proposal fields are smooth in time and space, with
polynomially bounded derivatives through the orders used below
(two in time and three in space suffice). The schedules $s,c,\eta,m$
and their reciprocals are bounded on this interval, and the gate logits
have the required moments. We assume sufficiently high moments of the
current state and of the local fields so that Taylor remainders are
$O(h^{3/2})$ in $L^2$, uniformly over candidates. For fixed states, the
corresponding statements hold conditionally on those states. These
assumptions concern stochastic steps with $s>0$. Here
$O_{L^2}(h^a)$ denotes a remainder with $L^2$ norm $O(h^a)$.

Unless shown otherwise, quantities are evaluated at $(\tau,y)$. Write
\[
    d=d_\tau(y),\quad g=\zeta d,\quad
    \zeta=m_\tau\eta_\tau\theta,\quad
    \chi=m_\tau\eta_\tau^2\theta(1-\theta),\quad
    u=\rho_1d+\rho_2g,\quad v=u-g.
\]
The local score-based derivatives are
\begin{equation}
    \nabla\theta=-\eta_\tau\theta(1-\theta)d,
    \qquad
    Jg=\zeta Jd-\chi dd^\top,
    \qquad
    Ju=(\rho_1+\rho_2\zeta)Jd-\rho_2\chi dd^\top.
    \label{eq:app-local-field-derivatives}
\end{equation}
For the exact gate, these follow by differentiation. With an estimated
current gate, they specify the smooth local extension used in the proposal
expansion; they do not differentiate an estimator through its sampling
history. A different derivative convention requires a corresponding
change to the local expansion.

\paragraph{Gaussian gate contribution.}
The current gate value $\theta\in(0,1)$ is supplied by the gate evaluation.
Because the reference forward kernels coincide, their contributions
cancel in the A/B contrast. For the Gaussian kernels in
\eqref{eq:local-gaussian-kernels}, define
\begin{equation}
\begin{aligned}
    D^{\mathrm G}(x,y)
    &:=\log\frac{L_\tau^{\nu^B}(x\mid y)}
                    {L_\tau^{\nu^A}(x\mid y)}
      =h\alpha_{AB}-d^\top\Delta y,\\
    \alpha_{AB}
    &:=\frac{\|\nu_\tau^A(y)\|^2-\|\nu_\tau^B(y)\|^2}{2s_\tau}.
\end{aligned}
\label{eq:app-gaussian-contrast}
\end{equation}
The equality is exact for these equal-covariance Gaussian kernels: it
follows by expanding the difference of their squared residuals.

The resulting change in the gate log odds is
\begin{equation}
    S^{\mathrm G}
    :=\log\frac{c_t}{c_\tau}
    +\left(\frac{\eta_t}{\eta_\tau}-1\right)
       \left(\log\frac{\theta}{1-\theta}-\log c_\tau\right)
    +\eta_tD^{\mathrm G}.
    \label{eq:app-gate-logit-increment}
\end{equation}
Multiplying the current gate odds by $e^{S^{\mathrm G}}$ gives the
one-step gate contribution used in the main text:
\begin{equation}
    \Delta\mathcal R^{\mathrm G}(x,y)
    :=-m_t\log\left[1+\theta(e^{S^{\mathrm G}}-1)\right]
      +(m_t-m_\tau)\log(1-\theta).
    \label{eq:app-gaussian-gate-contribution}
\end{equation}
For fixed $m$, the last term vanishes. This defines a local weight
contribution conditioned on the current gate; it does not prescribe a
recursive estimator for later gate values.

Expanding the schedules at $\tau$ gives
\begin{equation}
\begin{aligned}
    S^{\mathrm G}
    &=h\alpha_S-\eta_\tau d^\top\Delta y+O_{L^2}(h^{3/2}),\\
    \alpha_S
    &:=\eta_\tau\alpha_{AB}-\partial_\tau\log c_\tau
    -\frac{\dot\eta_\tau}{\eta_\tau}
       \left(\log\frac{\theta}{1-\theta}-\log c_\tau\right).
\end{aligned}
\label{eq:app-gate-scalar-coefficients}
\end{equation}
Using
$\log[1+\theta(e^S-1)]
=\theta S+\tfrac12\theta(1-\theta)S^2+O(S^3)$, we obtain
\begin{equation}
\begin{aligned}
    \Delta\mathcal R^{\mathrm G}
    &=h\alpha_{\mathcal R}+g^\top\Delta y
      -\frac{\chi}{2}(d^\top\Delta y)^2
      +O_{L^2}(h^{3/2}),\\
    \alpha_{\mathcal R}
    &:=-m_\tau\theta\alpha_S-\dot m_\tau\log(1-\theta).
\end{aligned}
\label{eq:app-gate-local-expansion}
\end{equation}
Setting $\dot m_\tau=0$ recovers the fixed-$m$ coefficient.

\begin{proof}[Proof of Proposition~\ref{prop2}]
Let $a=a_{\boldsymbol\rho}(y)$ and define the kernel contribution
\[
    \mathcal K_{\boldsymbol\rho}
    :=\log\frac{L_\tau^{\nu^A}}{K_t^{\mu^A}}
      -\log\frac{L_\tau^a}{K_t^{b^{\boldsymbol\rho}}},
\]
with all kernels evaluated at $(x,y)$. Their common covariance terms
cancel. Since
$b_t^{\boldsymbol\rho}(x)=\mu_t^A(x)+s_tu_t^{\boldsymbol\rho}(x)$,
exact Gaussian algebra gives
\begin{align}
    \mathcal K_{\boldsymbol\rho}
    &=\left[\frac{a-\nu_\tau^A(y)}{s_\tau}
             -u_t^{\boldsymbol\rho}(x)\right]^\top\Delta y
    \nonumber\\
    &\quad+h\left[
       -\mu_t^A(x)^\top u_t^{\boldsymbol\rho}(x)
       -\frac{s_t}{2}\|u_t^{\boldsymbol\rho}(x)\|^2
       +\frac{\|a\|^2-\|\nu_\tau^A(y)\|^2}{2s_\tau}
    \right].
    \label{eq:app-exact-kernel-difference}
\end{align}
Under the backward Gaussian proposal, $\Delta y=O_p(h^{1/2})$. Expanding
$u_t^{\boldsymbol\rho}(y+\Delta y)
=u+Ju\,\Delta y+O(h+\|\Delta y\|^2)$ therefore yields
\begin{equation}
    \mathcal K_{\boldsymbol\rho}
    =h\alpha_{\mathrm{ker}}
     +\left[-u+\frac{a-\nu_\tau^A}{s_\tau}\right]^\top\Delta y
     -\Delta y^\top Ju\,\Delta y
     +O_{L^2}(h^{3/2}),
    \label{eq:app-kernel-expansion}
\end{equation}
where
\[
    \alpha_{\mathrm{ker}}
    =-(\mu_t^A)^\top u-\frac{s_t}{2}\|u\|^2
     +\frac{\|a\|^2-\|\nu_\tau^A\|^2}{2s_\tau}.
\]
Adding \eqref{eq:app-gate-local-expansion}, the linear coefficient becomes
\[
    g-u+\frac{a-\nu_\tau^A}{s_\tau}.
\]
It vanishes for $a=a_{\boldsymbol\rho}^\star=\nu_\tau^A+s_\tau v$.
Moreover,
\[
    \frac{\|a_{\boldsymbol\rho}^\star\|^2-\|\nu_\tau^A\|^2}{2s_\tau}
    =(\nu_\tau^A)^\top v+\frac{s_\tau}{2}\|v\|^2.
\]
Thus the remaining coefficients are
\begin{equation}
\begin{aligned}
    \alpha_{\boldsymbol\rho}
    &=\alpha_{\mathcal R}-(\mu_t^A)^\top u
      -\frac{s_t}{2}\|u\|^2
      +(\nu_\tau^A)^\top v+\frac{s_\tau}{2}\|v\|^2,\\
    \Gamma_{\boldsymbol\rho}
    &=-Ju-\frac{\chi}{2}dd^\top
      =-(\rho_1+\rho_2\zeta)Jd
       +\chi\left(\rho_2-\frac12\right)dd^\top.
\end{aligned}
\label{eq:app-matched-coefficients}
\end{equation}
These are \eqref{eq:app-matched-coefficients} and \eqref{eq:two-field-gamma}.
Consequently,
\[
    \log\omega_{\boldsymbol\rho}^{\mathrm G}
    =h\alpha_{\boldsymbol\rho}
     +\Delta y^\top\Gamma_{\boldsymbol\rho}\Delta y
     +O_{L^2}(h^{3/2}).
\]
The retained terms are $O_p(h)$, and the remainder is uniform over the
fixed finite candidate set.
\end{proof}

\paragraph{Fixed-displacement evaluation and same-noise selection.}
For fixed $\rho_1$, let $\Delta y_0$ be defined by
\eqref{eq:fixed-displacement}. Then
$\Delta y_{\rho_2}=\Delta y_0+\delta_{\rho_2}$, where
$\delta_{\rho_2}=-hs_\tau\rho_2g=O(h)$. Hence
\begin{align}
    &\Delta y_{\rho_2}^\top\Gamma_{\boldsymbol\rho}\Delta y_{\rho_2}
     -\Delta y_0^\top\Gamma_{\boldsymbol\rho}\Delta y_0
    \nonumber\\
    &\quad=\Delta y_0^\top\Gamma_{\boldsymbol\rho}\delta_{\rho_2}
          +\delta_{\rho_2}^\top\Gamma_{\boldsymbol\rho}\Delta y_0
          +\delta_{\rho_2}^\top\Gamma_{\boldsymbol\rho}\delta_{\rho_2}
     =O_{L^2}(h^{3/2}).
    \label{eq:app-fixed-displacement-error}
\end{align}
All candidates therefore reuse $Jd[\Delta y_0]$, computed by subtracting
one score JVP per reference model. In particular,
\begin{equation}
    \left\|\max_{\rho_2\in\mathcal R_2}
    \left|
       \log\omega_{\rho_1,\rho_2}^{\mathrm G}(x(\rho_2),y)
       -\log\widehat\omega(\rho_2)
    \right|\right\|_{L^2}
    =O(h^{3/2}).
    \label{eq:app-uniform-candidate-error}
\end{equation}
The selected error is bounded by this maximum, so the same expansion
holds after choosing $\rho_2$ using the current noises.

This samplewise statement does not imply conditional trace unbiasedness.
If $\boldsymbol\rho^\star$ depends on $\xi$, then generally
\[
    \mathbb E\!\left[
      \xi^\top\Gamma_{\boldsymbol\rho^\star}\xi
       -\operatorname{tr}\Gamma_{\boldsymbol\rho^\star}
      \mid\text{current particle system}\right]\ne0.
\]
Likewise, conditioning on the selected coefficient changes the noise
law, so the actual selected mutation need not have the nominal Gaussian
candidate density. The algorithm uses adaptive surrogate weights, not an
exact RN correction for that selected law.

\paragraph{Distinction from an oracle endpoint gate.}
Under the exact-reference assumptions of
Section~\ref{sec:continuous_connection}, set
$H=\nabla^2\log(p_\tau^B/p_\tau^A)(y)=-Jd$. The log-density
Fokker--Planck equations and a Taylor expansion give
\begin{equation}
    \log\frac{p_t^B(x)}{p_t^A(x)}
    -\log\frac{p_\tau^B(y)}{p_\tau^A(y)}
    =D^{\mathrm G}(x,y)
     +\frac12\left[\Delta y^\top H\Delta y
                    -hs_\tau\operatorname{tr}H\right]
     +O_{L^2}(h^{3/2}).
    \label{eq:app-oracle-contrast-comparison}
\end{equation}
For an exact current gate, the corresponding reward difference is
\begin{equation}
    \mathcal R_t(x)-\mathcal R_\tau(y)
    -\Delta\mathcal R^{\mathrm G}(x,y)
    =\frac{\zeta}{2}\left[
        \Delta y^\top Jd\,\Delta y-hs_\tau\operatorname{tr}Jd
      \right]+O_{L^2}(h^{3/2}).
    \label{eq:app-oracle-gate-comparison}
\end{equation}
This centered curvature term distinguishes the oracle endpoint weight
from the defined Gaussian surrogate; it is not an omitted term in
Proposition~\ref{prop2}.

\subsection{Variance Objective and Forward Kernel Matching}
\label{app:forward-kernel-matching}
\subsubsection{Variance Objective}
Here we show that normalized cumulative ESS maximization is our variance minimization used in section \ref{sec:fixed-displacement-jvp}
\[
    \widehat{\mathrm{nESS}}_t(\rho_2)
    :=
    \frac{\bigl(\sum_iV_i(\rho_2)\bigr)^2}
         {N\sum_iV_i(\rho_2)^2}
    =
    \left[1+\widehat{\mathcal V}_t^{\mathrm{rel}}(\rho_2)\right]^{-1}.
\]
\subsubsection{Forward Kernel Matching}
This subsection concerns fixed candidates, exact reference scores and
gates, and $Y\sim q_\tau$ independent of the current Gaussian noise.
Write $q_r=\gamma_r/Z_r$, where $Z_r=\int\gamma_r(z)\,dz$.
The forward-propagated marginal is
\begin{equation}
    \bar q_\tau^{\,\boldsymbol\rho}(y)
    :=\int q_t(x)K_t^{b^{\boldsymbol\rho}}(y\mid x)\,dx.
    \label{eq:proposal-forward-marginal}
\end{equation}
Define the adjoint generator and normalized residual by
\begin{equation}
\begin{aligned}
    \mathcal L_{r,\boldsymbol\rho}^*f
    &:=-\nabla\cdot(b_r^{\boldsymbol\rho}f)
       +\frac{s_r}{2}\nabla\cdot\nabla f,\\
    R_{r,\boldsymbol\rho}
    &:=\frac{\partial_rq_r-\mathcal L_{r,\boldsymbol\rho}^*q_r}{q_r}.
\end{aligned}
\label{eq:forward-fp-residual}
\end{equation}
The corresponding unnormalized log-weight potential is
\begin{equation}
    A_{\boldsymbol\rho}(r,\cdot)
    :=-\frac{\partial_r\gamma_r-
                 \mathcal L_{r,\boldsymbol\rho}^*\gamma_r}{\gamma_r}
     =-R_{r,\boldsymbol\rho}-\partial_r\log Z_r.
    \label{eq:app-fk-potential}
\end{equation}
Under decay or no-flux boundary conditions,
$\mathbb E_{q_r}[R_{r,\boldsymbol\rho}]=0$. Thus $A_{\boldsymbol\rho}$ and
$R_{r,\boldsymbol\rho}$ have the same variance across states.

\paragraph{Conditional log-weight moments.}
The continuous connection proved in
Section~\ref{sec:continuous_connection} gives
\[
    \log\widehat\omega_{\boldsymbol\rho}(Y,\xi)
    =hA_{\boldsymbol\rho}(\tau,Y)
     +hs_\tau\left[
        \xi^\top\Gamma_{\boldsymbol\rho}(Y)\xi
        -\operatorname{tr}\Gamma_{\boldsymbol\rho}(Y)
      \right]+O_{L^2}(h^{3/2}).
\]
For a fixed matrix $M$ and standard Gaussian $\xi$,
$\mathbb E[\xi^\top M\xi]=\operatorname{tr}M$ and
$\operatorname{Var}(\xi^\top M\xi)
=2\|\operatorname{sym}M\|_F^2$, where
$\operatorname{sym}M=(M+M^\top)/2$. Therefore
\begin{equation}
\begin{aligned}
    \operatorname{Var}_Y\!\left(
      \mathbb E_\xi[\log\widehat\omega_{\boldsymbol\rho}\mid Y]
    \right)
    &=h^2\operatorname{Var}_{q_\tau}
       (A_{\boldsymbol\rho}(\tau,Y))+o(h^2),\\
    \mathbb E_Y\!\left[
      \operatorname{Var}_\xi(\log\widehat\omega_{\boldsymbol\rho}\mid Y)
    \right]
    &=2h^2s_\tau^2\mathbb E_{q_\tau}
       \|\operatorname{sym}\Gamma_{\boldsymbol\rho}(Y)\|_F^2+o(h^2).
\end{aligned}
\label{eq:app-logweight-variance-components}
\end{equation}
These are the two terms in
\eqref{eq:main-logweight-total-variance}. For exact scores,
$Jd$ and $\Gamma_{\boldsymbol\rho}$ are symmetric.

\paragraph{Local forward-marginal interpretation.}
A first-order density expansion gives
\begin{equation}
    \frac{\bar q_\tau^{\,\boldsymbol\rho}}{q_\tau}
    =1-hR_{\tau,\boldsymbol\rho}+o_{L^2(q_\tau)}(h).
    \label{eq:forward-ratio-expansion}
\end{equation}
For the KL expansion, assume positive density ratios and an integrable
logarithmic Taylor remainder. Precisely, for
$\delta_h=\bar q_\tau^{\,\boldsymbol\rho}/q_\tau-1$, assume
\[
    \mathbb E_{q_\tau}\left|
      -\log(1+\delta_h)+\delta_h-\tfrac12\delta_h^2
    \right|=o(h^2).
\]
Together with \eqref{eq:forward-ratio-expansion}, this states the required
density and tail control, rather than inferring it from pointwise
smoothness alone.

\begin{proposition}[Conditional-mean log-weight variance and forward KL]
\label{prop:conditional-mean-forward-kl}
Under the assumptions above,
\begin{equation}
\begin{aligned}
    \operatorname{Var}_Y\!\left(
      \mathbb E_\xi[\log\widehat\omega_{\boldsymbol\rho}\mid Y]
    \right)
    &=h^2\operatorname{Var}_{q_\tau}(R_{\tau,\boldsymbol\rho})+o(h^2)\\
    &=2\,\mathrm{KL}
       (q_\tau\|\bar q_\tau^{\,\boldsymbol\rho})+o(h^2).
\end{aligned}
\label{eq:conditional-mean-forward-kl}
\end{equation}
\end{proposition}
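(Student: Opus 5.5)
The plan has two parts. First, the conditional-mean log-weight variance reduces to $h^2\operatorname{Var}_{q_\tau}(R_{\tau,\boldsymbol\rho})$. Second, a second-order expansion of the forward KL in the density-ratio perturbation gives the same quantity up to the factor $2$.

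\emph{Step 1 (conditional-mean variance).} Start from the first line of \eqref{eq:app-logweight-variance-components}, which is obtained from the continuous-connection expansion of $\log\widehat\omega_{\boldsymbol\rho}$. The centered quadratic $\xi^\top\Gamma\xi-\operatorname{tr}\Gamma$ has zero conditional mean given $Y$. Hence $\mathbb E_\xi[\log\widehat\omega_{\boldsymbol\rho}\mid Y]=hA_{\boldsymbol\rho}(\tau,Y)+O_{L^2}(h^{3/2})$. By Cauchy--Schwarz, the cross term between an $O_{L^2}(h)$ quantity and an $O_{L^2}(h^{3/2})$ remainder is $O(h^{5/2})=o(h^2)$, so the variance is $h^2\operatorname{Var}(A_{\boldsymbol\rho})+o(h^2)$. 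By \eqref{eq:app-fk-potential}, $A_{\boldsymbol\rho}=-R_{\tau,\boldsymbol\rho}-\partial_\tau\log Z_\tau$, and the second term is constant in $Y$. Therefore $\operatorname{Var}_{q_\tau}(A_{\boldsymbol\rho})=\operatorname{Var}_{q_\tau}(R_{\tau,\boldsymbol\rho})$, which gives the first equality.

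\emph{Step 2 (forward KL).} Set $\delta_h=\bar q^{\,\boldsymbol\rho}_\tau/q_\tau-1$ and write
\[
\mathrm{KL}(q_\tau\|\bar q_\tau^{\,\boldsymbol\rho})=\mathbb E_{q_\tau}[-\log(1+\delta_h)].
\]
The integrability assumption on the logarithmic Taylor remainder gives $\mathbb E_{q_\tau}[-\log(1+\delta_h)]=\mathbb E_{q_\tau}[-\delta_h+\tfrac12\delta_h^2]+o(h^2)$. Both $q_\tau$ and $\bar q_\tau^{\,\boldsymbol\rho}$ are probability densities, so $\mathbb E_{q_\tau}[\delta_h]=\int\bar q_\tau^{\,\boldsymbol\rho}-1=0$ exactly, and the linear term drops. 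By \eqref{eq:forward-ratio-expansion}, $\delta_h=-hR_{\tau,\boldsymbol\rho}+o_{L^2(q_\tau)}(h)$, so $\mathbb E\delta_h^2=h^2\mathbb E_{q_\tau}[R^2_{\tau,\boldsymbol\rho}]+o(h^2)$, again controlling the cross term by Cauchy--Schwarz. The no-flux condition gives $\mathbb E_{q_\tau}[R_{\tau,\boldsymbol\rho}]=0$, so $\mathbb E_{q_\tau}[R^2_{\tau,\boldsymbol\rho}]=\operatorname{Var}_{q_\tau}(R_{\tau,\boldsymbol\rho})$. Therefore $\mathrm{KL}=\tfrac{h^2}{2}\operatorname{Var}(R_{\tau,\boldsymbol\rho})+o(h^2)$, and multiplying by $2$ gives the second equality.

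\emph{Main obstacle.} The delicate point is the expansion \eqref{eq:forward-ratio-expansion} itself, including matching signs and time conventions: $K_t$ pushes from $t$ to $\tau$, so $\bar q_\tau\approx q_t+h\mathcal L^*q_t\approx q_\tau-h(\partial q-\mathcal L^*q)$. Since the statement takes \eqref{eq:forward-ratio-expansion} as given, I would only check that its remainder is $o(h)$ in $L^2(q_\tau)$, which is what the squaring step needs. I would also check that the $O_{L^2}(h^{3/2})$ remainders from Section~\ref{sec:continuous_connection} hold under $Y\sim q_\tau$ with $\xi$ independent of $Y$ and fixed candidates, so that no selection bias enters.
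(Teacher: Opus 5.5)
Your proposal is correct and follows the paper's proof essentially step for step. The first equality comes from the conditional-mean line of \eqref{eq:app-logweight-variance-components} together with $A_{\boldsymbol\rho}=-R_{\tau,\boldsymbol\rho}-\partial_\tau\log Z_\tau$, and the second from the assumed logarithmic Taylor control, $\mathbb E_{q_\tau}[\delta_h]=0$, and \eqref{eq:forward-ratio-expansion}. You only make explicit two things the paper leaves implicit: the Cauchy--Schwarz control of the cross terms, and the use of $\mathbb E_{q_\tau}[R_{\tau,\boldsymbol\rho}]=0$ to turn $\mathbb E_{q_\tau}[\delta_h^2]$ into a variance.
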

\begin{proof}
The first equality follows from
\eqref{eq:app-logweight-variance-components} and
\eqref{eq:app-fk-potential}. Both marginals are normalized, so
$\mathbb E_{q_\tau}[\delta_h]=0$. The assumed Taylor control then yields
\[
    \mathrm{KL}(q_\tau\|\bar q_\tau^{\,\boldsymbol\rho})
    =\tfrac12\mathbb E_{q_\tau}[\delta_h^2]+o(h^2)
    =\frac{h^2}{2}\operatorname{Var}_{q_\tau}
       (R_{\tau,\boldsymbol\rho})+o(h^2).
\]
\end{proof}
Thus the conditional-mean component admits a local forward-matching
interpretation. The full log-weight variance also contains the
transition component in \eqref{eq:app-logweight-variance-components}.

\subsection{Connection to the Continuous Path-Space Formulation}
\label{sec:continuous_connection}

Assume positive, smooth reference densities satisfying the common forward
Fokker--Planck equation, exact gate values, and the moment conditions in
Section~\ref{prop2proof}. For each fixed candidate,
\[
    \partial_r p_r^j
    =-\nabla\cdot(\mu_r^jp_r^j)
     +\frac{s_r}{2}\nabla\cdot\nabla p_r^j,
    \qquad j\in\{A,B\},
\]
with $\mu^A=\mu^B$. Throughout this subsection the target and weights use
the unnormalized convention $\gamma_r=p_r^Ae^{\mathcal R_r}$.

\paragraph{Matched continuous process.}
The potential $A_{\boldsymbol\rho}$ defined in
\eqref{eq:app-fk-potential} follows from the product rule:
\begin{align}
    A_{\boldsymbol\rho}
    &=-\partial_r\mathcal R-(\nu^A)^\top g
      +\frac{s}{2}(\nabla\cdot g+\|g\|^2)
    \nonumber\\
    &\quad-s\left[
        \nabla\cdot u^{\boldsymbol\rho}
        +(u^{\boldsymbol\rho})^\top\nabla\log\gamma
      \right].
    \label{eq:two-field-relative-fp}
\end{align}
The time and state arguments on the right are $(r,z)$.
The matched drift satisfies
\[
    a_{\boldsymbol\rho}^\star
    =\nu^A+s(u^{\boldsymbol\rho}-g)
    =b^{\boldsymbol\rho}-s\nabla\log\gamma.
\]
Consequently,
\[
    \nabla\cdot(a_{\boldsymbol\rho}^\star\gamma)
    +\frac{s}{2}\nabla\cdot\nabla\gamma
    =-\mathcal L_{r,\boldsymbol\rho}^*\gamma,
\]
so $-\partial_r\gamma$ equals this reverse-time generator applied to
$\gamma$, plus $A_{\boldsymbol\rho}\gamma$.

In increasing reverse time $v\in[0,h]$, the corresponding process and
incremental FK weight are
\begin{equation}
\begin{aligned}
    dZ_v
    &=-a_{\boldsymbol\rho}^\star(\tau-v,Z_v)\,dv
      +\sqrt{s_{\tau-v}}\,dB_v,\qquad Z_0=y,\\
    \log\omega_{\boldsymbol\rho}^{\mathrm{cont}}
    &:=\int_0^h A_{\boldsymbol\rho}(\tau-v,Z_v)\,dv
      =hA_{\boldsymbol\rho}(\tau,y)+O_{L^2}(h^{3/2}).
\end{aligned}
\label{eq:continuous-log-weight}
\end{equation}
The last estimate uses the spatial and temporal regularity of
$A_{\boldsymbol\rho}$, with polynomial growth and sufficient moments,
and $\|Z_v-y\|_{L^p}=O(v^{1/2})$ for the required moments.
Normalizing the marginal path adds only the spatially constant potential
$\partial_r\log Z_r$.

\paragraph{Identification of the local coefficient.}
At $(\tau,y)$, the two reference log-density equations imply
\begin{equation}
    \alpha_{AB}
    =-\partial_\tau\log\frac{p_\tau^B}{p_\tau^A}
      +\frac{s_\tau}{2}\operatorname{tr}
         \nabla^2\log\frac{p_\tau^B}{p_\tau^A}.
    \label{eq:app-reference-time-identity}
\end{equation}
Substituting this into \eqref{eq:app-gate-scalar-coefficients} and
\eqref{eq:app-gate-local-expansion} gives
\begin{equation}
    \alpha_{\mathcal R}-\frac{s_\tau\chi}{2}\|d\|^2
    =-\partial_\tau\mathcal R_\tau
      +\frac{s_\tau}{2}\nabla\cdot g_\tau.
    \label{eq:gate-time-trace-identity}
\end{equation}
This includes the $\dot m_\tau$ term when $m$ varies.
Meanwhile, rearranging the scalar coefficient gives
\begin{equation}
    \alpha_{\boldsymbol\rho}
    =\alpha_{\mathcal R}-(\nu_\tau^A)^\top g
     +\frac{s_\tau}{2}\|g\|^2
     -s_\tau u^\top\nabla\log\gamma_\tau+O(h),
    \label{eq:finite-alpha-rearranged}
\end{equation}
and $s_\tau\operatorname{tr}\Gamma_{\boldsymbol\rho}
=-s_\tau\nabla\cdot u-(s_\tau\chi/2)\|d\|^2$.
Combining these expressions proves
\begin{equation}
    \alpha_{\boldsymbol\rho}
    +s_\tau\operatorname{tr}\Gamma_{\boldsymbol\rho}
    =A_{\boldsymbol\rho}(\tau,y)+O(h).
    \label{eq:finite-continuous-coefficient}
\end{equation}
The $O(h)$ term accounts for retaining $\mu_t^A$ and $s_t$ in the finite
scalar coefficient; it vanishes if those coefficients are evaluated at
$\tau$ instead.

Finally,
$\Delta y^\top\Gamma_{\boldsymbol\rho}\Delta y
=hs_\tau\xi^\top\Gamma_{\boldsymbol\rho}\xi+O_{L^2}(h^{3/2})$.
Together with \eqref{eq:continuous-log-weight}, this proves
\eqref{eq:finite-continuous-link}. The quadratic correction is
conditionally centered for a fixed candidate. This establishes the
continuous conditional-mean interpretation, not pathwise equality or
unbiasedness after same-noise selection.

\subsection{Connection to Previous Variance-Minimizing Methods}

\subsubsection{Comparison with DAS}
\label{app:comparison-das}

For this comparison only, consider the ideal normalized weight
\begin{equation}
    \omega_{\boldsymbol\rho}^{\mathrm{RN}}(x,y)
    :=\frac{q_t(x)K_t^{b^{\boldsymbol\rho}}(y\mid x)}
            {q_\tau(y)L_\tau^a(x\mid y)},
    \label{eq:app-ideal-rn-weight}
\end{equation}
where $Y\sim q_\tau$ and $X\mid Y\sim L_\tau^a$. Assume the required
support conditions and finite second moments. Direct integration gives
\begin{equation}
\begin{aligned}
    \mathbb E[\omega_{\boldsymbol\rho}^{\mathrm{RN}}\mid Y=y]
    &=\frac{\bar q_\tau^{\,\boldsymbol\rho}(y)}{q_\tau(y)},\\
    \operatorname{Var}_Y\!\left(
      \mathbb E[\omega_{\boldsymbol\rho}^{\mathrm{RN}}\mid Y]
    \right)
    &=\chi^2(\bar q_\tau^{\,\boldsymbol\rho}\|q_\tau).
\end{aligned}
\label{eq:conditional-mean-chi2}
\end{equation}
For fixed $K_t^{b^{\boldsymbol\rho}}$, the variance-minimizing backward
conditional is
\begin{equation}
    L_\tau^{\mathrm{opt}}(x\mid y)
    :=\frac{q_t(x)K_t^{b^{\boldsymbol\rho}}(y\mid x)}
            {\bar q_\tau^{\,\boldsymbol\rho}(y)}.
    \label{eq:app-optimal-backward}
\end{equation}
It makes $\omega^{\mathrm{RN}}$ constant in $x$ conditional on $y$, hence
removes the conditional-variance term. The forward-marginal term in
\eqref{eq:conditional-mean-chi2} is unchanged by the choice of $L$.
This is the ideal conditional-variance criterion underlying the proposal
component of DAS~\citep{kim2025testtime}; its Gaussian proposal uses a local
reward approximation. DAS also uses tempering to modify intermediate
targets.

Our coefficients change the auxiliary forward kernel as well as the
matched backward drift. The Gaussian-surrogate comparison is therefore
through \eqref{eq:app-logweight-variance-components} and
\eqref{eq:conditional-mean-forward-kl}. It does not assert exact
conditional-variance elimination or exact RN correction for the
implemented adaptive weights.
For clarity, even for the ideal RN weight,
\begin{equation}
    \mathbb E_L[\log\omega_{\boldsymbol\rho}^{\mathrm{RN}}\mid Y=y]
    =\log\frac{\bar q_\tau^{\,\boldsymbol\rho}(y)}{q_\tau(y)}
     -\mathrm{KL}\!\left(
        L_\tau^a(\cdot\mid y)\|L_\tau^{\mathrm{opt}}(\cdot\mid y)
      \right).
    \label{eq:app-rn-logmean}
\end{equation}
Thus the exact RN identity cannot be transferred unchanged to log weights;
our forward-KL relation for $\log\widehat\omega$ is a leading-order result.

\subsubsection{Comparison with DriftLite}
\label{app:comparison-driftlite}

DriftLite-VCG~\citep{ren2026driftlite} uses the FK compensation identity
\begin{equation}
    H_r(z;\mathfrak b)
    :=\nabla\cdot\mathfrak b(z)
      +\mathfrak b(z)^\top\nabla\log q_r(z),
    \qquad
    -\nabla\cdot(\mathfrak bq_r)+q_rH_r(\cdot;\mathfrak b)=0.
    \label{eq:driftlite-compensation}
\end{equation}
Adding $\mathfrak b$ to the sampling drift and $H_r$ to its potential
preserves the marginal path. VCG selects a finite-dimensional control by
minimizing the residual-potential variance.

In our increasing reverse-time convention,
\[
    -a_{\boldsymbol\rho}^\star
    =-\nu^A+sg-su^{\boldsymbol\rho}.
\]
Relative to $\boldsymbol\rho=\boldsymbol 0$, the added sampling control is
therefore $\mathfrak b^{\boldsymbol\rho}=-su^{\boldsymbol\rho}$.
Its compensated potential, under the same unnormalized convention, is
\begin{equation}
    \Phi_{\boldsymbol\rho,r}
    =A_{\boldsymbol 0}(r,\cdot)
     -s_r\left[
       \nabla\cdot u_r^{\boldsymbol\rho}
       +(u_r^{\boldsymbol\rho})^\top\nabla\log q_r
     \right]
    =A_{\boldsymbol\rho}(r,\cdot).
    \label{eq:driftlite-two-field-potential}
\end{equation}
Thus the continuous control families agree after matching the target,
fields, and time convention. Fixing $\rho_1$ leaves an affine
one-dimensional family in $\rho_2$.

The population VCG criterion concerns
$\operatorname{Var}_{q_\tau}(A_{\boldsymbol\rho})$. Our finite Gaussian
log-weight variance has this conditional-mean component and the additional
transition component in \eqref{eq:app-logweight-variance-components}.
The two population objectives can consequently have different minimizers.
The implemented rule instead maximizes predicted cumulative ESS using
common-noise candidate evaluations.

The full reference displacement also retains selected finite-step terms.
Writing $z=\sqrt{hs_\tau}\xi$ and $\Delta y_0=z-ha_0^\star$, for symmetric
$\Gamma_{\boldsymbol\rho}$ we have
\begin{equation}
    \Delta y_0^\top\Gamma_{\boldsymbol\rho}\Delta y_0
    =z^\top\Gamma_{\boldsymbol\rho}z
     -2h z^\top\Gamma_{\boldsymbol\rho}a_0^\star
     +h^2(a_0^\star)^\top\Gamma_{\boldsymbol\rho}a_0^\star.
    \label{eq:driftlite-displacement-decomposition}
\end{equation}
The first term estimates the trace contribution; the other terms are
$O_p(h^{3/2})$ and $O(h^2)$. They are available from the same
fixed-displacement JVP, but do not by themselves establish a higher-order
scheme because other terms of those orders are omitted. The proved
accuracy remains $O_{L^2}(h^{3/2})$ for the defined Gaussian log-weight
surrogate, with the same-noise and stabilization distinctions stated above.

\section{Class-Removal Experimental Details}
\label{app:cifar-cr-details}

This appendix documents the class-removal experiments of
\S\ref{sec:cifar10:cr}. Source-of-truth reports with full provenance
(scripts, logs, per-row CSVs) are included in the supplementary materials
as \texttt{class\_removal\_baseline.md} and \texttt{class\_removal\_smc.md}.


\subsection{Models, dataset, and metrics}

\paragraph{Unconditional generator (\(p^A\)).}
A standard DDPM UNet trained from scratch on CIFAR-10
(\texttt{cifar10\_unconditional\_v2/best\_checkpoint.pt}, EMA state
loaded). Configuration: \(\mathrm{nf}=128\), channel multipliers
\((1,2,2,2)\), \(2\) residual blocks per resolution, attention at
\(16\!\times\!16\), GroupNorm and Swish, \(\sim\!35.7\)M parameters.
Sampling uses ancestral DDPM with \(T=1000\) steps and a linear
\(\beta\) schedule from \(10^{-4}\) to \(2{\times}10^{-2}\). All methods
(baselines and ours) share this backbone.

\paragraph{Forget-class generator (\(p^B\)).}
For each forget class \(c\in\{0,1\}\), a UNet of identical architecture
trained on the single-class subset of CIFAR-10 (planes for c0, cars for
c1). Checkpoints come from the public release accompanying
\citet{koulischer2024dynamic}; the network state is loaded as the raw
\texttt{net} dict.

\paragraph{Evaluation.}
For class removal tasks, we used the same metrics as \citet{koulischer2024dynamic}. For each generated batch of \(N=10{,}240\) images we compute (i) Wrong\%,
the fraction classified as the forget class by the
\texttt{aaraki/vit-base-patch16-224-in21k-finetuned-cifar10} ViT
(HuggingFace pipeline, one forward per image); (ii) KL of the empirical
nine-keep-class histogram to the uniform distribution \(p_{\mathrm{u}}=
(1/9,\ldots,1/9)\), defined as \(\sum_{i\in\text{keep}} p_{\mathrm{u}}
\log(p_{\mathrm{u}}/q_i)\) with natural log and no renormalization of
the keep mass; and (iii) FID against the CIFAR-10 train split
filtered to drop the forget class (\(\approx\!45{,}000\) reference
images), using InceptionV3 pool3 features.

\paragraph{Pretrained-model FID floor.}
Sampling each underlying UNet \emph{unconditionally} (no guidance) at
\(N=10{,}240\) and computing FID against the natural reference set yields:

\begin{center}
\begin{tabular}{lcc}
\toprule
Model & Reference set & FID \\
\midrule
\(p^A\) (unconditional) & CIFAR-10 train (50k images, 10 classes)  & \textbf{5.43} \\
\(p^B\) (planes)        & CIFAR-10 train, plane subset (5k images) & 22.41 \\
\(p^B\) (cars)          & CIFAR-10 train, car subset (5k images)   & 11.15 \\
\bottomrule
\end{tabular}
\end{center}

These FIDs use single-class reference sets and are not directly
comparable to the FID column reported in \S\ref{sec:cifar10:cr}; they
are reported as a sanity check on the underlying models. The
unconditional value \(5.43\) is, however, a lower bound for the FID of
any class-removal method, since at zero guidance every method reduces
to unconditional sampling. The asymmetric self-FID of the two
forget-class models (cars sharper than planes) is consistent with the
asymmetric difficulty observed in \S\ref{sec:cifar10:cr} between c0
and c1.

\subsection{Hyperparameters and per-method scale grids}

All methods use \(N=10{,}240\) samples per configuration and seed \(0\)
(per-batch seeds are \(\{0,1,\ldots\}\) for resumable sampling). Batch
sizes are \(256\) (baselines) and \(512\) (c0 SMC) / \(1024\) (c1 SMC).

\paragraph{Composition.}
\(\varepsilon_{\text{guided}} = \varepsilon_u - \lambda(\varepsilon_f - \varepsilon_u)\).\\
Scale grid \(\lambda\in\{0.05,0.10,0.20,0.30,0.40,0.50,0.70,1.00\}\) on
both classes.

\paragraph{Safe Latent Diffusion.}
Safe Latent Diffusion \citep{schramowski2023safe} with the
hyperparameter set used in the public DNG repository
\citep{koulischer2024dynamic}: element-wise gated correction with EMA
momentum, threshold \(=0.04\), \(s_s=100.0\), momentum decay
\(\beta_m=0.1\), momentum rescale \(s_m=0.2\). Same scale grid as
Composition.

\paragraph{Dynamic Negative Guidance (using defaults from \citet{koulischer2024dynamic}).}
Per-particle scale \(\lambda_t = \lambda \cdot p_t/(1-p_t)\) with
running posterior \(p_t\). Hyperparameters: prior \(0.01\), temperature
\(0.2\), \(p_{\min}=10^{-6}\), \(p_{\max}=0.8\), offset
\(2{\times}10^{-4}\). Scale grids: eight points for class 0 \(\lambda\in\{0.10,0.25,0.50,1.00,2.00,5.00,8.00,12.00\}\); eight points for class 1 \(\lambda\in\{0.10,0.25,0.50,1.00,3.50,5.00,8.00,12.00\}\).

\paragraph{Safe Denoiser \citep{kim2025training}.}
Kernel-fast repellency in pixel space with \(500\) unsafe reference
images (selection seed \(42\)) and \(\sigma\) auto-calibrated as the
median pairwise reference distance in kernel space. Time window
\([0.78,1.00]\cdot T\) of the diffusion schedule. Auto window-quantile
threshold calibration with \(q=0.80\), margin \(1.6\), \(40\) \(\beta\)
sample points, \(\beta\)-quantile \(0.25\). Scale grids: sixteen points
\(\{0.05,0.08,0.125,0.20,0.33,0.50,0.75,1.00,1.50,2.00,3.00,4.00,6.00,8.00,12.00,16.00\}\)
on c0 (full operating curve) and the upper eight
\(\{2.00,3.00,4.00,5.00,6.50,8.00,9.00,10.00\}\) on c1 (where SD shows
non-trivial repellency).

\paragraph{Proposed sampler.}
Sigmoid gate \(\theta = \sigma(\log c + \beta_s\hat\ell)\) with both
\(c(t)\) and \(\beta_s(t)\) on exp schedules. Shared:
\(c_1=0.1\), \(\beta_{s,1}=0.2\); \(\hat\ell\)-clip \(20\),
\(\hat\ell\)-temperature \(\tau=0.5\), \(\hat\ell\)-offset
\(2{\times}10^{-4}\); ESS resampling threshold \(0.5\); \(\theta\)-clamp
\([0.02,0.98]\). Variance-reduction (RK) inner loop with \(z\)-cap
\(0.25\), \(t_{\min}=0.10\), ramp \(0.20\), \(10\) gradient steps with
backtracking. Class-specific schedule constants:
\(c_0=1\), \(\gamma_{\text{bs}}=2\), \(\gamma_c=1\) for c0;
\(c_0=3\), \(\gamma_{\text{bs}}=5\), \(\gamma_c=5\) for c1 (a more
aggressive ramp, selected from a pilot \(N=512\) sweep to match DNG
performance on the harder c1 task; see discussion in
\S\ref{sec:cifar10:cr}). Scale (\(=\beta_{s,0}\)) grid: twelve points
\(\{0.5,0.6,0.8,1,1.5,2,3,4,5,6,7,8\}\) on c0; eight points
\(\{1,1.5,2,3,4,5,6,7\}\) on c1.

\subsection{Compute and provenance}

Generation was distributed across eight NVIDIA A6000-class GPUs in
waves; per-configuration wall-clock at \(N=10{,}240\) is roughly
\(280\)--\(560\) minutes for baselines and \(195\)--\(285\) minutes for
the proposed sampler (longer batches but only \(10\) batches at
\(\text{batch}=1024\) for c1 vs \(20\) batches at \(\text{batch}=512\)
for c0). Total compute for the class-removal experiments is
approximately \(\sim 850\) GPU-hours.

\section{Multiclass-Selection Experimental Details}
\label{app:cifar-mc-details}

\subsection{Models, dataset, and metrics}

\paragraph{Five-class generators (\(p^A\) and \(p^B\)).}
Both are DDPM UNets of the same architecture as the unconditional
generator of App.~\ref{app:cifar-cr-details}, trained from scratch on
disjoint five-class subsets of CIFAR-10 (\(25{,}000\) images each, EMA
state loaded). Forward task: \(p^A\) is
\texttt{cifar10\_classes\_01234\_v2/best\_checkpoint.pt} (classes
\(\{0,1,2,3,4\}\)) and \(p^B\) is
\texttt{cifar10\_classes\_34567\_v2/best\_checkpoint.pt} (classes
\(\{3,4,5,6,7\}\)); the two shared classes are \(\{3,4\}\) (cat,
deer) and the keep set is \(A\setminus B = \{0,1,2\}\) (airplane,
automobile, bird). Reverse task: roles are swapped and the keep set
is \(\{5,6,7\}\) (dog, frog, horse). Sampling protocol and
\(\beta\)-schedule match App.~\ref{app:cifar-cr-details}. The
DNG\(_{\text{uncond}}\) baseline additionally uses the unconditional
ten-class generator of App.~\ref{app:cifar-cr-details} as its base
denoiser; the proposed sampler does not use the unconditional model.

\paragraph{Evaluation.}
For each configuration we draw \(N=10{,}240\) samples and report:
(i) Wrong\%, the fraction classified into the unwanted-only classes
\(B\setminus A\) by the same ViT classifier as
App.~\ref{app:cifar-cr-details}; (ii) FID against the CIFAR-10 train
split filtered to the keep classes (\(15{,}000\) reference images for
each direction); (iii) Recall \citep{kynkaanniemi2019improved}
against the same keep-class reference set, computed by the standard
\(k\)-nearest-neighbour manifold construction with \(k=3\) on
InceptionV3 pool3 features; and (iv) KL of the empirical
\(|\mathcal{K}|=3\)-bin keep-class histogram to uniform, defined as
\(\sum_{i\in\mathcal{K}} p_{\mathrm{u}}\log(p_{\mathrm{u}}/q_i)\)
with \(p_{\mathrm{u}}=(1/3,1/3,1/3)\). The choice of Recall over KL
as the main-text plotting axis is discussed below.

\begin{center}
\begin{tabular}{lcc}
\toprule
Model & Reference set & FID \\
\midrule
\(p^A\) (forward, classes 0--4) & CIFAR-10 train, classes 0--4 (25k) & 6.92 \\
\(p^A\) (reverse, classes 3--7) & CIFAR-10 train, classes 3--7 (25k) & 6.76 \\
Unconditional ten-class generator & CIFAR-10 train, all 50k & 5.43 \\
\bottomrule
\end{tabular}
\end{center}

These FIDs are computed on five-class reference sets and are not
directly comparable to the FID column reported in
\S\ref{sec:cifar10:mc}, which uses three-class keep references. They
serve as a floor on the FID achievable by any specificity method that
retains the full \(A\)-modes, since at zero guidance every method
reduces to sampling \(p^A\). The mild asymmetry between forward and
reverse (\(6.92\) vs.\ \(6.76\)) is consistent with the asymmetric
difficulty observed in \S\ref{sec:cifar10:mc}.

\paragraph{Choice of metric: Recall over KL.}
Fig.~\ref{fig:cifar-mc-kl} reports the Wrong\,\%--KL Pareto for the
forward direction for completeness; the method ordering is consistent
with the Recall plot in the main text. We prefer Recall over KL in
multiclass selection for two reasons. First, KL to uniform over
\(|\mathcal{K}|=3\) keep classes has only three histogram bins, so
it measures class-level balance but is blind to intra-class
diversity: a method can attain \(\mathrm{KL}\approx 0\) while
collapsing each kept class to a single stereotypical mode (e.g.\
the full variety of airplane views to a single canonical pose), and
this mode shaving is invisible. Recall instead uses
\(15{,}000\) reference points in \(2048\)-dimensional InceptionV3
feature space with the standard \(k\)-NN manifold construction, so
each real image votes independently on whether its local neighbourhood
is covered, and boundary-adjacent modes lost under blanket guidance
(e.g.\ unusual horse poses, dark-background frogs close in score
space to the suppressed cat class) show up directly as a Recall drop.
The nine-bin histogram of class removal has enough resolution to
compensate; at \(|\mathcal{K}|=3\) it does not. Second, the
\texttt{evaluate.py} KL computation uses unnormalised keep-class
fractions (it divides by total sample count rather than renormalising
over keep classes), so the raw KL inherits a
\(\log\!\bigl(1/(1-W)\bigr)\) artefact that is monotone in
Wrong\,\%; the number can be corrected post hoc, but Recall has no
such ambiguity.

\begin{figure}[H]
\centering
\includegraphics[width=0.55\linewidth]{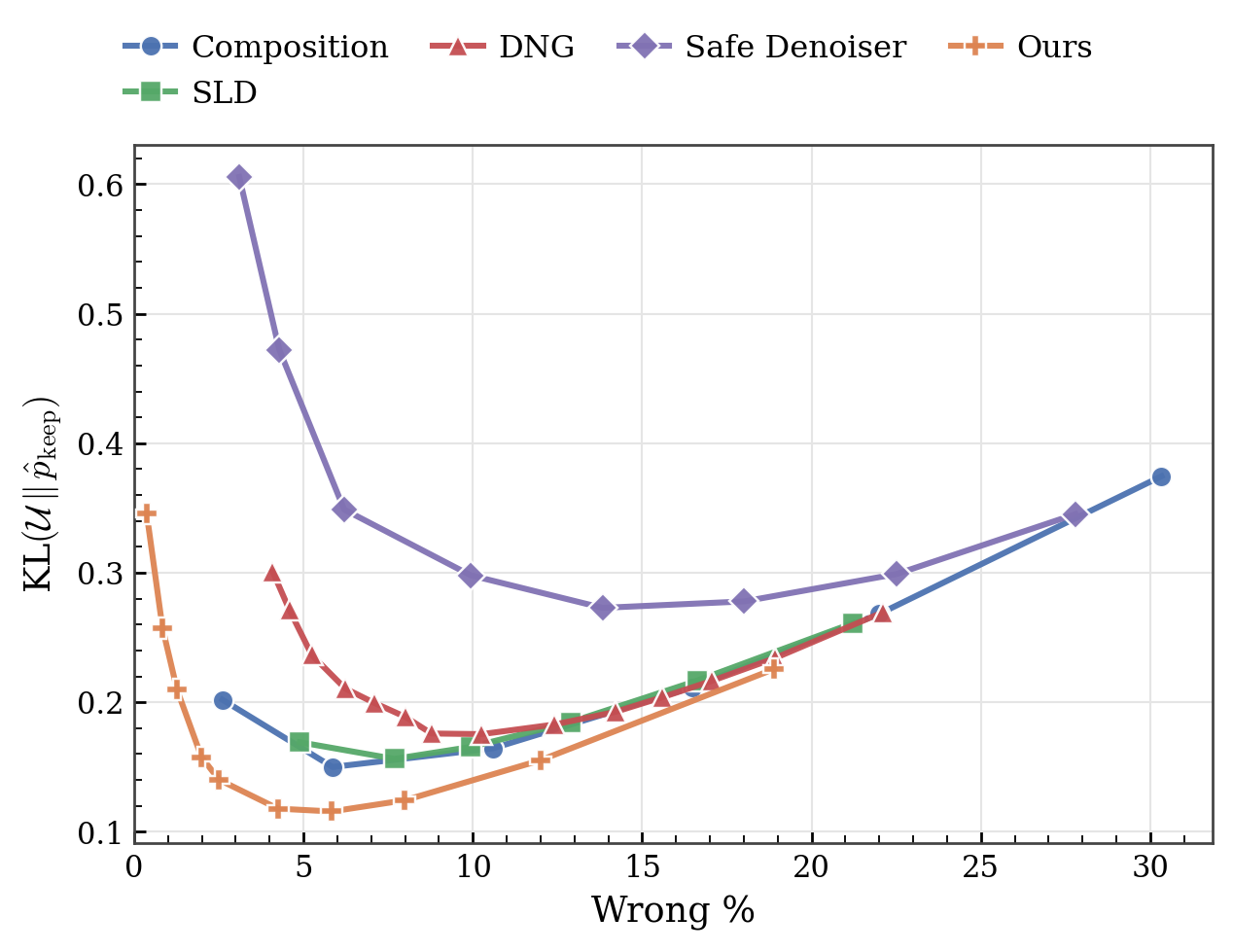}
\caption{Wrong\,\%--KL Pareto for the forward multiclass-selection
task (keep \(\{0,1,2\}\)). The method ordering is consistent with the
Recall plot in Fig.~\ref{fig:two_by_four}. The reverse direction
exhibits the same method ordering on FID and Recall but not on KL:
with only three keep classes the histogram is too coarse to
discriminate adaptive methods (DNG, SMC) from uniform-strength
baselines (Composition, SLD) on class balance alone.}
\label{fig:cifar-mc-kl}
\end{figure}

\subsection{Hyperparameters and per-method scale grids}

All methods use \(N=10{,}240\) samples per configuration and seed
\(0\) (per-batch incremental seeding). Baselines run at batch
\(256\); the proposed sampler runs at batch \(512\) (forward) and
\(1024\) (reverse).

\paragraph{Composition.}
\(\varepsilon_{\text{guided}}=\varepsilon_A-\lambda(\varepsilon_B-\varepsilon_A)\)
with \(\varepsilon_A,\varepsilon_B\) the score networks of \(p^A\)
and \(p^B\). Scale grid (both directions):
\(\lambda\in\{0.1,0.3,0.5,0.8,1.2,1.8\}\).

\paragraph{Safe Latent Diffusion.}
SLD with the hyperparameter set used in the public DNG repository
\citep{koulischer2024dynamic}: threshold \(=0.04\), \(s_s=100.0\),
momentum decay \(\beta_m=0.2\), momentum rescale \(s_m=0.1\), no
warmup. Scale grid (both directions):
\(\{0.4,0.6,0.8,1.0,1.2,1.6\}\).

\paragraph{Dynamic Negative Guidance.}
DNG: per-particle adaptive scale
\(\lambda_t=\lambda\cdot p_t/(1-p_t)\) with running posterior
\(p_t\), using the unconditional generator as the base
denoiser, as being faithful to the original paper setting. Hyperparameters: prior \(0.01\), temperature \(0.2\),
\(p_{\min}=10^{-6}\), \(p_{\max}=0.8\), offset \(2{\times}10^{-4}\).

\paragraph{Safe Denoiser.}
Kernel-fast repellency in pixel space, base \(=p^A\) (the wanted
five-class model). Reference set: \(2{,}500\) unsafe images sampled
from the unwanted-class CIFAR-10 train subset; \(\sigma=4.00\)
(fixed, not auto-calibrated); time window
\([0.70,1.00]\cdot T\); auto window-quantile threshold calibration
with margin \(2.0\), \(\beta\)-quantile \(0.25\), \(50\) sample
points, calibration window \(q=0.80\). Scale grid (forward,
\(N=10{,}240\)): \(\{1.5,2.5,3.5,4.5,6.0,8.0,11.0,15.0\}\). Safe
Denoiser is not reported on the reverse task; see
\S\ref{sec:cifar-mc-sd-failure} for the rationale.

\paragraph{Proposed sampler.}
Sigmoid gate \(\theta=\sigma(\log c+\beta_s\hat\ell)\) with both
\(c(t)\) and \(\beta_s(t)\) on exp schedules. Shared:
\(c_0=1.0\), \(c_1=0.1\), \(\beta_{s,1}=0.2\),
\(\gamma_{\text{bs}}=2.0\), \(\gamma_c=1.0\),
\(\hat\ell\)-temperature \(\tau=0.2\), \(\theta\)-clamp
\([0.02,0.98]\), \(\log w\)-clip \(80\), ESS resampling threshold
\(0.5\). Variance-reduction (RK) inner loop: \(z\)-cap \(0.25\),
\(t_{\min}=0.10\), ramp \(0.20\), \(10\) gradient steps with
backtracking, learning rate \(0.1\). $\rho_1$ fixed as 0. Direction-specific: forward uses
\(\hat\ell\)-clip \(20\) and \(\hat\ell\)-offset
\(2{\times}10^{-4}\); reverse uses \(\hat\ell\)-clip \(10\) and
\(\hat\ell\)-offset \(6{\times}10^{-5}\).

\subsection{Reverse Safe Denoiser exclusion}
\label{sec:cifar-mc-sd-failure}

We do not report Safe Denoiser on the reverse task because every
configuration we tried sits outside the plotted region of
Fig.~\ref{fig:two_by_four}. Across an \(N=512\) sweep over
\(\sigma\in\{4,20,40,80,\text{auto}\!\approx\!36\}\), scales
\(\lambda\in[0.5,20]\), and three choices of unsafe reference set
(full \(\{0,1,2,3,4\}\), overlap-only \(\{3,4\}\), and far-only
\(\{0,1,2,8,9\}\)), Wrong\(\,\%\) stayed in
\([32,41]\,\%\) and FID in \([60,121]\); the next-worst
baseline, Composition at \(\lambda=1.8\), already reaches
Wrong \(=2.9\,\%\) at FID \(54.64\).
The failure is not a gate issue: the apply rate is saturated at
\(1.0\) in every run and the auto-calibrated \(\beta\)-threshold is
essentially unchanged from the forward task.

The cause is a pixel-space discriminability issue specific to the
reverse direction. SD builds its repellent direction as
\(\widehat x_0-\widehat u(\widehat x_0)\), where
\(\widehat u(\widehat x_0)\) is a Nadaraya--Watson average of the
unsafe references weighted by an RBF kernel of bandwidth \(\sigma\)
in pixel space. For a sample on the reverse keep manifold
\(\{5,6,7\}=\{\text{dog},\text{frog},\text{horse}\}\), the nearest
unsafe references are the cat and deer exemplars (the overlap classes
\(\{3,4\}\)), not the vehicles or bird (\(\{0,1,2\}\)), because all
animal classes are visually similar at \(32\!\times\!32\) resolution.
The kernel-weighted unsafe centroid \(\widehat u(\widehat x_0)\)
therefore lies inside the keep manifold, and the residual repellent
direction damages the keep classes rather than separating them from
the unwanted modes. Small \(\sigma\) makes the cat/deer dominance
more local but does not remove it; large \(\sigma\) collapses
\(\widehat u\) to a uniform reference mean, which is also
non-discriminative. Restricting the unsafe set to the overlap classes
\(\{3,4\}\) only, or to the far-only classes \(\{0,1,2,8,9\}\), does
not break the Wrong floor; the overlap-only run with \(\sigma\) auto
and scale \(8\) collapsed to Wrong \(=56.4\,\%\) at FID \(=121.0\).
The floor itself is approximately the wanted-only prior: \(p^A\)
already places roughly \(30\%\) of its mass on the overlap classes
by construction, and SD cannot reduce that mass without distorting
\(\{5,6,7\}\).

The forward task does not exhibit this failure because the forward
keep set \(\{0,1,2\}=\{\text{plane},\text{automobile},\text{bird}\}\)
is pixel-disjoint from the animal-only unsafe set: the kernel cleanly
separates the keep manifold from the unwanted modes, and SD-forward
traces a usable Wrong/FID curve. The
class-removal task of \S\ref{sec:cifar10:cr} avoids the failure by
construction, since the single forget class is disjoint from the
nine keep classes.

\subsection{Compute and provenance}

Generation was distributed across eight NVIDIA A6000-class GPUs in
waves; per-configuration wall-clock at \(N=10{,}240\) is roughly
\(200\)--\(360\) minutes for Composition, SLD, and DNG,
\(240\)--\(360\) minutes for SD (the kernel evaluation against
\(2{,}500\) reference images adds a fixed per-step cost), and
\(140\)--\(200\) minutes for the proposed sampler (\(10\)--\(20\)
batches at batch \(512\) or \(1024\)). Total compute for the
multiclass-selection experiments is approximately \(345\) GPU-hours
(\(\approx 275\) baselines, \(\approx 70\) proposed sampler). All
configuration scripts, per-configuration logs, and aggregation code
are included in the supplementary materials.




\section{Text-to-Image Experimental Details}
\label{app:t2i-details}

This appendix describes the text-to-image experiments in
\S\ref{sec:t2i}. We use the same local SMC construction as in the main method,
together with fixed numerical stabilizers for high-dimensional latent
diffusion. Unless stated otherwise, one configuration, denoted by \(c48\), is
used for all five prompts and both negative-prompt types. Thus, the same
hyperparameters are used in all ten experimental settings.

\subsection{Experimental Setting}
\label{sec:t2i-setting}

\paragraph{Diffusion model.}
We use Stable Diffusion v1.4 \citep{rombach2022high} from the public
\texttt{CompVis/stable-diffusion-v1-4} checkpoint. Each experiment uses
\(K=100\) denoising steps and \(N=32\) particles.

\subsection{Hyperparameters}
\label{sec:t2i-target-reference}

At step \(k\), let
\begin{equation}
    \lambda_k
    :=
    \frac{k}{K-1},
    \qquad
    \lambda_k\in[0,1],
\end{equation}
denote the progress from the initial noise distribution to the generated
image. The intermediate target is
\begin{equation}
    q_k(x)
    \propto
    p_k^A(x)
    \bigl(1-\theta_k(x)\bigr)^{m_k}.
    \label{eq:t2i-powered-target}
\end{equation}
The gate power increases linearly,
\begin{equation}
    m_k
    =
    26\lambda_k,
    \label{eq:t2i-power-schedule}
\end{equation}
so that the specificity constraint is introduced gradually and the terminal
target uses \(m_{K-1}=26\).

We use
\begin{equation}
    \eta_k=0.75,
    \qquad
    \tau_\ell=0.75,
\end{equation}
where \(\tau_\ell\) scales the estimated log-density-ratio increments used
by the gate. The gate offset is scheduled through
\begin{equation}
    \log c_k
    =
    \log c_\star
    -
    7w(\lambda_k)
    -
    0.05\sin^2(\pi\lambda_k),
    \qquad
    \log c_\star = 61.65.
    \label{eq:t2i-c-schedule}
\end{equation}
Here, \(w(\lambda)\) increases smoothly from \(0\) to \(1\) over the first
quarter of the trajectory, remains equal to \(1\) over the middle half, and
returns smoothly to \(0\) over the final quarter. We use cosine transitions
at both boundaries. Although \(c_\star\) is numerically large, the gate
depends only on \(\log c_k\); its value compensates for the scale and offset
of the estimated latent-space log-density ratio.

The forward correction is
\begin{equation}
    u_k^{(\rho_1,\rho_2)}(x)
    =
    \rho_1 d_k(x)
    +
    \rho_{2,k}g_k(x),
    \label{eq:t2i-two-field-proposal}
\end{equation}
where
\begin{equation}
    d_k
    =
    \nabla\log p_k^A-\nabla\log p_k^B,
    \qquad
    g_k
    =
    \nabla\mathcal R_k
    =
    \zeta_k d_k,
\end{equation}
and \(\mathcal R_k=m_k\log(1-\theta_k)\). We fix
\begin{equation}
    \rho_1=-2
\end{equation}
throughout sampling. At each active optimization step, \(\rho_{2,k}\) is
selected from \(101\) uniformly spaced candidates in
\begin{equation}
    \rho_{2,k}\in[0,1.2].
\end{equation}
The value \(\rho_{2,k}=0\) is always included as a guard candidate. Each
step evaluates the full grid independently, without warm-starting or
fixed-point refinement. Optimization is disabled during the final denoising
step.

For every candidate, we add the stabilized increment to the cumulative log
weight carried from the preceding step and compute the normalized ESS of the
result. We select the candidate with the largest predicted full cumulative
ESS. Thus, the optimization criterion is neither the incremental ESS alone
nor an ESS computed from clipped ancestor-selection weights.

\paragraph{Mapping to Proposition~\ref{prop2}.}
The latent-space contrast estimator is rescaled by
$\tau_\ell$ before entering the gate. Equivalently, the
implementation uses the effective gate slope
\begin{equation}
    \eta_k^{\mathrm{eff}}
    :=
    \tau_\ell\eta_k .
    \label{eq:t2i-effective-eta}
\end{equation}
Accordingly, the gate-dependent coefficients in
Proposition~\ref{prop2} are evaluated as
\begin{equation}
\begin{aligned}
    \zeta_k
    &=
    m_k\eta_k^{\mathrm{eff}}\theta_k,\\
    \chi_k
    &=
    m_k(\eta_k^{\mathrm{eff}})^2
    \theta_k(1-\theta_k).
\end{aligned}
\label{eq:t2i-effective-gate-coefficients}
\end{equation}

\paragraph{Time-varying target power.}
Because $m_k$ varies over the trajectory, the one-step
gate contribution includes the discrete power-schedule term
\begin{equation}
    (m_{k+1}-m_k)\log(1-\theta_k).
    \label{eq:t2i-power-correction}
\end{equation}
We use this discrete correction once in the implemented
Proposition~\ref{prop2} increment; it is the finite-step
counterpart of the $\dot m$ term in
Appendix~\ref{prop2proof}.

\subsection{Finite-Step Stabilization and Resampling}
\label{sec:t2i-weight-stabilization}

The Proposition-2 expansion contains several particle-wise contributions
that can be heavy-tailed at finite step size. With only eight particles, a
single extreme value can dominate the ESS objective. We therefore stabilize
three groups of terms while using exactly the same stabilized expansion for
coefficient selection and propagated weights.

Let \(v=(v_1,\ldots,v_N)\) be a particle-wise contribution and define
\begin{equation}
    \widetilde v
    :=
    \operatorname{median}_i(v_i),
    \qquad
    \delta_i
    :=
    |v_i-\widetilde v|.
\end{equation}
Let \(\delta_{(r+1)}\) be the \((r+1)\)-st largest centered deviation. For
an optional absolute bound \(B\), define
\begin{equation}
    T_v^{(r,B)}
    :=
    \min\{\delta_{(r+1)},B\},
\end{equation}
where \(B=\infty\) denotes no absolute bound. We use the median-centered
winsorization operator
\begin{equation}
    \mathcal W_{r,B}(v_i)
    :=
    \widetilde v
    +
    \operatorname{sign}(v_i-\widetilde v)
    \min\left\{
        |v_i-\widetilde v|,
        T_v^{(r,B)}
    \right\}.
    \label{eq:t2i-rank-winsorization}
\end{equation}
Ties at the threshold are retained. For \(N=32\), we use \(r=1\), so the
rank rule controls only the most extreme centered deviation.

We decompose the raw expansion as
\begin{equation}
    \widehat\ell_{k,i}^{\mathrm{P2}}
    =
    G_{k,i}
    +
    H_{k,i}^{0}
    +
    J_{k,i}^{\mathrm{raw}},
\end{equation}
where \(G_{k,i}\) contains the gate-time and target-curvature terms,
\(J_{k,i}^{\mathrm{raw}}\) is the Jacobian--vector-product contribution, and
\(H_{k,i}^{0}\) contains the forward, reverse, and proposal-curvature terms.
The gate group is stabilized as
\begin{equation}
    G_{k,i}^{\mathrm{stab}}
    =
    \mathcal W_{1,0.10}(G_{k,i}).
    \label{eq:t2i-gate-clipping}
\end{equation}
For the JVP contribution, we first apply the fixed shrinkage
\begin{equation}
    J_{k,i}^{\mathrm{shrunk}}
    =
    0.75J_{k,i}^{\mathrm{raw}},
\end{equation}
and then use
\begin{equation}
    J_{k,i}^{\mathrm{stab}}
    =
    \mathcal W_{1,0.25}
    \left(J_{k,i}^{\mathrm{shrunk}}\right).
    \label{eq:t2i-jvp-clipping}
\end{equation}
Finally, the combined kernel contribution is stabilized after inserting the
JVP term,
\begin{equation}
    H_{k,i}^{\mathrm{stab}}
    =
    \mathcal W_{1,0.10}
    \left(H_{k,i}^{0}+J_{k,i}^{\mathrm{stab}}\right),
    \label{eq:t2i-kernel-clipping}
\end{equation}
and the propagated increment is
\begin{equation}
    \widehat\ell_{k,i}^{\mathrm{stab}}
    =
    G_{k,i}^{\mathrm{stab}}
    +
    H_{k,i}^{\mathrm{stab}}.
    \label{eq:t2i-stabilized-weight}
\end{equation}
The same operations, in the same order, are applied to every candidate in
the \(\rho_{2,k}\) grid.

\paragraph{Resampling.}
Let \(L_{k,i}\) denote the full cumulative log weight after adding
\eqref{eq:t2i-stabilized-weight}. Resampling is triggered when the normalized
ESS computed directly from \(L_k\) falls below \(0.30\). We require at least
\(10\) denoising steps between successive events and permit resampling over
the entire trajectory. Hence, a \(100\)-step trajectory can contain at most
ten resampling events.

The resampling trigger always uses the unclipped full cumulative weights. To
avoid a single particle dominating ancestor selection, we draw ancestors
using the median-centered weights
\begin{equation}
    \widetilde L_{k,i}
    =
    \operatorname{median}_j(L_{k,j})
    +
    \operatorname{clip}
    \left(
        L_{k,i}-\operatorname{median}_j(L_{k,j}),
        -1.5,
        1.5
    \right).
    \label{eq:t2i-ancestor-clipping}
\end{equation}
The residual \(L_{k,i}-\widetilde L_{k,i}\) is transferred to every selected
child and carried into subsequent steps. Thus, clipping changes only the
ancestor proposal probabilities; it does not discard the corresponding
importance-weight correction.

These finite-step operations leave the analytic target in
\eqref{eq:t2i-powered-target} unchanged. The propagated weight is the
stabilized retained expansion in Proposition~\ref{prop2}, including its
scheduled-power correction. It is not the exact finite-step Gaussian
transition-density ratio.

\subsection{Prompts}
\label{sec:t2i-baselines-prompts}

We use the five scenes from \citet{koulischer2024dynamic}. Each scene is
paired with one semantically related and one unrelated negative prompt,
resulting in ten experimental settings.

\begin{table}[t]
    \centering
    \caption{Text-to-image prompts.}
    \label{tab:t2i-prompts}
    \small
    \resizebox{\linewidth}{!}{
    \begin{tabular}{c p{0.13\linewidth} p{0.45\linewidth}
        p{0.16\linewidth} p{0.18\linewidth}}
        \toprule
        No. & Scene & Positive prompt & Related negative & Unrelated negative \\
        \midrule
        1
        & Medieval feast
        & A grand medieval feast set in a great hall, filled with long tables
        covered in bountiful platters of food, goblets, and flickering light,
        with knights and nobles enjoying the lavish spread.
        & Chalices, candles
        & Kids playing football \\
        \midrule
        2
        & English breakfast
        & A classic British breakfast, featuring a diverse selection of cooked
        delights, perfect for a filling and flavorful start to the day.
        & Egg, sausage
        & The view of a skyline \\
        \midrule
        3
        & Dinner table
        & A beautifully set dinner table, with elegant arrangements, a variety
        of enticing dishes, and delicate decor that speaks to the
        sophistication of the meal.
        & Flowers, wine glasses
        & A person wearing sunglasses \\
        \midrule
        4
        & Art workshop
        & An inspiring art workshop filled with creativity, featuring a wide
        range of tools and materials used by participants working on their
        individual projects.
        & Paintbrush, canvases
        & A bicycle in the rain \\
        \midrule
        5
        & Antique store
        & A charming antique store with shelves and tables filled with unique
        and historical treasures, offering a glimpse into the past.
        & Lamps, old books
        & An electric toothbrush \\
        \bottomrule
    \end{tabular}
    }
\end{table}

\subsection{Evaluation Metrics}
\label{sec:t2i-metrics}

For each method and condition, we generate \(32\) images. We compute CLIP
similarity using the \texttt{openai/clip-vit-large-patch14} model
\citep{radford2021learning}. Let
\(\mathrm{CLIP}_{+}\) denote similarity to the positive prompt and
\(\mathrm{CLIP}_{-}\) similarity to the negative prompt used during
generation. We report both quantities and the specificity margin
\begin{equation}
    \mathrm{Margin}
    =
    \mathrm{CLIP}_{+}
    -
    \mathrm{CLIP}_{-}.
    \label{eq:t2i-clip-margin}
\end{equation}

To measure particle diversity after resampling, let \(f_j\) be the fraction
of final particles descended from initial particle \(j\). We report the
number of unique surviving roots, the largest family fraction
\begin{equation}
    F_{\max}
    =
    \max_j f_j,
\end{equation}
and the effective number of roots
\begin{equation}
    R_{\mathrm{eff}}
    =
    \frac{1}{\sum_j f_j^2}.
    \label{eq:t2i-effective-roots}
\end{equation}
A larger \(R_{\mathrm{eff}}\), a larger number of unique roots, and a smaller
\(F_{\max}\) indicate less genealogical collapse.

\subsection{Ablations}
\subsubsection{JVP Contribution}
\label{sec:t2i-full-displacement-ablation}

We evaluate the JVP contribution on Prompt~1 with its related negative prompt.  On the same selected trajectory, we reconstruct the
raw Proposition~2 increment before winsorization both with and without the JVP
term and compare each surrogate with the exact finite one-step Gaussian
log-weight.  All metrics are computed after centering across particles within
each non-degenerate step, since a common log-weight offset does not affect the
normalized particle weights.  The exact finite Gaussian quantity requires an
additional selected-endpoint denoiser evaluation and is used only as an
evaluation diagnostic; both propagation and proposal optimization continue to
use the stabilized Proposition~2 expansion.

Table~\ref{tab:t2i-full-displacement-ablation} shows that including the JVP
term increases the Pearson correlation from $0.689$ to $0.804$, reduces
centered MAE by $22.0\%$, and reduces centered relative RMSE by $13.0\%$.
It also reduces the mean absolute normalized-ESS error by $7.4\%$.  The
no-JVP result is obtained by removing the JVP term from the same raw expansion,
so the comparison holds the states, noise realizations, and all remaining
terms fixed.

\begin{table}[t]
    \centering
    \caption{Agreement of the raw Proposition~2 surrogate with the exact
    finite one-step Gaussian log-weight on seed~$10{,}000$.  Metrics are
    computed after particle-wise centering within each non-degenerate step.}
    \label{tab:t2i-full-displacement-ablation}
    \small
    \resizebox{\linewidth}{!}{
    \begin{tabular}{lcccc}
        \toprule
        \textbf{Proposition~2 surrogate}
        & \textbf{Pearson}$\uparrow$
        & \textbf{Centered MAE}$\downarrow$
        & \textbf{Centered rel.\ RMSE}$\downarrow$
        & \textbf{Normalized-ESS MAE}$\downarrow$ \\
        \midrule
        Without JVP
        & $0.689$ & $2.094$ & $0.783$ & $0.115$ \\
        With JVP
        & $\mathbf{0.804}$ & $\mathbf{1.633}$
        & $\mathbf{0.681}$ & $\mathbf{0.107}$ \\
        \bottomrule
    \end{tabular}
    }
\end{table}

\subsubsection{Variance Proposal Optimization}
\label{sec:t2i-rho-ablation}

We evaluate variance-proposal optimization on Prompt~1 with its related
negative prompt using five paired seeds.  Each run uses eight particles and $100$ diffusion
steps.  We fix $\rho_1=-2$ and compare a proposal selected by maximizing the
full cumulative ESS over $101$ values of $\rho_2\in[0,1.2]$ with the fixed
proposal $\rho_2=0$.  The two conditions use identical random seeds and share
all other hyperparameters, including the stabilized Proposition~2 weights,
the ancestor-weight cap, and the minimum separation of ten steps between
resampling events.

\begin{table}[t]
    \centering
    \caption{Effect of full-ESS optimization of $\rho_2$ on CLIP specificity
    and particle genealogy.  Values are mean $\pm$ standard deviation over
    five paired seeds when 8 images generated.}
    \label{tab:t2i-rho-genealogy-ablation}
    \small
    \resizebox{\linewidth}{!}{
    \begin{tabular}{lccccc}
        \toprule
        \textbf{Proposal}
        & $\boldsymbol{\mathrm{Margin}}\uparrow$
        & $\boldsymbol{R_{\mathrm{eff}}}\uparrow$
        & \textbf{Unique roots}$\uparrow$
        & $\boldsymbol{F_{\max}}\downarrow$
        & \textbf{Resamples}$\downarrow$ \\
        \midrule
        Fixed $\rho_2=0$
        & $0.1294\pm0.0130$
        & $3.49\pm1.41$
        & $4.20\pm1.64$
        & $0.425\pm0.143$
        & $1.80\pm0.84$ \\
        Full-ESS optimized $\rho_2$
        & $\mathbf{0.1363\pm0.0080}$
        & $\mathbf{5.07\pm1.74}$
        & $\mathbf{5.80\pm1.30}$
        & $\mathbf{0.300\pm0.112}$
        & $\mathbf{0.80\pm0.45}$ \\
        \bottomrule
    \end{tabular}
    }
\end{table}

As shown in Table~\ref{tab:t2i-rho-genealogy-ablation}, optimizing $\rho_2$
increases the effective and unique numbers of roots by $45.0\%$ and $38.1\%$,
respectively.  It reduces the largest family fraction by $29.4\%$ and the
number of resampling events by $55.6\%$.  The mean full cumulative ESS also
increases from $0.579\pm0.079$ to $0.685\pm0.024$, with an improvement in all
five paired seeds.  The mean CLIP margin increases by $0.0069$, from $0.1294$
to $0.1363$.  Its minimum, mean, median, and maximum are
$0.1256/0.1363/0.1368/0.1477$, compared with
$0.1135/0.1294/0.1307/0.1456$ for $\rho_2=0$.

The margin improves in three of the five paired seeds, whereas the unique and
effective root counts improve in four.  These results therefore support a
positive average effect on both specificity and genealogy, together with a
consistent improvement in full ESS, rather than uniform per-seed dominance.

\subsubsection{Compute}
\label{sec:t2i-diagnostics}

All experiments use NVIDIA A100 40GB GPUs. At each active step, two
forward-mode JVPs are computed, one for each conditional branch. Related and
unrelated conditions are run concurrently on separate four-GPU groups, while
their particle weights, resampling operations, and genealogies remain
independent.

\subsection{Additional Visualizations}
\label{sec:t2i-additional-visualizations}

\begin{figure}[H]
    \centering
    \includegraphics[width=0.7\linewidth]
    {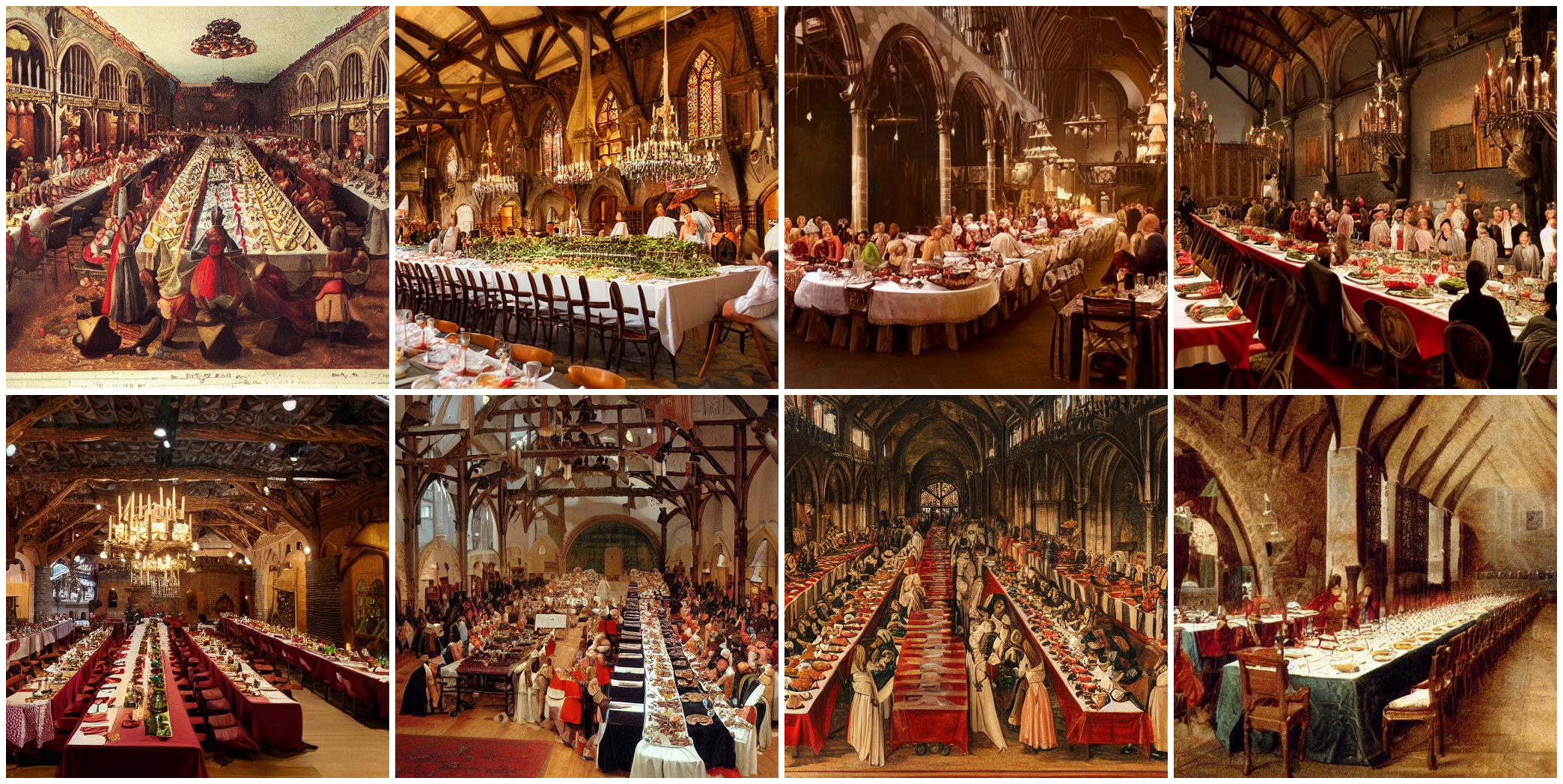}
    \caption{Qualitative samples from our method for the medieval-feast prompt
    with the related negative prompt.}
    \label{fig:t2i-medieval-related}
\end{figure}

\begin{figure}[H]
    \centering
    \includegraphics[width=0.7\linewidth]
    {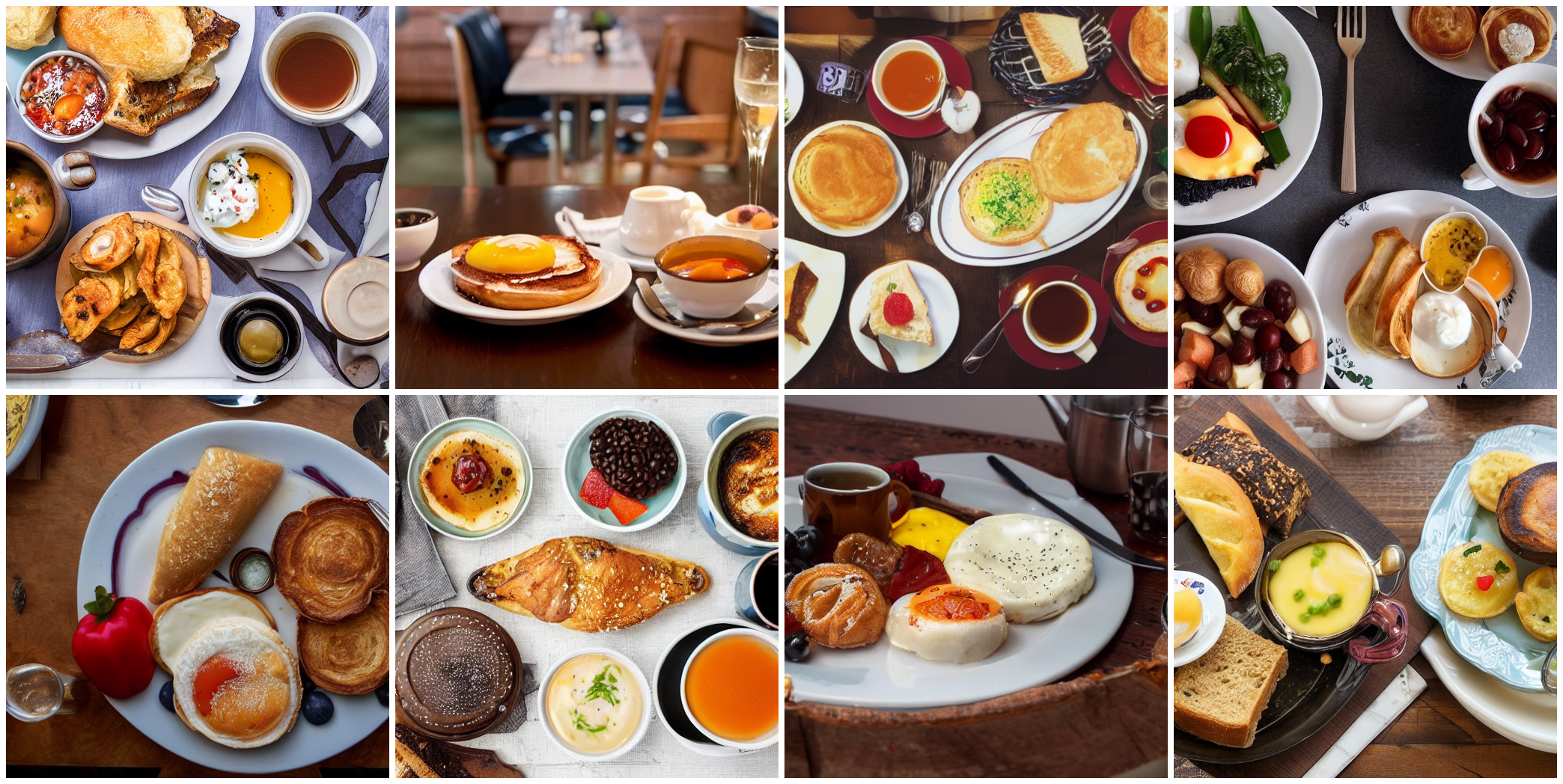}
    \caption{Qualitative samples from our method for the English-breakfast
    prompt with the related negative prompt.}
    \label{fig:t2i-breakfast-related}
\end{figure}

\begin{figure}[H]
    \centering
    \includegraphics[width=0.7\linewidth]
    {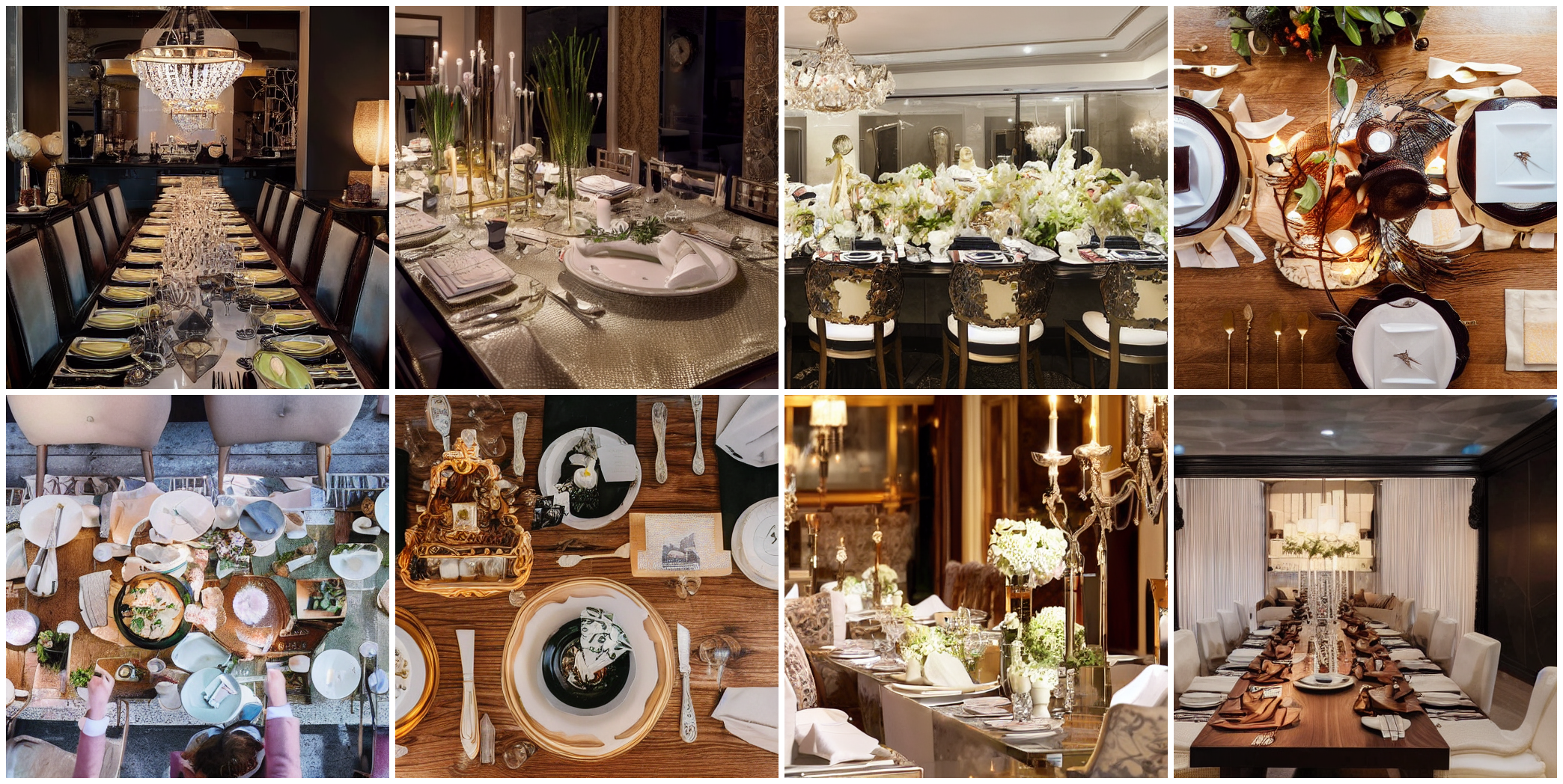}
    \caption{Qualitative samples from our method for the dinner-table prompt
    with the related negative prompt.}
    \label{fig:t2i-dinner-related}
\end{figure}

\begin{figure}[H]
    \centering
    \includegraphics[width=0.7\linewidth]
    {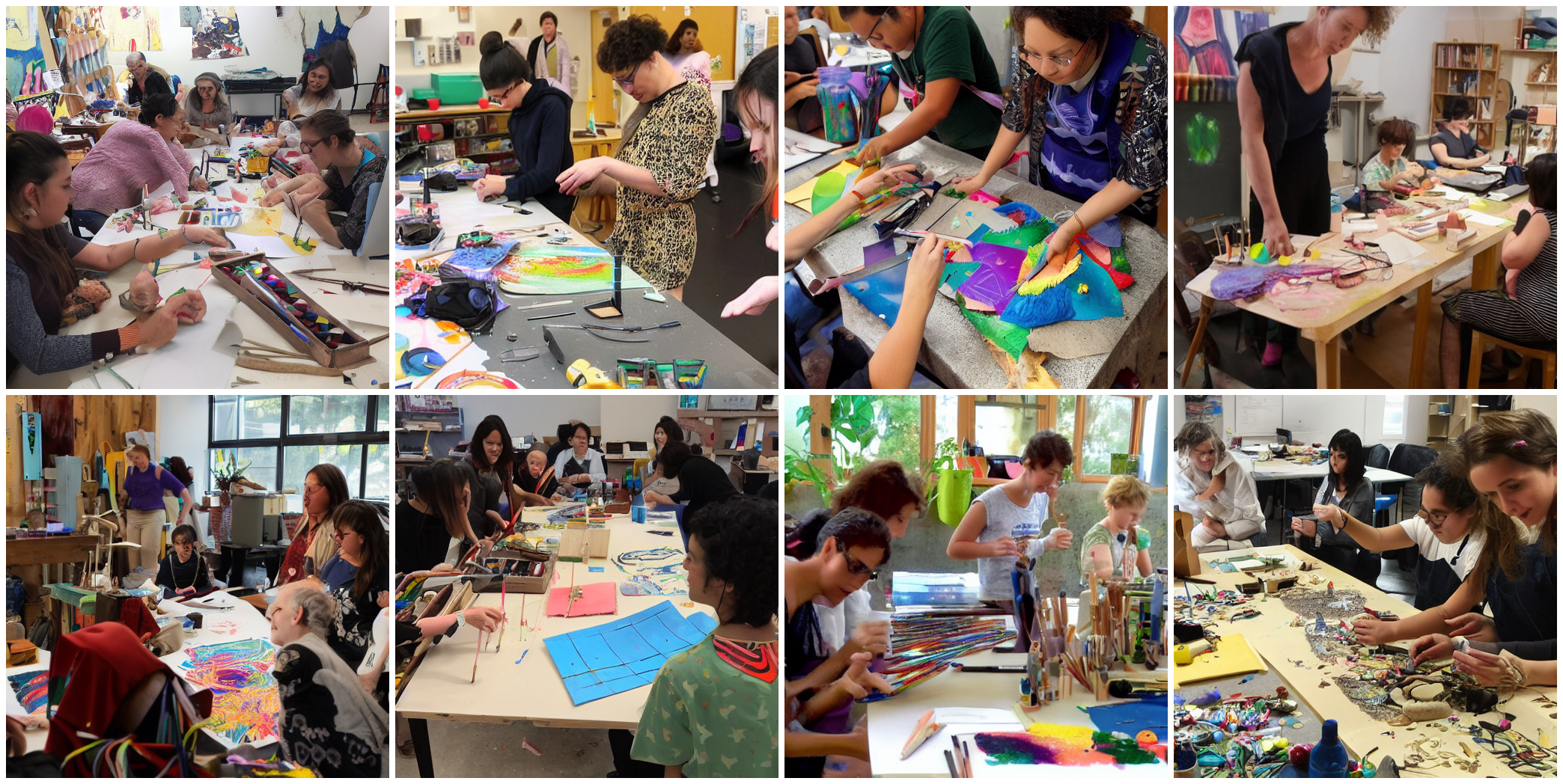}
    \caption{Qualitative samples from our method for the art-workshop prompt
    with the related negative prompt.}
    \label{fig:t2i-art-related}
\end{figure}

\begin{figure}[H]
    \centering
    \includegraphics[width=0.7\linewidth]
    {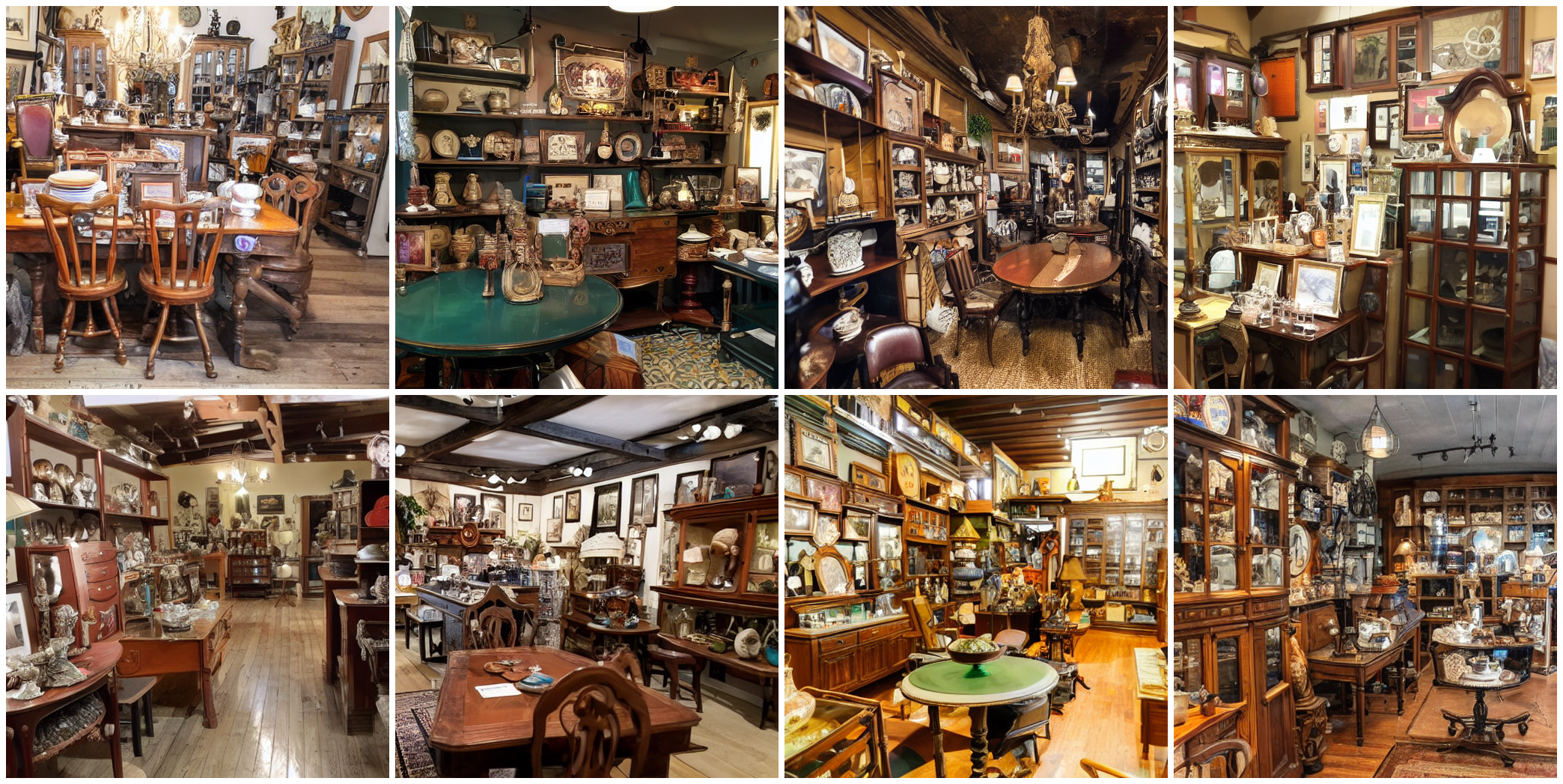}
    \caption{Qualitative samples from our method for the antique-store prompt
    with the related negative prompt.}
    \label{fig:t2i-antique-related}
\end{figure}

\section{Specificity-Aware pMHC Binder Design}
\label{app:binder-details}

\subsection{Task Setting}

We study specificity-aware protein-binder design for two closely related
peptide--MHC class I complexes. The desired target is
HLA-A*02:01--\texttt{NLVPMVATV}, whereas the undesired target is
HLA-A*02:01--\texttt{NLVFMVATV}. The two peptides differ only by a
Pro-to-Phe substitution at peptide position~4. Their normalized T-cell
activation values in the BATCAVE mutational-scan dataset are $1.000$ and
$0.031$, respectively. These measurements are used only to select a
biologically relevant specificity pair and are not treated as affinity
labels for the generated binders.

Both targets contain the same HLA-A*02:01 heavy chain and
$\beta_2$-microglobulin. The generated object is an $80$-residue protein
binder, denoted by chain $X$. The conditioning target contains the HLA heavy
chain, $\beta_2$-microglobulin, and the nine-residue peptide; no T-cell
receptor is included. Peptide residues 3--6, which include the mutated
position, define the intended binding region, while
$\beta_2$-microglobulin is marked as non-binding.

\subsection{Construction of the Paired Targets}

The desired and undesired pMHC structures are initially predicted with
Boltz-2. The undesired complex is aligned to the desired complex using a
Kabsch fit on the HLA platform, defined by the C$\alpha$ atoms of HLA
residues 1--180. We then construct an exact shared molecular context by
copying the desired HLA and $\beta_2$-microglobulin coordinates into both
conditions, while retaining the corresponding desired or mutated peptide.

Consequently, the two inputs share the same presentation scaffold and differ
only in the peptide sequence and peptide conformation. During generation, a
Kabsch fit on the same HLA platform removes the small numerical frame drift
between the two conditional denoiser evaluations. Across the reported
Proposition~2 generation, the median fitted rotation over the final
low-noise quarter of sampling is $0.029^\circ$, and the maximum across
backbone particles is $0.179^\circ$. Thus, the contrastive field is not
dominated by an arbitrary rigid-body mismatch.

\subsection{Binder-Only Contrastive Guidance}

At each reverse-diffusion step, the same noisy molecular state is evaluated
under the desired condition $A$ and the undesired condition $B$. Let $v_A$
and $v_B$ denote the corresponding denoising velocities. After rotating
$v_B$ into the frame of condition $A$, the target binder update is
\begin{equation}
v_{\mathrm{guided},X}
=
v_{A,X}
+
s_t
\left(
v_{A,X}
-
v_{B,X}R_{B\rightarrow A}
\right),
\label{eq:binder_guidance}
\end{equation}
where $R_{B\rightarrow A}$ is the HLA-derived frame correction and $s_t$ is
the likelihood-ratio-dependent guidance coefficient. Only the generated
binder is modified. The HLA, $\beta_2$-microglobulin, and peptide atoms retain
the desired-condition update.

The two conditional evaluations use the same noisy coordinates, feature
seed, noise realization, sampling step, and rigid-body augmentation.
Therefore, the difference in \eqref{eq:binder_guidance} isolates the
conditional contrast between the desired and undesired targets. No
unconditional model is evaluated; this is two-condition contrastive guidance
rather than conventional classifier-free guidance.

Let $d_t$ denote the one-unit binder-only contrastive field and let
$g_t=s_t d_t$ denote the target guided field. The proposal family is
\begin{equation}
u_t(\rho_2)
=
\rho_1 d_t + \rho_2 g_t,
\qquad
q_t(\rho_2)
=
u_t(\rho_2)-g_t.
\label{eq:binder_two_field_proposal}
\end{equation}
We fix $\rho_1=-1$ and choose $\rho_2$ adaptively from $25$ uniformly spaced
values in $[0,1.2]$. At each active step, the selected value maximizes the
full cumulative ESS after adding the candidate Proposition~2 increment:
\begin{equation}
\rho_{2,t}
=
\arg\max_{\rho_2\in\mathcal{R}}
\operatorname{ESS}
\left(
\log W_{t-1}
+
\widehat{\Delta\log w}^{\,\mathrm{P2}}_t(\rho_2)
\right).
\label{eq:binder_rho2_selection}
\end{equation}
The same retained Proposition~2 expansion is used for both selecting
$\rho_2$ and propagating the particle weights. It includes the forward- and
reverse-$\alpha$ terms, the target- and proposal-curvature terms, and the
fixed-displacement finite-difference JVP term. It is not replaced by an exact
finite Gaussian transition ratio.

\subsection{Compared Methods}

We compare Ours with the following methods:
\begin{enumerate}
    \item \textbf{BoltzGen (target A only):} the original BoltzGen sampler
    conditioned only on the desired pMHC, with the undesired target absent
    during generation and no negative guidance;
    \item \textbf{Fixed guidance:} two-condition contrastive guidance with
    coefficient $\lambda=0.25$;
    \item \textbf{DNG:} Dynamic Negative Guidance with base gain $18$; and
    \item \textbf{Ours:} likelihood-ratio gating with the adaptive two-field
    proposal and retained Proposition~2 weights.
\end{enumerate}

For Ours, let $\widehat{\ell}_t$ denote the accumulated local
reverse-kernel log-likelihood contrast. We use
\begin{equation}
\theta_t
=
\sigma\left(
\log c+\eta\widehat{\ell}_t
\right),
\qquad
s_t
=
m\eta\tau\theta_t .
\label{eq:binder_pa_gate}
\end{equation}
The reported configuration is
\begin{equation}
c=6,
\qquad
\eta=0.005,
\qquad
\tau=0.75,
\qquad
m=2022.22,
\qquad
\rho_1=-1.
\end{equation}
Since $\widehat{\ell}_0=0$, the initial gate and guidance coefficient are
\begin{equation}
\theta_0
=
\frac{c}{1+c}
=
\frac{6}{7},
\qquad
s_0
=
m\eta\tau\frac{c}{1+c}
=
6.5.
\end{equation}
We clip $\widehat{\ell}_t$ to $[-100,100]$. The proposal variance scale is
$0.10$. The finite-difference JVP uses step size $0.01$ and shrinkage
coefficient $0.75$.

For numerical robustness, the gate, JVP, and remaining kernel contributions
to the Proposition~2 log-weight are median-centered and winsorized separately,
with maximum centered magnitudes $0.10$, $0.25$, and $0.10$, respectively.
No additional clip is applied to their summed increment. Resampling is
permitted when the full cumulative ESS fraction falls below $0.30$, subject
to a ten-step cooldown, at most ten resampling events, and no resampling in
the final ten steps. If resampling occurs, ancestor-selection log-weights are
median-centered and capped at $1.5$, and the residual weight correction is
carried forward. These operations robustify the Proposition~2 estimator
without changing the underlying target expansion. We do not apply a separate
cap to the contrastive displacement.

\subsection{Generation and Structural Verification}

We use eight matched seed blocks. Each method generates eight binder
backbones per block with $200$ reverse-diffusion steps and retains one
inverse-folded sequence per backbone. The reported evaluation therefore
contains
\begin{equation}
8\ \text{seed blocks}
\times
8\ \text{backbones per block}
=
64\ \text{binders per method}.
\end{equation}
For Ours, the eight backbones within each block form an interacting particle
batch. The other methods use the same seed blocks and output budget.

Although the sampler permits SMC resampling, no resampling event occurs in
any of the eight particle batches for Ours. All eight initial roots survive
within every block, and the ancestor-weight clip is never activated.

Each binder sequence is independently modeled with both the desired and
undesired pMHC using Boltz-2. We generate three predictions per target
condition using $200$ sampling steps and three recycling steps:
\begin{equation}
64\ \text{binders}
\times
2\ \text{conditions}
\times
3\ \text{predictions}
=
384\ \text{verifier structures per method}.
\end{equation}

All four methods use the same eight seed blocks, binder count,
inverse-folding budget, target pair, verifier settings, and specificity
metric. Thus, differences in Table~\ref{tab:pmhc_binder_specificity} are not
caused by unequal sequence or verification budgets.

\subsection{Specificity Metric}

Boltz-2 is used as a structural verifier rather than as a calibrated
protein--protein affinity predictor. Let $M^{(i,c,k)}$ denote the Boltz-2
\texttt{pair\_chains\_iptm} matrix for binder $i$, condition $c$, and verifier
prediction $k$. For each prediction, we symmetrize the binder--peptide
interface score:
\begin{equation}
a_{i,c,k}
=
\frac{1}{2}
\left(
M^{(i,c,k)}_{X,P}
+
M^{(i,c,k)}_{P,X}
\right).
\end{equation}
The condition-level score is the mean over the $K$ verifier predictions:
\begin{equation}
A_{i,c}
=
\frac{1}{K}
\sum_{k=1}^{K}
a_{i,c,k}.
\end{equation}
The specificity margin of binder $i$ is
\begin{equation}
\Delta_i
=
A_{i,\mathrm{desired}}
-
A_{i,\mathrm{undesired}}.
\label{eq:binder_specificity_margin}
\end{equation}
A positive margin indicates greater predicted binder--peptide interface
confidence for the desired pMHC. We report the mean, median, and $75$th
percentile (Q75) of the margin, together with the fraction of binders for
which $\Delta_i>0$.
No whole-complex ipTM, contact term, HLA-interface term, or manually weighted
composite score enters this metric.

\subsection{Results}

\begin{table}[t]
\centering
\small
\caption{
Specificity on the HLA-A*02:01--\texttt{NLVPMVATV} versus
HLA-A*02:01--\texttt{NLVFMVATV} binder-design task.
Here, $N$ is the number of evaluated binders and $K$ is the number of
Boltz-2 predictions per binder and target condition. All methods use $K=3$;
higher values are better.
}
\label{tab:pmhc_binder_specificity}
\begin{tabular}{lccccc}
\toprule
\textbf{Method}
& $\boldsymbol{N}$
& \textbf{Mean}
& \textbf{Median}
& \textbf{Q75}
& $\boldsymbol{\Delta>0}$ \\
\midrule
BoltzGen (target A only)
& 64
& 0.1786
& 0.1587
& 0.2862
& 87.5\% \\
Fixed guidance
& 64
& 0.1654
& 0.1453
& 0.2833
& \textbf{93.8\%} \\
DNG
& 64
& 0.1483
& 0.1581
& 0.2623
& 89.1\% \\
\textbf{Ours (Proposition~2)}
& 64
& \textbf{0.1856}
& \textbf{0.1822}
& \textbf{0.3337}
& 84.4\% \\
\bottomrule
\end{tabular}
\end{table}

Ours achieves the largest mean, median, and Q75 specificity margins. Relative
to target-A-only BoltzGen, the strongest baseline on these statistics, Ours
improves the mean by $3.9\%$, the median by $14.8\%$, and Q75 by $16.6\%$.
Fixed guidance produces the largest fraction of positive margins. Thus, Ours
improves the center and upper quartile of the specificity distribution,
although it does not maximize the fraction of positive designs.

\paragraph{Representative binder.}

The binder shown in Figure~\ref{fig:pmhc-design} has the sequence
\begin{center}
\ttfamily\small
VTLTVEFTYD PSKVGAVIHS PLVLTEDGYK VKVTLTISED\\
GETITVKVRL YDSDNRLISD YKAEVTVEGD TITVTVTFDP
\end{center}
where spaces are inserted only for readability. Across the three verifier
predictions per condition, its mean desired and undesired binder--peptide
iPTM values are $0.8673$ and $0.5174$, respectively, giving a specificity
margin of $0.3499$. The binder contacts the desired peptide in two of the
three predictions and the undesired peptide in none. All six verifier
structures are complete, and no binder--target heavy-atom clashes below
$1.5$~\AA{} are detected.

\paragraph{Sensitivity.}

The reported configuration was selected in preliminary hyperparameter
screens. Table~\ref{tab:pmhc_binder_specificity} reports the complete matched
$64$-binder evaluation rather than a smaller tuning subset. Its mean, median,
and Q75 margins are $0.1856$, $0.1822$, and $0.3337$, respectively.

\paragraph{Interpretation.}

These results show that likelihood-ratio gating with an adaptive two-field
proposal and Proposition~2 weight propagation can shift the center of the
generated-binder distribution toward stronger predicted recognition of the
desired peptide relative to a single-residue off-target. The reported ipTM
margin is a structural interface-confidence surrogate and should not be
interpreted as a calibrated binding affinity. Experimental binding and
off-target measurements remain necessary to validate the generated
specificity.

\newpage

\end{document}